\title{\bfseries Scaling Up Bayesian DAG Sampling}
\renewcommand{\@maketitle}{
  \newpage
  \null
  \vskip 0.5em
  \begin{center}
    {\LARGE \@title \par}
    \vskip 2.5em 
    \begin{tabular}{ccc}
      \textbf{Daniele Nikzad} & \textbf{Alexander Zhilkin} & \textbf{Juha Harviainen} \\
      \textit{University of Helsinki} & \textit{University of Helsinki} & \textit{University of Helsinki} \\
      {\footnotesize\texttt{daniele.nikzad@helsinki.fi}} &
      {\footnotesize\texttt{alexander.zhilkin@helsinki.fi}} &
      {\footnotesize\texttt{juha.harviainen@helsinki.fi}} \\
      \\[-0.2em]
      \textbf{Jack Kuipers} & \textbf{Giusi Moffa} & \textbf{Mikko Koivisto} \\
      \textit{ETH Zurich} & \textit{University of Basel} & \textit{University of Helsinki} \\
      {\footnotesize\texttt{jack.kuipers@bsse.ethz.ch}} &
      {\footnotesize\texttt{giusi.moffa@unibas.ch}} &
      {\footnotesize\texttt{mikko.koivisto@helsinki.fi}}
    \end{tabular}
    \vskip 2.5em
  \end{center}
}
\date{}
\renewenvironment{abstract}{
  \begin{center}
    \Large\bfseries Abstract 
  \end{center}
  \vspace{-0.5em}
  \small
}{\par\vspace{1em}}
\long\def\comment#1{}
\newtheorem{theorem}{Theorem}
\newtheorem{proposition}[theorem]{Proposition}
\theoremstyle{definition}
\newtheorem{definition}{Definition}
\theoremstyle{remark}
\newtheorem{remark}{Remark}
\newcommand{\be}{\begin{eqnarray}}
\newcommand{\ee}{\end{eqnarray}}
\newcommand{\bes}{\begin{eqnarray*}}
\newcommand{\ees}{\end{eqnarray*}}
\newcommand{\Zh}[1]{Z_{\textrm{head}, #1}}
\newcommand{\Zt}[1]{Z_{\textrm{tail}, #1}}
\newcommand{\asia}{{\sc Asia}}
\newcommand{\sachs}{{\sc Sachs}}
\newcommand{\child}{{\sc Child}}
\newcommand{\alarm}{{\sc Alarm}}
\newcommand{\hailfinder}{{\sc Hailfinder}}
\newcommand{\pathfinder}{{\sc Pathfinder}}
\newcommand{\andes}{{\sc Andes}}
\newcommand{\pigs}{{\sc Pigs}}
\newcommand{\zoo}{{\sc Zoo}}
\newcommand{\gibby}{\emph{Gibby}}
\newcommand{\ourbasic}{\emph{Gibby}}
\newcommand{\GC}{\emph{GC}}
\newcommand{\bidag}{\emph{BiDAG}}
\newcommand{\daggflow}{\emph{DAG-GFlowNet}}
\newcommand{\TimeB}{$T_{\textrm{b}}$}
\newcommand{\TimeC}{$T_{\textrm{c}}$}
\newcommand{\cpp}{C++}
\begin{document}

\twocolumn[
\maketitle

\begin{abstract}
Bayesian inference of Bayesian network structures is often performed by sampling directed acyclic graphs along an appropriately constructed Markov chain. We present two techniques to improve sampling. First, we give an efficient implementation of basic moves, which add, delete, or reverse a single arc. Second, we expedite summing over parent sets, an expensive task required for more sophisticated moves: we devise a preprocessing method to prune possible parent sets so as to approximately preserve the sums. Our empirical study shows that our techniques can yield substantial efficiency gains compared to previous methods.
\end{abstract}
\vspace{1em}
]

\section{INTRODUCTION}

Directed acyclic graphs (DAGs) provide a concise way to express conditional independence relations among random variables, decomposing a multivariate distribution into a product of univariate conditional distributions \citep{Pearl1988}. Such models, known as Bayesian networks or belief networks (BNs), can also be naturally attached with causal semantics, with an arc from one node to another in the graph corresponding to a (possible) direct cause--effect relation between the associated variables \citep{Pearl2009}.  

Numerous methods have been proposed to learn DAGs from data, usually modelling the data points as independent draws from an unknown BN, but otherwise following various learning paradigms (\citealp{Kitson2023}; \citealp[Ch.~18]{Koller2009}; \citealp{Rios2021}). The natural approach to account for the uncertainty is the Bayesian one, which formalizes learning as the transformation of a prior distribution over DAGs into a posterior \citep{Buntine1991,Cooper1992,Heckerman1995}. Within this paradigm, different methods are mainly ranked by their \emph{computational efficiency}, that is, how fast they can (reliably) produce a good approximation of the posterior, exact algorithms being feasible for networks on only about twenty or fewer variables   \citep{Harviainen2024,Koivisto2004,Pensar2020,Talvitie2019uai,Tian2009}.

Approximating a DAG posterior also serves as a concrete challenge for general-purpose methodology for probabilistic inference. The most prominent approaches taken so far are Markov chain Monte Carlo \citep[MCMC; see, e.g.,][]{Friedman2003,Giudici2003,Grzegorczyk2008,Kuipers2022,Madigan1995,Su2016}, 
variational inference \citep{Annadani2023,Cundy2021}, and generative flows \citep{Deleu2022,Deleu2023}; while MCMC 
is based on sampling that converges to the posterior, the latter two aim at learning a function that approximates the posterior. Ideas that prove useful for dealing with the hard-to-handle space of DAGs, therefore, have the potential to be valuable also more generally.

This paper advances \emph{structure MCMC}, a class of methods we review in Section~\ref{se:prel}. The new ideas we present not only improve DAG sampling, but may also find applications in other contexts.

We begin with a basic Markov chain that moves by adding, removing, or reversing a single arc.  \citet{Giudici2003} gave an efficient way to simulate such a chain, which we refer to as the \GC{} algorithm. It maintains the ancestor relation of the DAG to enable quick acyclicity checks. In Section~\ref{se:gibby}, we significantly expedite \GC{}, by \emph{one to three orders of magnitude} in our experiments. Our key idea is to use an appropriate upper bound for the (typically small) acceptance probability of the proposed move, so that the time staying in the current state can be simulated simply by a geometric random variable.

In Section~\ref{se:pruning}, we consider improved Markov chains that, besides basic moves, also resample the entire parent set for certain nodes \citep{Goudie2016,Grzegorczyk2008,Su2016}. While such moves can make bigger changes to the DAG, they also require computing sums of local scores over large numbers of possible parent sets. The same challenge is encountered with MCMC methods that primarily sample node orderings instead of DAGs \citep{Friedman2003,Kuipers2017,Kuipers2022,Niinimaki2013,Viinikka2020uai,Viinikka2020}. Previous works have addressed this challenge by limiting the number of parents, by only allowing parents from a few preselected candidates, and by accumulating the sums in decreasing order of scores until a reasonable approximation is achieved  \citep{Friedman2003,Kuipers2022,Niinimaki2013,Viinikka2020}. Here, we add a new tool: a pruning method that discards parent sets whose local scores can be proven not to contribute significantly to any relevant score sum. Our experiments show significant savings, by \emph{one to three orders of magnitude}.

In Section~\ref{se:comparison}, we integrate the new techniques into an enhanced implementation of structure MCMC we dub \gibby{} (available at \url{https://github.com/sums-of-products/gibby}) and study its performance against other available Bayesian DAG samplers. Our interest is in \emph{computational complexity}: how reliably and fast the samplers can produce a good approximation of the well-defined DAG posterior. We stress that, in this work, our primary interest is not in \emph{modeling and learning accuracy}: neither in finding a single DAG that ``best fits the data,'' nor in learning arcs or other features of the data-generating graph (even when one exists). 
Throughout the paper, we illustrate our findings using selected benchmark BNs \citep{ScutariBNrepository}. 

\section{PRELIMINARIES}\label{se:prel}

We begin by recalling the basics of Bayesian learning of BNs, and then review the previous development of structure MCMC, on which we build in later sections.  

\subsection{Bayesian Learning of DAGs}

Consider a vector $x = (x_1, \ldots, x_n)$ of $n$ random variables. 
A BN represents a joint probability distribution of $x$ as a pair $(G, f)$, where $G = (N, A)$ is a directed acyclic graph (DAG) on the node set $N = \{1,\ldots,n\}$ and arc set $A$, and $f$ factorizes into a product of univariate conditional distributions along the graph: $f(x) = \prod_{i \in N} f(x_i | x_{A_i})$.
Here $A_i = \{ j : ji \in A \}$ is the parent set of $i$ in $G$, and for any set $S$ we write $x_S$ for the tuple $(x_j : j \in S)$. In what follows, the node set~$N$ is typically fixed, whereas the arc set $A$ varies. We may thus identify a DAG $G$ with its arc set $A$.

To learn a BN $(G, f)$ from a data set $X$ of $m$ points 
$x^{(1)}, \ldots, x^{(m)}$, we model them as independent draws from $f$. Taking a Bayesian approach, we build a joint model by multiplying the likelihood $P(X | G, f) = \prod_{t=1}^m f(x^{(t)})$ by a prior $P(G, f) = P(G)P(f|G)$. The posterior $P(G, f | X)$ is the outcome of learning. 

In the experiments, we adopt the following forms, unless stated otherwise. 
For the structure prior, we use $P(G) \propto c^{-|A|}$, with either $c = 1$ or $c = n$; the former is uniform over all DAGs and thus concentrates almost all mass on dense graphs; the latter, we call \emph{sparse}, renders large parent sets  unlikely.\footnote{For a node with $k$ parents, the term in the sparse prior is $n^{-k} \approx \binom{n-1}{k}^{-1}/k!$. With the term $\binom{n-1}{k}^{-1}$ alone, the prior is roughly uniform over parent set sizes \citep{Eggeling2019,Friedman2003}. The mean of the distribution $p_k \propto 1/k!$ on $k \in \mathbb{N}$ is $1$. More generally, by replacing $n$ by $n/\mu$ in the sparse prior, the expected indegree per node becomes roughly $\mu$.} 
For the prior on $f$, we assume each $x_i$ is categorical, parameterize $f$ by full conditional probability tables, and assign the parameters independent Dirichlet priors. Then the marginal likelihood has a closed-form expression, the Bayesian Dirichlet (BD) score \citep{Buntine1991,Heckerman1995}, or the \emph{BDeu score} when parameterized by a single constant called the \emph{equivalent sample size}. 
These choices enable efficient computation of $P(G | X)$ for any given $G$ up to a normalizing constant, as $\pi(G) = \prod_i \pi_i(A_i)$. We call $\pi_i(A_i)$ \emph{local scores}. 

We stress that our algorithms also apply to other models, including ones for continuous variables. For computational efficiency, it is desirable that the marginal likelihood term can be efficiently computed for each node given its parents. This holds, e.g., for the commonly used BGe score \citep{Geiger2002,Kuipers2014} for linear Gaussian models. 
However, unlike BD that is flexible enough to be meaningfully applied to continuous data via discretization, the model underlying BGe is more restrictive and lacks a comparable non-parametric character.

\subsection{Sampling along Markov Chains}

The Metropolis--Hasting (MH) algorithm \citep{Hastings1970} gives us a way to construct a Markov chain on the state space of DAGs such that its stationary distribution is $P(G | X)$. First, an initial state $G_0$ is generated arbitrarily. Then, iteratively, a new state $G'$ is in step $t+1$ drawn from a proposal distribution $Q(G' | G_t)$ and accepted as the next state $G_{t+1}$ with probability
$\min\!\big\{1, R(G', G_t)\big\}$, where 
\be\label{eq:MH}
 	R(G', G_t) = \frac{\pi(G') Q(G_t| G')}{\pi(G_t) Q(G' | G_t)}
\ee
is the untruncated acceptance ratio. If the proposal is rejected, $G_{t+1}$ is set to $G_t$. 

While very mild conditions on the proposal distribution $Q$ suffice for guaranteeing convergence to the posterior, a good choice of $Q$ can significantly improve the mixing of the chain and, thereby, expedite convergence. Typically $Q(G'| G_t)$ is chosen to have a restricted support: the chain can move only to some \emph{neighbors} of $G_t$, as specified by the available move types. We next review move types proposed in the literature, in increasing order of complexity. 

The \emph{basic moves} consist of adding, deleting, or reversing a single arc. A naive implementation would explicitly construct the neighborhood of $G_t$ and then draw one neighbor uniformly at random. This is computationally demanding, as typically one could add $\Omega(n^2)$ different arcs. Moreover, to check whether a candidate arc can actually be added, one has to check for acyclicity of the resulting graph: effectively, one has to check whether the head node of the arc is an ancestor of the tail node in the original graph, which can take time $\Omega(\ell)$ in a graph with $\ell$ arcs. 

\citet{Giudici2003} address the computational challenge by maintaining the ancestor relation in an $n \times n$ binary matrix. Node pairs are tentatively proposed for addition or reversal---acyclicity is checked afterwards. Testing whether an arc $ij$ can be added or reversed takes, respectively, $O(1)$ or $O(\ell_j)$ time, where $\ell_j$ is the number of parents of $j$. Only after accepting a move, the matrix is updated, taking $O(n \ell)$ time.\footnote{\citet{Giudici2003} claim $O(n)$ update time after addition and $O(n^2)$ after removal; however, the correct bounds for their algorithms appear to be $O(n^2)$ and $O(n \ell)$.} 

\citet{Grzegorczyk2008} presented a more involved \emph{edge reversal} (REV) move. Not only an arc $ij$, selected uniformly at random, is reversed, but also the parent sets of both end nodes $i$ and $j$ are resampled based on the local scores and ensuring acyclicity. The ratio \eqref{eq:MH} for moving from $G = (N, A)$ to $G' = (N, A')$ becomes
\bes
	R(G', G) = \frac{|A|}{|A'|}\, \frac{\Zt{ji}'}{\Zt{ij}}\, \frac{\Zh{ji}'}{\Zh{ij}}\,,
\ees
where each of the four \emph{Zustandssumme}-terms is the normalizing constant for drawing the respective parent set.  
More precisely, letting $U_i$ denote the set of \emph{non-descendants} of node $i$ in $G$, we have 
\bes
	\quad
	\Zt{ij} = \sum_{S \subseteq U_i} \pi_i(S)\,, \quad
	\Zh{ij} = \sum_{i \in S \subseteq U_j} \pi_j(S)\,,
\ees
and similar sum formulas for $\Zh{ji}'$ and $\Zt{ji}'$. 

\citet{Su2016} took the idea of resampling parent sets further. Their \emph{Markov blanket resampling} (MBR) move first draws a random target node $i$ and a random ordering of its children. 
Next the arcs to $i$ are removed, as well as the arcs to each child $j$ of $i$, with the exception of the arcs starting from $i$. 
Then a new parent set of $i$ disjoint from the original $A_i$ is resampled subject to the acyclicity requirement. 
Last, new parents are sampled for each child one by one, in the fixed ordering, ensuring that $i$ is one parent and the resulting graph is acyclic. 

For a more precise description, denote by
$G_{(j)}$ the graph obtained from $G$ by removing all arcs to child~$j$ of $i$ and to all nodes succeeding $j$ in the ordering. Denote by $U_{(j)}$ the set of non-descendants of $j$ in $G_{(j)}$.  
Then the ratio \eqref{eq:MH} can be written as  
\bes
	R(G', G) = \frac{Z_i'}{Z_i}\, \prod_{j \in C_i} \frac{Z_{(j)}'}{Z_{(j)}}\,,
\ees
where $C_i$ is the set of children of $i$, and 
\bes
	Z_i = \sum_{S \subseteq U_i \setminus A_i} \pi_i(S)\,, \quad
	Z_{(j)} = \sum_{i \in S \subseteq U_{(j)}} \pi_j(S)\,,
\ees
and similar sum formulas for $Z_i'$ and $Z_{(j)}'$. 

While REV and MBR moves can take longer leaps in the space of DAGs (with reasonable probability), compared to basic moves, they are also computationally more demanding. Even if the number of parents per node is bounded by some constant $d$, the $Z$-terms generally involve $\Omega(n^d)$ parent sets.  

This is the bottleneck also in yet another similar move proposed by \citet{Goudie2016}. Their block update move resamples the parent sets for a few (say, three) randomly selected nodes. It differs from REV and MBR in that Gibbs sampling is used. 

\section{FAST BASIC MOVES}
\label{se:gibby}

This section presents a novel algorithm that, like the one by \citet{Giudici2003}, performs three basic moves: removing, adding, and reversing an arc. Although the two algorithms produce the same Markov chain, they differ entirely in their architecture and in the proposal distributions they use. Both methods rely on a rejection sampling approach: we may propose moving from the current graph to a graph that is cyclic but, of course, reject such proposals after discovering cyclicity. The distinctive feature of our algorithms is that acyclic proposals always get accepted! Thus, we do not waste simulation steps for proposing low-scoring DAGs that would get rejected, like the GC algorithm does. We call our algorithm \ourbasic{}, as it bears some resemblance to Gibbs sampling.

\subsection{Enhanced Simulation}

Our starting point is a simple MH algorithm. It first draws a node pair $ij$ according to a fixed probability distribution $(q_{ij})$, independent of $G$. Then it proposes a move to $G^{ij}$, obtained from $G$ either by removing arc $ij$ (if present in $G$), reversing arc $ji$ (if present in $G$), or adding arc $ij$ (if neither $ij$ nor $ij$ is in $G$). Finally, $G^{ij}$ is accepted as the next state of the chain with probability
\bes
	\alpha(G^{ij}, G) = \min\bigg\{1, \frac{\pi(G^{ij})}{\pi(G)}\bigg\}\,,
\ees
where $\pi(G^{ij}) = 0$ if $G^{ij}$ is cyclic; otherwise the next state is $G$. It is easy to show that if $q_{ij} = q_{ji}$, the chain converges to its stationary distribution $\pi$. In our experiments, we used the uniform distribution, $1/q_{ij} = n(n-1)$.
With some other choice, one may concentrate proposals on important node pairs, e.g., pairs that are more likely adjacent in the DAG posterior.

A major weakness of the above algorithm is that it may frequently propose low-scoring graphs $G^{ij}$, which get rejected with high probability. Ideally, we would generate $G^{ij}$ with probability $a_{ij} := q_{ij} \cdot \alpha(G^{ij}, G)$. With probability $a := \sum_{ij} a_{ij}$ the chain would move to some $G^{ij}$ and with probability $1-a$ it would stay at $G$. Unfortunately, we do not know how to implement this without explicitly visiting each possible $G^{ij}$, which would be computationally demanding.  

We give an efficient variant of the above scheme. The key is to consider \emph{tentative acceptance probabilities}
\bes
	\beta(G^{ij}, G) = \min\bigg\{1, \frac{\pi^*(G^{ij})}{\pi(G)} \bigg\}\,,
\ees
where $\pi^*$ is obtained from $\pi$ by extending the function to all directed graphs, i.e., ignoring the acyclicity constraint. In particular, if $G ^{ij} = (N, A')$, with $A'$ possibly cyclic, we have 
\bes
	\frac{\pi^*(G^{ij})}{\pi(G)} 
	= \frac{\pi_i(A'_i)}{\pi_i(A_i)}\,\frac{\pi_j(A'_j)}{\pi_j(A_j)}\,.
\ees
Note that if $G^{ij}$ is obtained by deleting or adding arc $ij$ from node $i$ to node $j$, then $A'_i = A_i$ and the respective ratio cancels out, leaving just the ratio $\pi_j(A'_j)/\pi_j(A_j)$.

Now, let
\bes
	b_{ij} := q_{ij} \cdot \beta(G^{ij}, G)\,,\quad
	b := \sum_{ij} b_{ij}\,.
\ees
To simulate the next state after $G$, we let the chain stay at $G$ with probability $1-b$, and with the remaining probability $b$ we let the chain move to the graph returned by the following algorithm.

\begin{description}
\item[Algorithm~B] 
\item[B1]
Draw a node pair $ij$ with probability $b_{ij}/b$.
\item[B2]
If $G^{ij}$ is acyclic, then return $G^{ij}$; else return $G$.
\end{description}

Observe this is a MH algorithm with the proposal distribution $Q(G^{ij} | G) = b_{ij}$ for any acyclic $G^{ij} \neq G$. The untruncated acceptance ratio $R(G^{ij}, G)$ thus equals $1$, since $\beta(G, G^{ij}) \pi(G^{ij}) = \beta(G^{ij}, G) \pi(G)$. It follows that $\pi$ is the stationary distribution of the chain, as desired. (See Suppl.~\ref{se:convergence} for a more detailed proof.)

When $b$ is small, we may simulate multiple consecutive steps simply by drawing a geometric random variable with mean $1/b$ (our present implementation), or attach each state with a weight equalling $1/b$, similar to a related birth-death process formulation for sampling undirected networks \citep{Mohammadi2015}. In contrast, for Gibbs sampling it is desirable to lower self transition probabilities \citep{Neal2024}.

\subsection{Fast Generation of Proposals}

For drawing node pairs $ij$ with probabilities $b_{ij}/b$, we partition the pairs into groups $N_j = \{ ij : i \neq j \}$, each with its partial sum $b_j = \sum_{i} b_{ij}$. We draw a pair $ij$ in two phases: first $j$ with probability $b_j/b$, and then $i$ from $N_j$ with probability $b_{ij}/b_j$. Using a sum-tree data structure (Suppl.~\ref{se:sumtree}) for each of the $n$ distributions, generating $ij$ takes only $O(\log n)$ time.

The motivation for the partitioning stems from the need to efficiently update the distribution after adding, removing, or reversing an arc. We make use of the fact that an operation on node pair $ij$ only affects the probabilities $b_{uv}$ in which either $u$ or $v$, or both, belong to $\{i, j\}$. After a removal the update is as below; after an addition it is identical; after a reversal we first remove the arc and then add the reversed arc.

\begin{table*}[t]
\caption{Speed of \ourbasic{} vs.\ \emph{GC} on datasets of size 1,{}000 sampled from benchmark Bayesian networks}
\label{table:versusgc1000}
\medskip
\centering
{\small
\begin{tabular}{rcccccccc}
\toprule
 &       &      &     & \multicolumn{3}{c}{Steps per microsecond} & \multicolumn{2}{c}{Acceptance ratio (\%)}  \\
\cmidrule(l{4pt}r{4pt}){5-7} \cmidrule(l{4pt}r{4pt}){8-9}
Network & Nodes & Arcs & Max-indeg & \GC{} & \ourbasic{}$^{\dagger}$ & \ourbasic{}$^{\ddagger}$ & $a$ & $b$  \\
\midrule 
\alarm	&  37 &  46 & 4 & 3.0 & {\bf67} & 38 & 0.50 & 4.6  \\
\hailfinder	&  56 &  66 & 4 & 3.1 & {\bf91} & 83 & 0.31 & 1.2  \\
\pathfinder	& 109 & 195 & 5 & 1.9 & 18 & {\bf19} & 0.34  & 1.3 \\
\andes 		& 233 & 338 & 6 & 2.0 & 46 & {\bf58} & 0.058 & 0.19  \\
\pigs 		& 441 & 592 & 2 & 3.1 & 553 & {\bf582} & 0.017& 0.14   \\
\bottomrule
\multicolumn{5}{l}{\scriptsize $^{\dagger}$Constant-time acyclicity checks by maintaining the ancestor relation} &
\multicolumn{3}{l}{\scriptsize $^{\ddagger}$Acyclicity checks by path-finding algorithms}

\end{tabular}
}
\end{table*}

\begin{description}
\item[Algorithm~U](Update after removing arc $ij$) 
\item[U1]
Recompute $b_{ij}$ and $b_{ji}$.
\item[U2]
Recompute $b_{uj}$ for all $u \neq i, j$.
\item[U3]
Recompute $b_{jv}$ for all parents $v \neq i$ of $j$. 
\item[U4]
Recompute the affected marginals $b_j$, $b_v$, and $b$. 
\end{description}

Steps U1 and U2 are straightforward to compute in time $O(n)$, and step U3 in time $O(\ell_j)$, where $\ell_j$ is the number of parents of $j$. In step U4, we construct binary sum-trees for the distributions $(b_{uj}/b_j)_u$ and $(b_u/b)_u$ in time $O(n)$. Furthermore, we update the binary sum-trees for the distributions $(b_{uv}/b_v)_u$ for each of the $\ell_j$ updates of $b_{jv}$, each in time $O(\log n)$.  

\subsection{Fast Acyclicity Checks}

Recall that the \GC{} algorithm of \citet{Giudici2003} maintains the ancestor relation to enable constant-time acyclicity checks. This is motivated by the low acceptance rate of the algorithm: one expects a large number of acyclicity checks per accepted move. 

But \gibby{} has a significantly higher acceptance rate, since only acyclic proposals are rejected. This shifts the optimal tradeoff between the work per proposed graph and the work per accepted move. While sophisticated methods are known for balancing the query and update time \citep{Hanauer2020}, we have found that simple path-finding algorithms (such as depth-first or breadth-first search) are sufficiently fast to not dominate the overall computation time. The worst-case complexity is linear in the number of arcs, $O(\ell)$. This should be contrasted with \GC{}'s $O(n \ell)$ complexity per update (after arc removal). 

As noted by the original authors, however, the hidden constant factor is small for \GC{}, thanks to 64-bit processors being able to handle 64 nodes by a single operation. Modern vector processing capabilities (SIMD instructions) may boost this further. Acknowledging this, we have also implemented a version of \gibby{} that takes the approach of \GC{} for acyclicity checking.

\subsection{Performance in Practice}

We compared our somewhat optimized \cpp{} implementations of \GC{} and \ourbasic{} on datasets drawn from five benchmark BNs. We used the sparse prior with the maximum-indegree of the data-generating network. Local scores were precomputed only for parent sets of sizes $0$ and $1$; for larger parent sets, local scores were computed on demand and cached. For each dataset, we ran both \GC{} and \ourbasic{} for $2 \times 10^{10}$ steps, gauging the running time only for the latter half to lessen the influence of the demanding score computations in the burn-in phase of the simulation. As \GC{} and \ourbasic{} simulate exactly the same Markov chain, they differ only in speed, the mixing properties being the same. Table~\ref{table:versusgc1000} shows results on datasets of size 1,{}000 (see Suppl.~\ref{se:gibbymore} for sizes 500 and 2,{}000). 

Our implementation of \GC{} takes $2$--$3$ million steps per second, across the tested datasets, making \emph{billion}-step simulations computationally feasible. For comparison, in the literature \citep{Giudici2003,Grzegorczyk2023,Grzegorczyk2024,Grzegorczyk2008,Su2016} the reported simulations are in a few million steps. The larger the network, the lower the acceptance rate (value $a$) and the less frequent updating of the data structures. While the running time per step is about the same, the number of accepted moves is significantly smaller for larger networks.

\gibby{}, in contrast, takes advantage of low acceptance rate, reflected in the tentative acceptance rate (value~$b$). On the largest of the tested networks, \ourbasic{} is about 200 times faster than $\GC{}$, rendering \emph{trillions} of steps computationally feasible; also on the smaller networks, we observe at least an order-of-magnitude speedup. Of the two alternative ways to manage acyclicity, maintaining the ancestor relation has a clear advantage on the smaller networks, while calling a path-finding algorithm appears to be faster on the larger networks. This can be explained by the time complexity of updating the ancestor relation: even if is needed less frequently on larger networks, the update cost grows relatively fast with the number of nodes.  

\section{SCORE PRUNING}
\label{se:pruning}

The REV and MBR moves are computationally more expensive than the basic moves because they require summing local scores over possible parent sets for one or multiple nodes. Given a node $i \in N$ and two sets $T \subseteq U \subseteq N\setminus\{i\}$, we need the \emph{interval sum} $\pi_i[T, U] := \sum_{T \subseteq S \subseteq U} \pi_i(S)$. For REV and MBR, the lower set $T$ is either empty or a singleton set. 
To answer such interval queries $(i, T, U)$, a common approach is to loop over \emph{all} possible parent sets $S$ of $i$ in decreasing order by the scores, accumulate the sum by $\pi_i(S)$ if $T \subseteq S \subseteq U$, and stop as soon as the contributions of the remaining scores are guaranteed to be only  negligible \citep{Friedman2003, Niinimaki2013, Niinimaki2016, Viinikka2020}.

We next present a way to shorten the list of local scores by safely discarding parent sets that are dominated by other sets in the list. Our pruning method is inspired by the folklore pruning rule \citep{Teyssier2005} that applies when one seeks a DAG that \emph{maximizes} the score; then one can discard set $S$ if it has a subset $R$ with an equal or better score, that is, $\pi_i(S) \leq \pi_i(R)$. Our pruning rule is similar in spirit but technically more involved, as we have to consider approximate evaluations of interval queries. 

\subsection{Theory}

Let us focus on a fixed child node $i$. For brevity, write $V$ for the set $N\setminus\{i\}$ and $f$ for the local score function $\pi_i$. We formulate the goal of approximation as a guarantee in the relative error.

\smallskip
\begin{definition}[$\epsilon$-close]\emph{
Let $V$ be a finite set and $\epsilon \geq 0$. We say that $\tilde{f} : 2^V \rightarrow \mathbb{R}_{\ge 0}$ is \emph{$\epsilon$-close} to $f : 2^V \rightarrow \mathbb{R}_{\ge 0}$ if 
$(1 - \epsilon) f[T, U] \leq \tilde{f}[T, U] \leq f[T, U]$
for all $T, U \subseteq V$ with $|T| \leq 1$.
}
\end{definition}

Here, only allowing underestimation of the exact sum is motivated by our pruning method that forms $\tilde{f}$ by simply discarding some sets $S$, i.e., setting $\tilde{f}(S)$ to $0$.

\smallskip
\begin{definition}[$\epsilon$-pruning]\emph{
Let $V$ be a finite set of $K$ elements and $\epsilon \geq 0$. Let $f : 2^V \rightarrow \mathbb{R}_{\ge 0}$. The \emph{$\epsilon$-pruning} of $f$ is the function $\tilde{f} : 2^V \rightarrow \mathbb{R}_{\ge 0}$ obtained as 
\bes
	\tilde{f}(S) := 
		\begin{cases}
			0\,, & \text{if $f(S) < \epsilon \cdot \psi(j, S)$ for all $j \in S$},\\ 
			f(S)\,, & \text{otherwise},\\
		\end{cases}
\ees 
where
\bes
	\psi(j, S) 
	= \sum_{j \in R \subseteq S} f(R) \big(1 + 1/K\big)^{|R| - K} K^{|R| - |S|}\,.
\ees
}
\end{definition}

In words, discard $S$ if its score is much less than sums of appropriately weighted scores of its subsets. 
\smallskip
\begin{theorem}[Suppl.~\ref{se:proof}]\label{thm:pruning}
The $\epsilon$-pruning of $f$ is $\epsilon$-close to $f$.
\end{theorem}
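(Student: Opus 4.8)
The plan is to bound, for each interval query $(i,T,U)$ with $|T|\le 1$, the total mass lost by zeroing out the pruned sets, i.e.\ to show $f[T,U]-\tilde f[T,U]=\sum_{S\in\mathcal{D},\,T\subseteq S\subseteq U} f(S)\le \epsilon\, f[T,U]$, where $\mathcal{D}$ is the set of discarded sets. The upper bound $\tilde f[T,U]\le f[T,U]$ is immediate since $\tilde f\le f$ pointwise, so all the work is in the lower bound. The key structural fact to exploit is that if $S$ is discarded, then for \emph{every} $j\in S$ we have $f(S)<\epsilon\,\psi(j,S)$, i.e.\ $f(S)$ is small relative to a weighted sum over subsets $R\ni j$ of $S$. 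Since $|T|\le 1$, I can pick a witness element $j\in T$ when $T$ is a singleton (and handle $T=\emptyset$ separately, picking any $j\in S$, which is fine because the pruned $S$ is nonempty by the ``for all $j\in S$'' quantifier over a set that must be nonempty for the condition to be vacuously checkable — actually one must be slightly careful: if $S=\emptyset$ the ``for all $j\in S$'' is vacuously true, so $\emptyset$ would always be pruned; presumably the definition implicitly excludes $S=\emptyset$, or $\psi$ is defined so this doesn't happen — I would check the supplement's exact convention and, if needed, note $f(\emptyset)$ is never pruned).

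The heart of the argument is a charging/amortization scheme: each discarded $S$ with $T\subseteq S\subseteq U$ charges its mass $f(S)$ to the surviving subsets $R$ appearing in $\psi(j,S)$, where $j$ is the fixed witness in $T$ (or in $S$ when $T=\emptyset$). Summing $f(S)<\epsilon\,\psi(j,S)$ over all such discarded $S$ gives
\[
\sum_{S\in\mathcal{D},\,T\subseteq S\subseteq U} f(S)
\;<\;\epsilon\sum_{S\in\mathcal{D},\,T\subseteq S\subseteq U}\ \sum_{j\in R\subseteq S} f(R)\,(1+1/K)^{|R|-K}K^{|R|-|S|}.
\]
Now I swap the order of summation, fixing $R$ (with $T\subseteq R$, using $j\in R$ and $j\in T$) and summing over the discarded supersets $S$ with $R\subseteq S\subseteq U$. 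For a fixed $R$ of size $r$, the inner sum over $S\supseteq R$ with $|S|=s$ has at most $\binom{K-r}{s-r}\le K^{s-r}$ terms (sets inside $V$, or inside $U$), each carrying the factor $K^{r-s}$, so the $K$-powers telescope against the count and each ``level'' $s$ contributes at most $f(R)(1+1/K)^{r-K}$ — wait, the $(1+1/K)$ factor depends on $|R|$ not $|S|$, so it pulls out cleanly. Thus the total contribution of a fixed surviving $R$ is at most $f(R)(1+1/K)^{r-K}\cdot(\text{number of size-levels }s)\le f(R)(1+1/K)^{r-K}\cdot K$. The main obstacle — and the reason the weights have exactly this form — is making these bounds tight enough that, after summing over all surviving $R$ with $T\subseteq R\subseteq U$, the prefactor $(1+1/K)^{r-K}K$ does not blow up. This is where $(1+1/K)^K\le e$ and a careful accounting by $|R|$ must be combined: one groups surviving $R$ by size, uses that there are at most $\binom{K}{r}\le K^r/r!$ hmm — actually I expect the intended bound is cleaner, bounding the number of $R\subseteq U$ of each size crudely and letting the $(1+1/K)^{r-K}$ factor (which is $\le (1+1/K)^{-(K-r)}$, shrinking as $r$ decreases away from $K$) do the damping. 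After this, every surviving $R$ with $T\subseteq R\subseteq U$ gets charged at most $\epsilon\, f(R)$ in aggregate, giving $\sum_{S\in\mathcal{D}}f(S)\le \epsilon\sum_{T\subseteq R\subseteq U}\tilde f(R)=\epsilon\,\tilde f[T,U]\le\epsilon\,f[T,U]$, which rearranges to $\tilde f[T,U]\ge(1-\epsilon)f[T,U]$.

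I expect the main difficulty to be the bookkeeping in the double-counting step: controlling, for each surviving $R$, the \emph{total} weight it receives from all discarded supersets $S$ simultaneously, across all size levels, and verifying that the geometric factors $(1+1/K)^{|R|-K}$ and $K^{|R|-|S|}$ are calibrated so this total is at most $1$ (before the $\epsilon$). A secondary subtlety is that the charge should land on \emph{surviving} $R$ (so that $\tilde f(R)=f(R)$), which requires that $R$ itself is not discarded; this may need either an inductive argument (prune in some order, or argue that if $R$ were discarded we could re-charge to its subsets) or a direct argument that the weighted chain of subsets terminates at surviving sets — I would check whether the supplement handles this by well-founded induction on $|S|$ or by a direct telescoping bound that tolerates charging to discarded sets too.
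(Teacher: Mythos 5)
Your overall strategy is the paper's: bound the lost mass $f[T,U]-\tilde f[T,U]$ by applying the pruning condition $f(S)<\epsilon\,\psi(j,S)$ to each discarded $S$ with the witness $j$ taken from $T$ (or the constraint dropped when $T=\emptyset$), then swap the order of summation so each subset $R$ collects the charges from its supersets. Two of your worries are non-issues. First, the paper charges to $f(R)$ for \emph{all} $R$ with $T\subseteq R\subseteq U$, pruned or not, ending with $f[T,U]-\tilde f[T,U]\le\epsilon\,f[T,U]$, which is exactly what $\epsilon$-closeness asks for; no induction on surviving sets and no re-charging argument is needed. Second, you are right that the empty set must be understood as never pruned (otherwise step one of the bound has no witness $j$), but this is a definitional convention, not a mathematical obstacle.

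The genuine gap is the weight-calibration step, which you explicitly leave unresolved and whose crude version fails. Bounding the number of size-$s$ supersets of $R$ inside $U$ by $K^{s-r}$ and then multiplying by the number of size levels gives each $R$ a total charge of order $\epsilon K f(R)$, i.e., only $\epsilon K$-closeness — for $r$ near $K$ the prefactor $(1+1/K)^{r-K}\cdot K$ is essentially $K$, not $1$. The weights are calibrated so that the sum over supersets evaluates \emph{exactly} by the binomial theorem: for fixed $R\subseteq U$ with $|R|=r$,
\begin{align*}
\sum_{R\subseteq S\subseteq U} (1+1/K)^{r-K}K^{r-|S|}
&= (1+1/K)^{r-K}\sum_{s=r}^{|U|}\binom{|U|-r}{s-r}K^{-(s-r)}\\
&= (1+1/K)^{r-K}\,(1+1/K)^{|U|-r}
= (1+1/K)^{|U|-K}\le 1,
\end{align*}
the final inequality using $|U|\le|V|=K$. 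This single inequality (which the paper isolates as the abstract condition $\sum_{R\subseteq S\subseteq U}w(R,S)\le 1$) is all the swapped double sum needs to be bounded by $\epsilon\sum_{T\subseteq R\subseteq U}f(R)=\epsilon\,f[T,U]$. Replacing the binomial coefficient $\binom{|U|-r}{s-r}$ by $K^{s-r}$ destroys precisely the cancellation that keeps the per-$R$ charge at most $1$ rather than at most $K$, so your argument as written does not establish the theorem.
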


We can show (Suppl.\ Theorem~\ref{thm:lostmass}) that with $(\epsilon/n)$-pruning of $\pi_i$ for all nodes $i$, the Kullback--Leibler divergence and the total variation distance of the resulting approximate posterior from the exact one are at most $\epsilon/(1-\epsilon)$ and $\epsilon$, respectively. 

\subsection{Practice}

The $\epsilon$-pruning $\tilde{\pi}_i$ of a score function $\pi_i$ is straightforward to compute: First, $\pi_i(S)$ is computed for all possible parent sets $S$. Then, by a second pass, each set $S$ is either pruned or kept depending on whether $\pi_i(S)$ is less than $\epsilon\cdot \psi_i(j, S)$ for all $j \in S$; here $\psi_i$ is defined as $\psi$ but, of course, separately for each node~$i$. For fixed $i$ and $j$, the evaluation of $\psi_i(j, S)$ takes time $O\big(2^{|S|}\big)$. We call this \emph{complete} pruning, since it computes the scores also for all sets that get pruned.

Ideally, one would only score the sets that will be kept. When pruning significantly reduces the number of kept sets, such lazy computation could be much faster than complete pruning.
We have partially implemented this idea in what we call \emph{bottom-up} pruning, as follows. We visit possible parent sets $S$ in increasing order by size. As before, we prune $S$ if $\pi_i(S)$ is less than $\epsilon\cdot \psi_i(j, S)$ for all $j \in S$. However, now we also prune $S$ if \emph{all} its subsets $S\setminus\{j\}$ for $j \in S$ were pruned. Thus, for example, if all sets of size $5$ are pruned, there is no need to score sets of size $6$ or larger. 
While we cannot prove that this heuristic never prunes any extra set, we have empirically observed that complete and bottom-up pruning typically yield exactly the same result. 

With hundreds of nodes and relatively large maximum-indegree bound, say $5$ or $6$, further heuristics may be needed for computational feasibility, both concerning the time requirement of score computations and the memory requirement for storing the scores. To this end, we adopt the idea of preselecting some number $K$ of candidate parents per node \citep{Friedman2003,Kuipers2022,Viinikka2020}; as the candidate parents of node $i$, we simply selected the $K$ nodes $j$ with the largest scores $\pi_i(\{j\})$ \citep{Friedman2003}, but also other heuristics could be considered \citep{Viinikka2020}, e.g., through constraint-based methods \citep{Kuipers2022}. In addition, we also consider parent sets not contained in the set of candidates, however, subject to a lower maximum-indegree bound, which we set to roughly match the amount of available memory.

\begin{table*}[t!]
\caption{Efficiency of pruning on datasets of size 1,{}000 sampled from benchmark Bayesian networks} %
\label{table:pruning}
\medskip
\centering
{\small
\begin{tabular}{rcccw{c}{55pt}w{c}{30pt}w{c}{30pt}w{c}{55pt}w{c}{30pt}w{c}{30pt}}
\toprule
 & & & & \multicolumn{3}{c}{$\epsilon = 2^{-15}$} & \multicolumn{3}{c}{$\epsilon = 2^{-10}$} \\
\cmidrule(l{4pt}r{4pt}){5-7} \cmidrule(l{4pt}r{4pt}){8-10}
Network & Nodes & \!\!\!Max-indeg & $K$$^{\dagger}$ & Kept (\%) & \TimeB$^{*}$ & \TimeC & Kept (\%) & \TimeB & \TimeC \\
\midrule 
\alarm		&  37 &  4 & 36 & 5.9 & 8.9 & 15 & 3.7 & 7.8 & 15 \\
\hailfinder	&  56 &  4 & 55 & 0.090 & 6.5 & 127 & 0.087 & 6.4 & 127 \\
\pathfinder	& 109 &  5 & 64 & 22 & 9230 & 17800 & 15 & 8450 & 17100 \\
\andes		& 223 &  6 & 64 & 0.16 & 10300 & -- & 0.041 & 3470 & -- \\
\pigs		& 441 &  2 & 64 & 18 & 11 & 11 & 16 & 11 & 11 \\
\pigs		& 441 &  \hspace{4pt}4$^{\ddagger}$ & 64 & 0.11 & 125 & 5490 & 0.10 & 116 & 5640 \\

\bottomrule
\multicolumn{10}{l}{\scriptsize 
$^{\dagger}$No.\ candidate parents per node\;
$^{*}$No.\ seconds by bottom-up (\TimeB) and complete pruning (\TimeC)\;
$^{\ddagger}$Larger than in the generating network
}

\end{tabular}
}
\end{table*}

Table~\ref{table:pruning} shows empirical results on datasets of size 1,{}000 generated from five benchmark BNs (see Suppl.~\ref{se:pruningmore} for sizes 500 and 2,{}000). 
Our main observation is that the efficiency of pruning diverse in scale across the datasets, and that the variability is not easily explained by the key parameters of the generating networks, such as the number of nodes or arcs, or maximum-indegree. That said, a larger maximum-indegree gives better chances for significant pruning to take place; this holds especially when we use a maximum-indegree larger than the actual maximum-indegree of the generating networks (second instance of \pigs{}).
Aligned with expectations, bottom-up pruning is much faster than complete pruning when the fraction of parent sets kept after pruning is small. 
The results appear to be insensitive to the relative error parameter $\epsilon$. The only exception is \andes{}, for which increasing $\epsilon$ from $2^{-15}$ to $2^{-10}$ reduced the kept parents to about one fourth and the running time of bottom-up pruning to about one third.

\section{EMPIRICAL COMPARISON TO OTHER SAMPLERS}
\label{se:comparison}

We compared our algorithm \gibby{} to two recent Bayesian DAG samplers, \bidag{} \citep{Suter2023} and \daggflow{} \citep{Deleu2022}, both of which can handle categorical data and support modular (e.g., uniform) structure priors. We excluded from the comparison other samplers that cannot express these models (see Suppl.~\ref{se:samplers} for a short review of related methods). 

\bidag{} is a sophisticated implementation of the partition MCMC method \citep{Kuipers2017}. It simulates a Markov chain on ordered node partitions, each which corresponds to exponentially many DAGs.
Various ideas are employed to find a small set of candidate parents for each node, which enable fast access to sums of local scores (similar to the interval sums of Section~\ref{se:pruning}) using precomputed look-up tables. 

\daggflow{} samples DAGs using generative flow networks in a fashion similar to sequential importance sampling. It first learns a ``transition probability distribution'' for moving from a given DAG to another DAG with one additional arc. Then it can generate DAGs independently from an approximate posterior. 

For all the three samplers, we used the sparse structure prior and the Dirichlet priors for the parameters with the equivalent sample size $1.0$. The maximum-indegree parameter was set separately for each dataset, as described in Suppl.~\ref{se:accuracymore}. 
For more details on our the choices for the user parameters and on the computing environment, see Suppl.~\ref{se:samplers} and \ref{se:environment}.

\subsection{Small Networks}

We ran the three samplers on datasets of size 1,{}000 sampled from selected small benchmark BNs \citep{ScutariBNrepository}, as well as on the \zoo{} benchmark dataset of size 101 \citep{Zoo}. We measure the accuracy of a sample of DAGs by the \emph{MAD}, i.e., the \emph{maximum absolute deviation} $|\hat{p}_{ij} - p_{ij}|$ over all node pairs $ij$. Here $\hat{p}_{ij}$ is the relative frequency of arc $ij$ in the sampled DAGs and $p_{ij}$ is the respective exact probability we compute by an exponential-time algorithm \citep{Harviainen2024,Pensar2020,Tian2009} (see Suppl.~\ref{se:exact}).

We observe  (Fig.~\ref{fig:small}) that the arc posterior probability estimates by \gibby{} quickly become close to the best possible, the MAD decaying roughly at the rate $O\big(1/\!\sqrt{t}\big)$ in the sample size $t$. The estimates by \bidag{} improve as $t$ increases for \asia{}, but for the other networks, the MAD stays at high values, suggesting that the Markov chain does not mix well; further evidence for this is given in Suppl.~\ref{se:accuracymore}. \daggflow{} performs reasonably in some runs on \asia{}, but more generally it does not produce reliable output for the tested datasets; as the approach of \daggflow{} is very different from that of \gibby{} and \bidag{}, we defer the detailed results to Suppl.~\ref{se:accuracymore}. 

\begin{figure*}[t!]
\begin{center}
{\small
\begin{tabular}{cccc}
\hspace{5pt} \asia{} ($n = 8$)  & 
\hspace{-10pt} \sachs{} ($n = 11$) & 
\hspace{-5pt} \child{} ($n = 20$) & 
\hspace{-5pt} \zoo{} ($n = 17$) \\
\addlinespace[2pt]
\hspace{-5pt}
\includegraphics[height=0.21\textwidth]{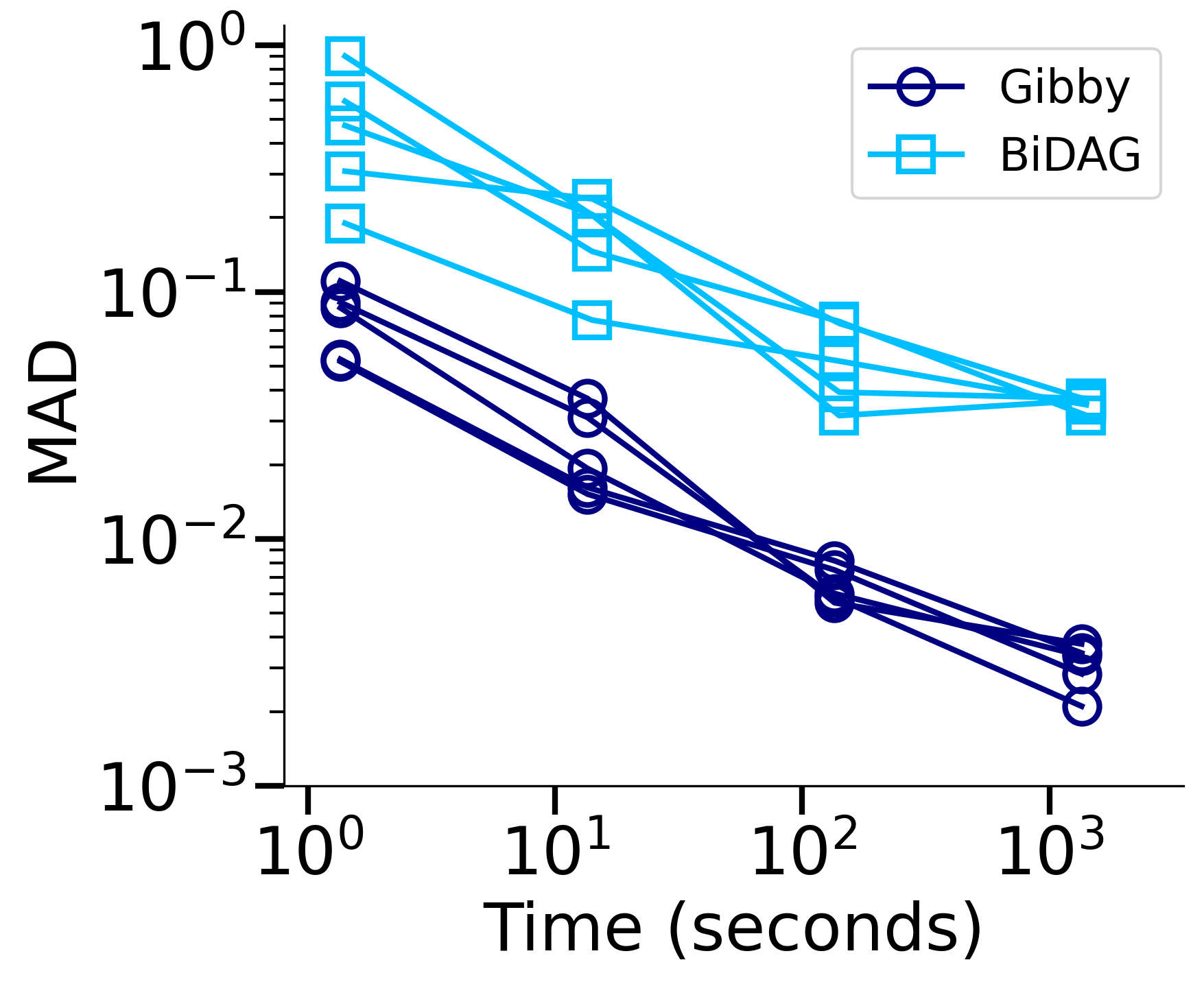} & 
\hspace{-10pt}
\includegraphics[height=0.21\textwidth]{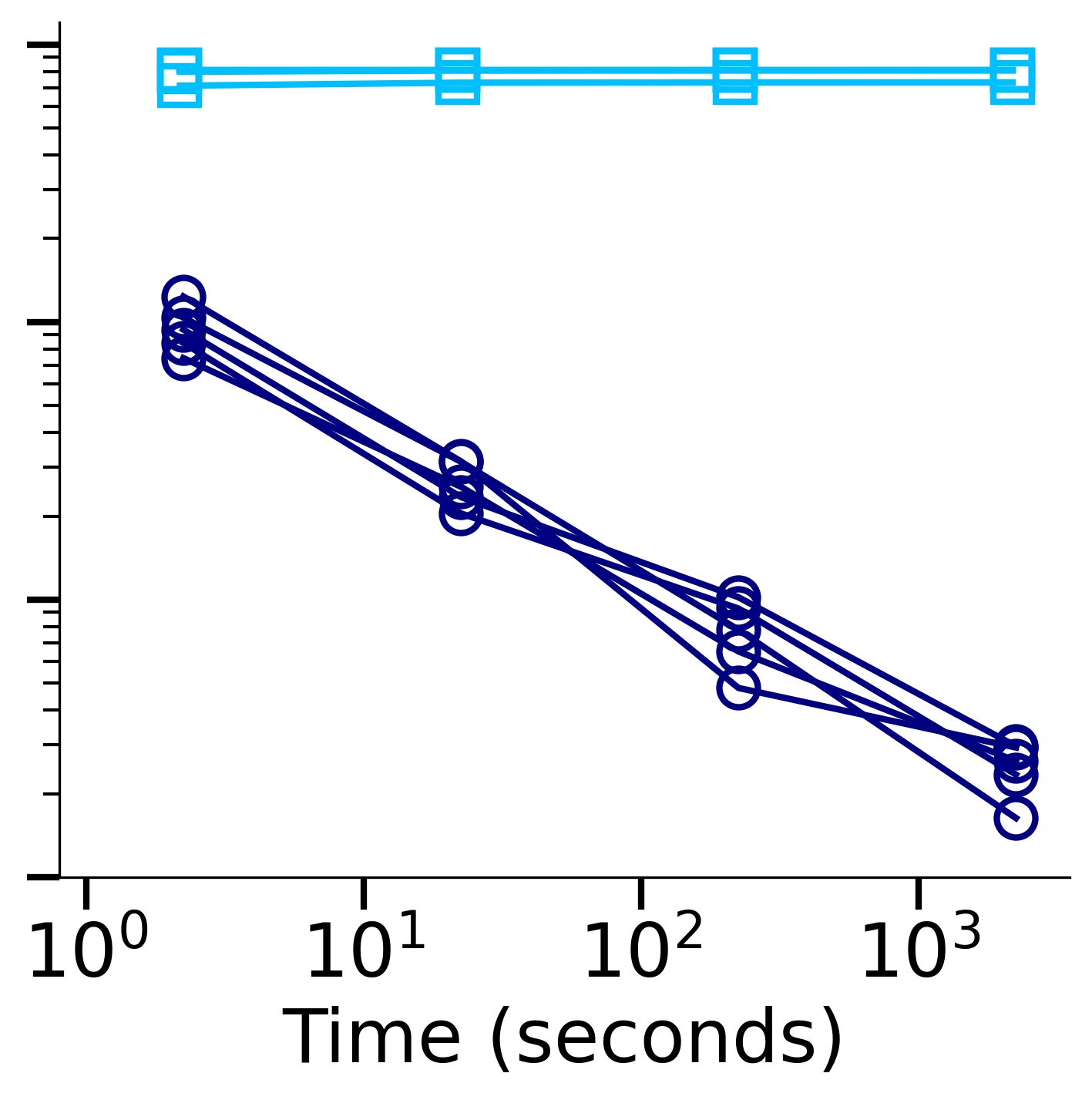} & 
\hspace{-10pt}
\includegraphics[height=0.21\textwidth]{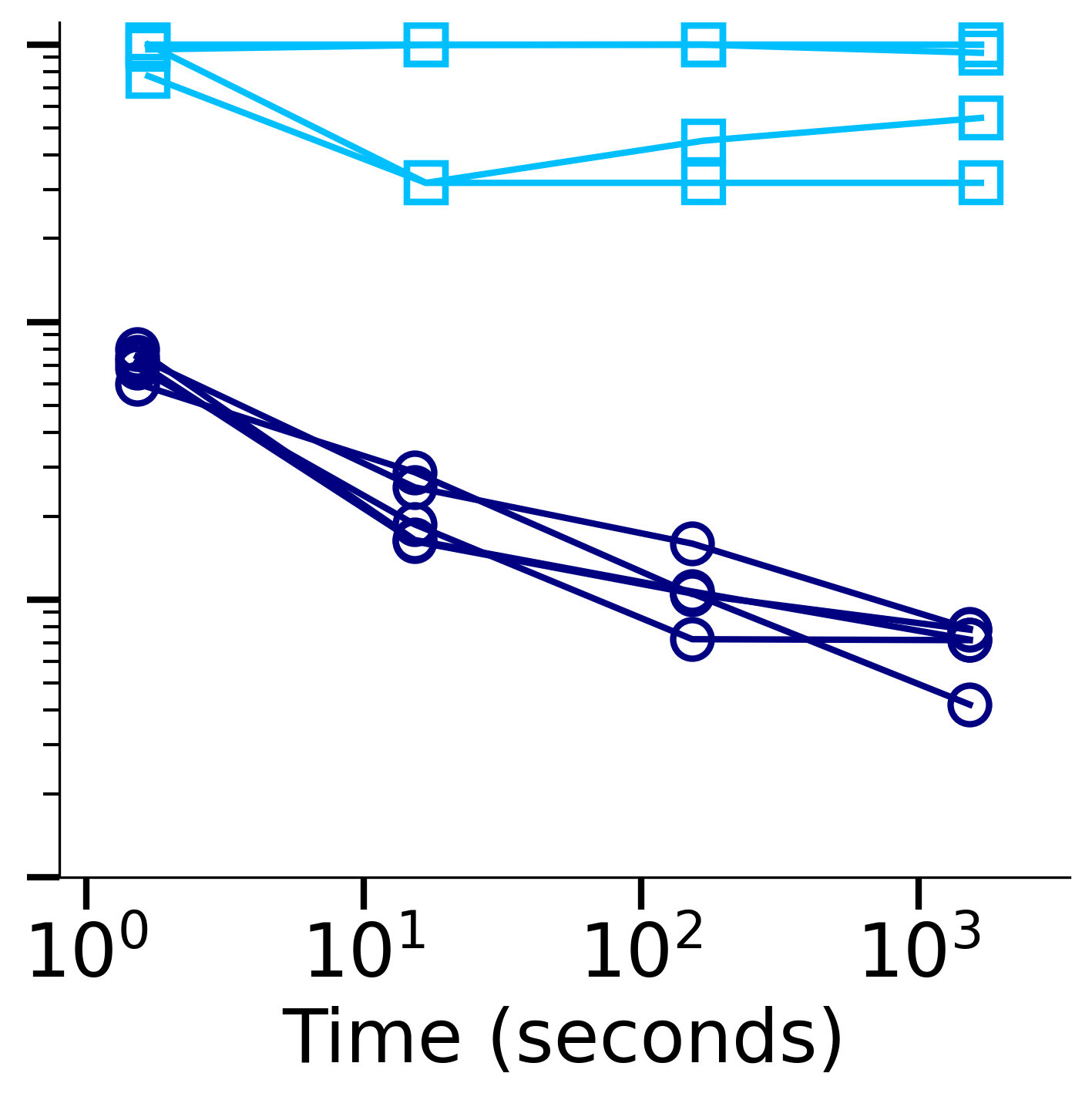} & 
\hspace{-10pt}
\includegraphics[height=0.21\textwidth]{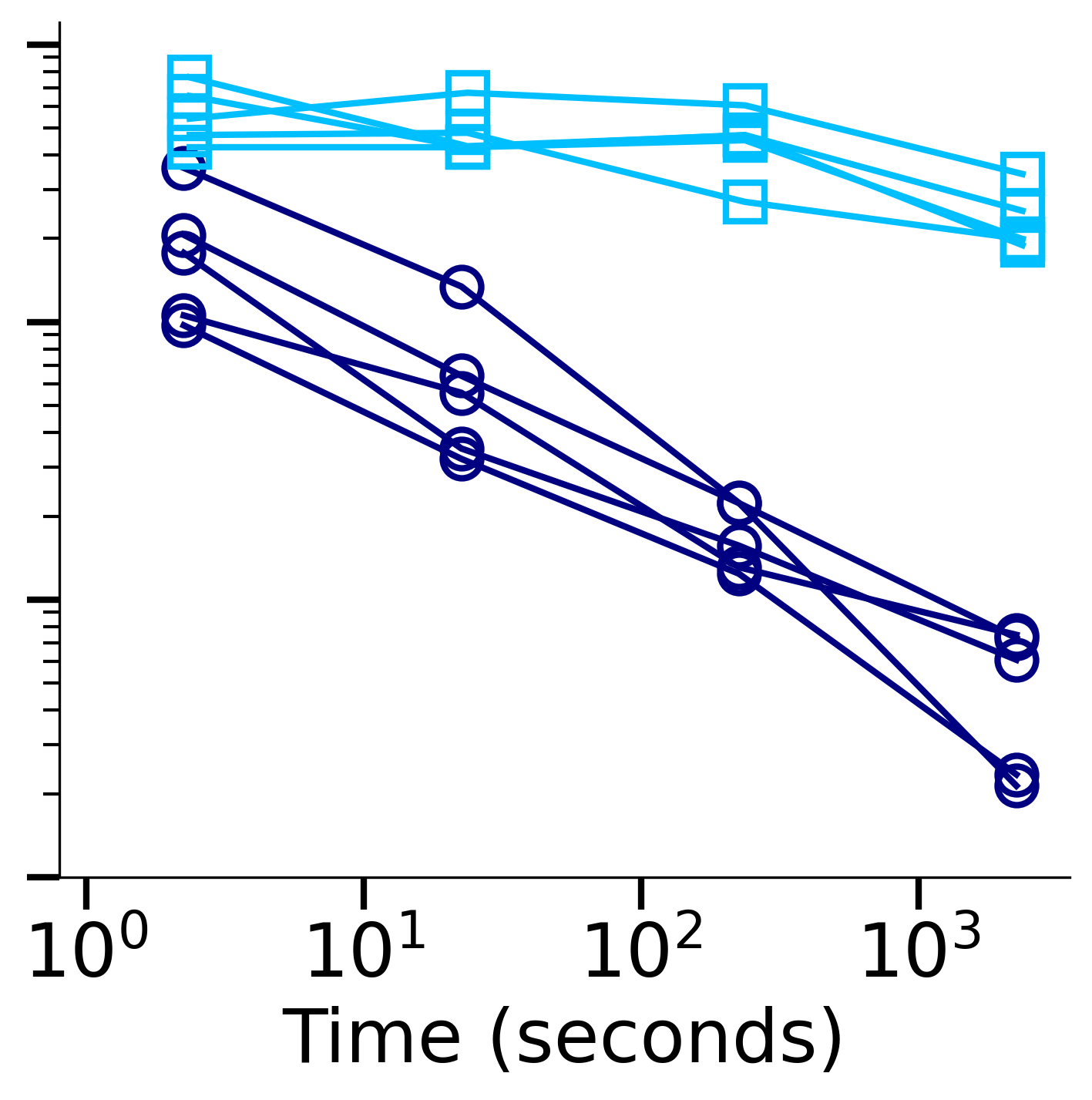}  
\end{tabular}
}
\caption{\label{fig:small}
Performance on small networks. For each sampler, shown is the MAD of five independent runs as a function of running time. For each run, 100,{}000 DAGs were collected with even spacing. 
}
\end{center}
\end{figure*}

\begin{figure*}[t!]
\begin{center}
{\small
\begin{tabular}{cccc}
\multicolumn{2}{c}{\hailfinder{} ($n = 56$)} & \multicolumn{2}{c}{\andes{} ($n = 233$)} \\
\cmidrule(lr{8pt}){1-2} \cmidrule(ll{8pt}){3-4}
\hspace{6pt} \gibby{} & \hspace{-18pt} \bidag{} & \hspace{12pt} \gibby{} & \hspace{-18pt} \bidag{} \\
\hspace{-5pt}
\includegraphics[height=0.26\textwidth]{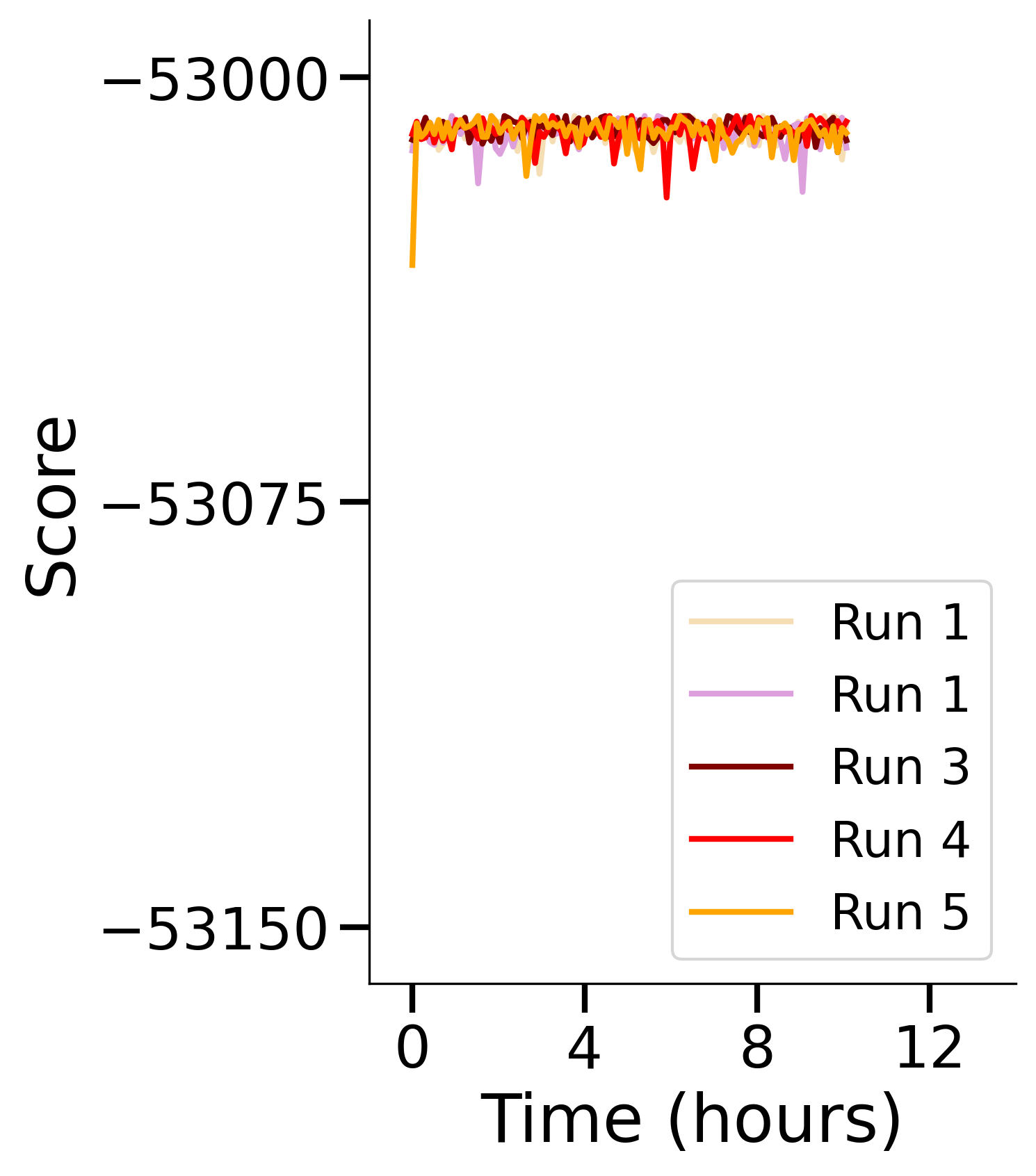} & 
\hspace{-5pt}
\includegraphics[height=0.26\textwidth]{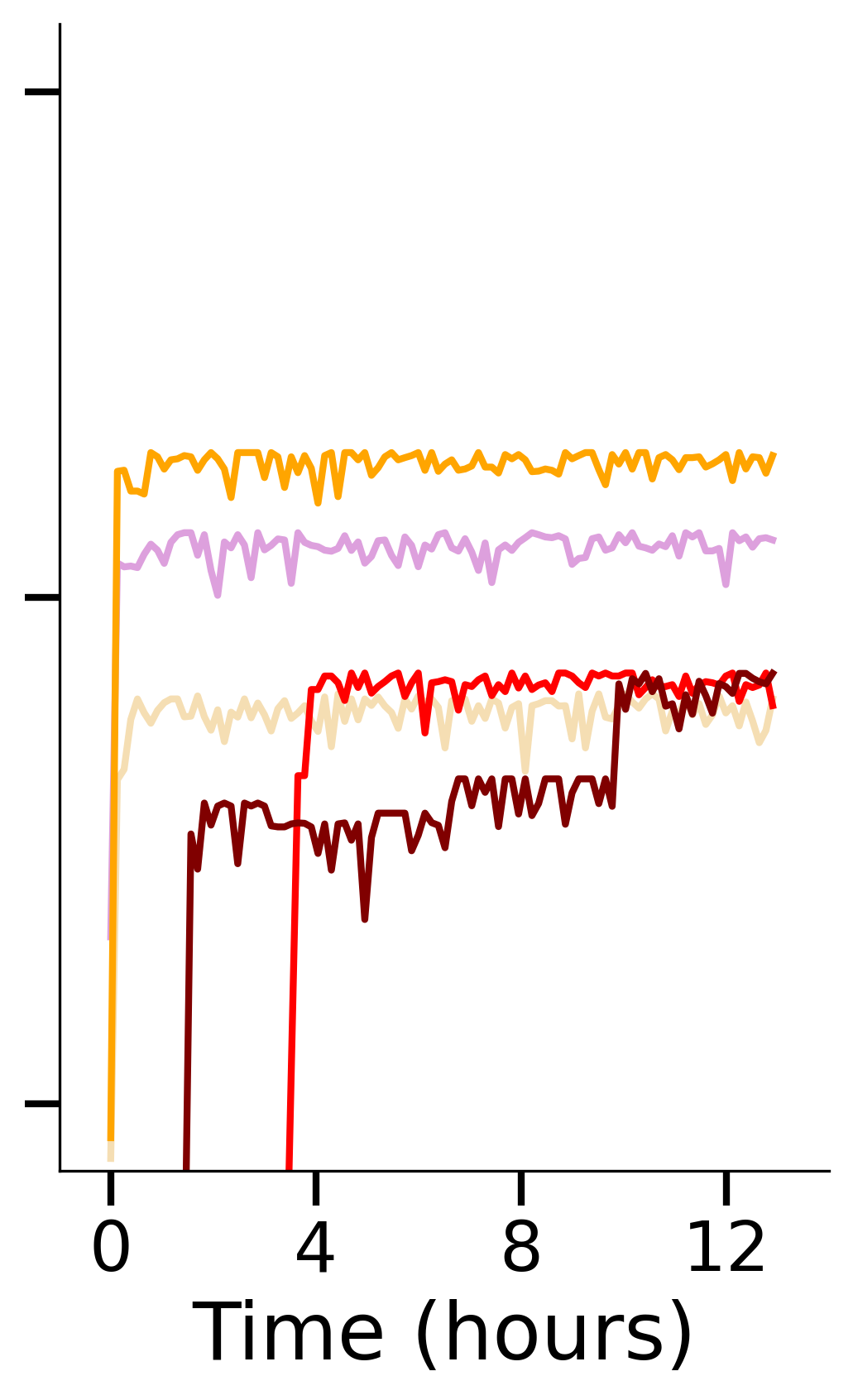} & 
\hspace{5pt}
\includegraphics[height=0.26\textwidth]{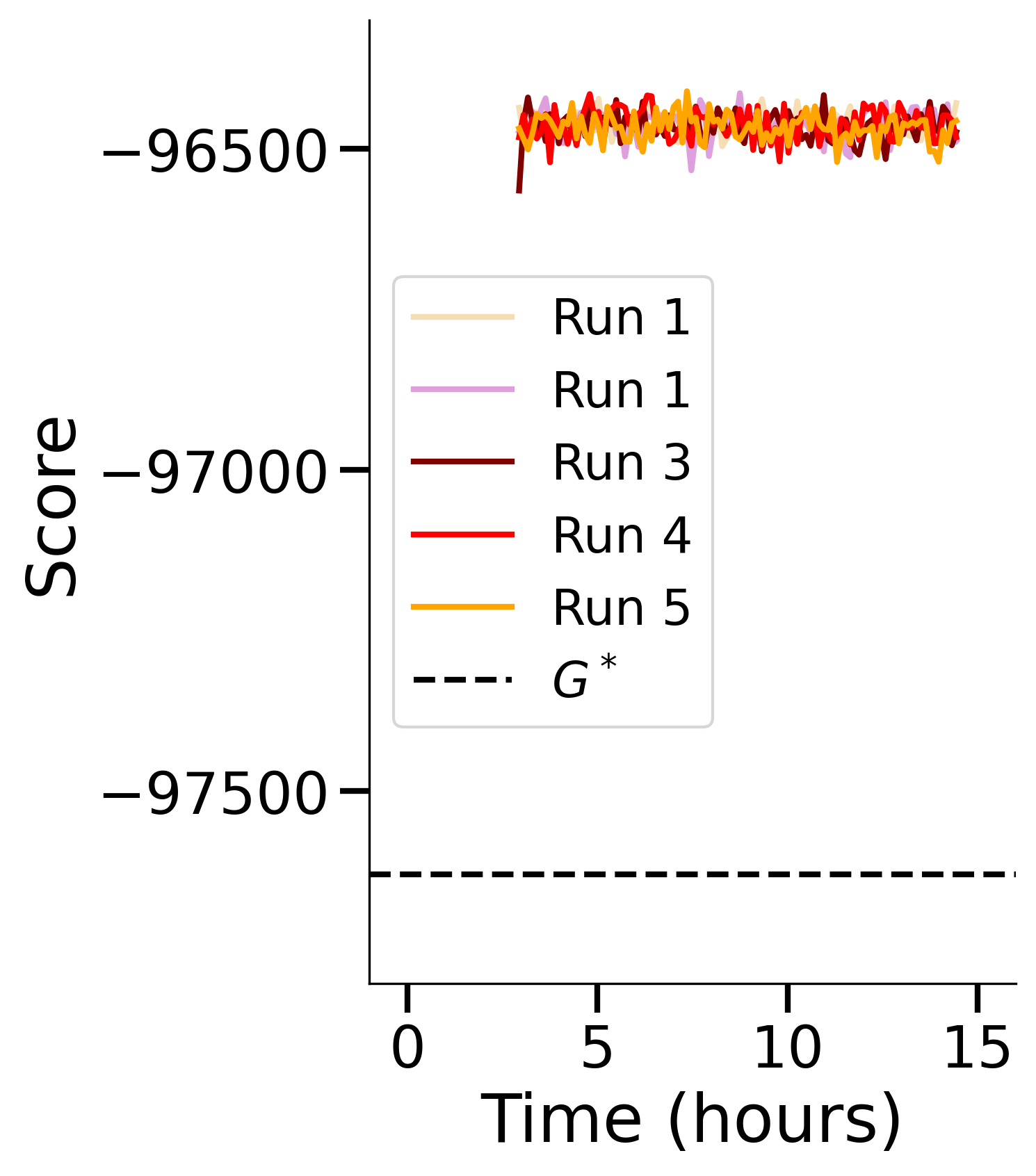} & 
\hspace{-5pt}
\includegraphics[height=0.26\textwidth]{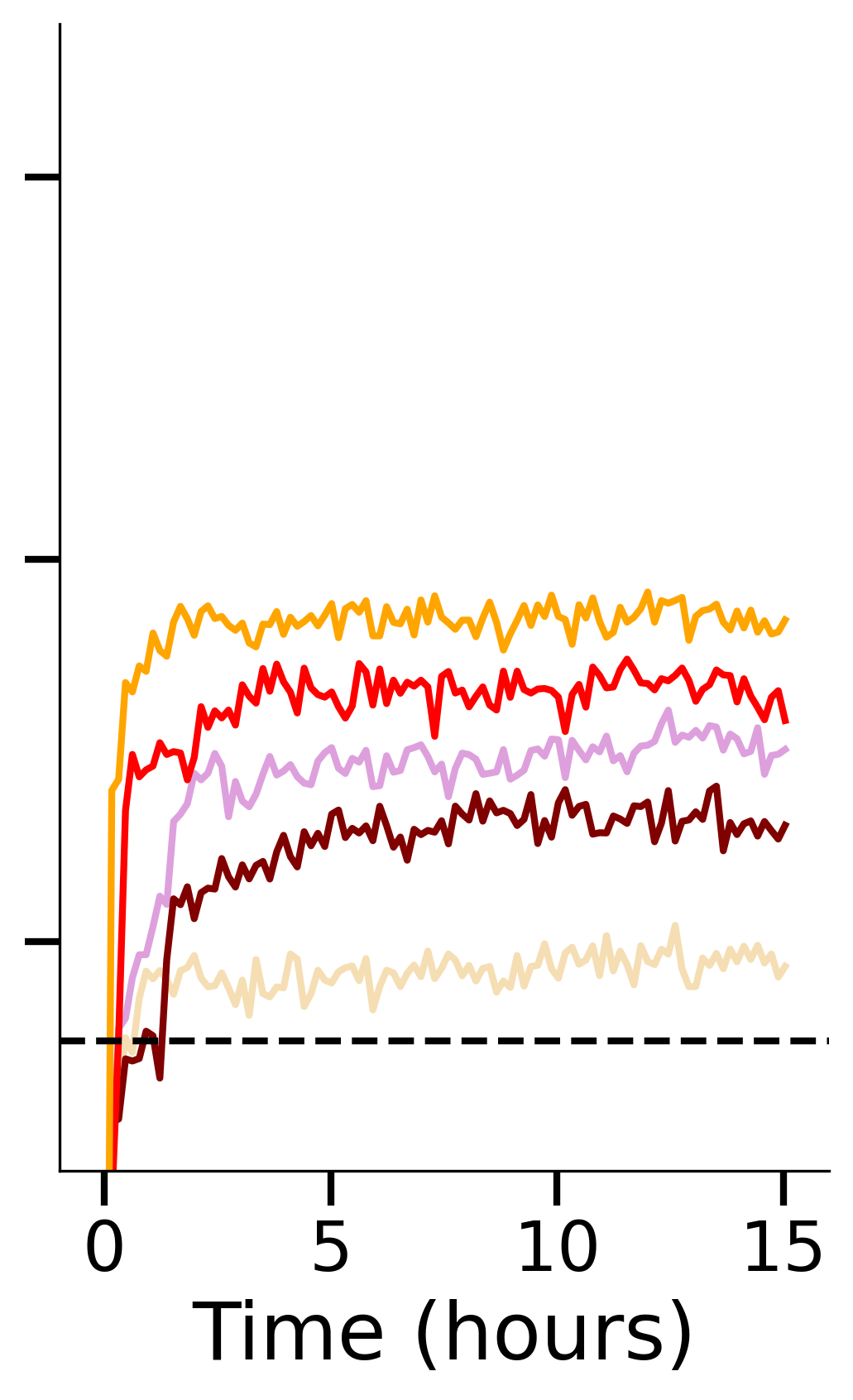} \\ 
\end{tabular}
}
\caption{\label{fig:large}
Performance on large networks. For \gibby{} the number of candidate parents $K$ were set as in Table~\ref{table:pruning}. For each sampler, shown is the log posterior probability of the sampled DAGs in five independent runs. 
For \hailfinder{}, the score of the data-generating DAG $G^*$ is around $-56010$ and not visible. For \andes{}, \gibby{} spends the first hours in score computations, before sampling. 
}
\end{center}
\end{figure*}

\subsection{Large Networks}

On large networks, we only compare \gibby{} to \bidag{}, given the weak performance of \daggflow{}. 
We generated datasets of size 1,{}000 from two benchmark BNs and visually inspect the traceplots of the samplers (Fig.~\ref{fig:large}).
We compare the sampled DAGs to the data-generating DAG $G^*$ in terms of the posterior probability, but not with respect to any structural metrics, recalling that the task is to approximate the posterior distribution (usually not well represented by $G^*$). 

We observe that both samplers quickly reach regions of DAGs whose score exceeds that of the data-generating DAG. 
\gibby{} appears to mix well, the five independent runs showing very similar behavior. 
\bidag{} is less robust, different runs often getting stuck at different local modes. 
The arc posterior probability estimates (Suppl.~\ref{se:accuracymore}, Fig.~\ref{fig:largecorr}) are highly correlated between the \gibby{} runs, whereas different \bidag{} runs yield largely varying estimates.

\section{CONCLUDING REMARKS}

We have advanced computational methods for approximate sampling of DAGs from their posterior distribution. Our results suggest that currently the best methods---those scaling beyond exact algorithms---are based on Markov chains with moves that resample entire parent sets of one or multiple nodes. However, the computational cost of such moves increases rapidly with the user-defined maximum indegree, which has limited their practical use to small DAGs or modest indegree bounds. To mitigate this, we introduced a pruning rule that can significantly reduce precomputing time, lower memory requirements, and speed up per-move computations. While pruning is highly effective in many cases, it can be less efficient on certain datasets, necessitating additional techniques or constraints to ensure computational feasibility. In such scenarios, our enhancement 
for simulating single-arc moves should prove particularly advantageous.

There are several avenues for further research. Our empirical study was limited to discrete Bayesian networks; comparison to existing samplers \citep{Annadani2023,Cundy2021} that can handle continuous and mixed data deserves a dedicated study. Naturally, a more systematic empirical study with varying dataset sizes and multiple performance metrics (for the accuracy of the posterior approximation) would be needed for getting a comprehensive understanding of the capacity and relative merits of different methods.   
Another direction is to adopt the new techniques in the partition MCMC framework \citep{Kuipers2017,Kuipers2022} to alleviate the current weaknesses of \bidag{}, namely frequent self-transitions in the space of node partitions and the lack of moves that are based on resampling parent sets. 
Finally, we believe the ideas we employed to expedite basic moves (upper bounding the acceptance ratio and simulating a simple geometric random variable) and to prune local scores (weighted domination to preserve a family approximate subset-sums) may also have applications in other contexts. 

\section*{Acknowledgements}
The research was partially supported by the Research Council of
Finland, grant~351156.

\bibliographystyle{abbrvnat}
\bibliography{paper}
\clearpage
\appendix
\onecolumn
\thispagestyle{empty}
\begin{center}
  {\LARGE \bfseries Scaling Up Bayesian DAG Sampling:\\[0.5em] Supplementary Materials}
\end{center}
\vspace{1.25em}
\renewcommand{\thesection}{\Alph{section}}
\setcounter{section}{0}

\section{ALGORITHMS AND DATA STRUCTURES}

The section proves the convergence of the chain with fast basic moves, describes the binary sum-tree data structure, proves Theorem~\ref{thm:pruning}, and gives bounds for the posterior mass lost in score pruning.

\subsection{Convergence of Fast Basic Moves}
\label{se:convergence}

We prove that the Markov chain constructed in Section~\ref{se:gibby} converges to the posterior $P(G | X) = \pi(G)/Z$, where $Z$ is a normalizing constant. To this end, it is sufficient to show that the chain is irreducible and aperiodic, and that the posterior is a stationary distribution of the chain \citep[Theorem~4.9, for example]{Levin2017}. We make the following mild assumptions:
\be 
\label{eq:assumption1}
&&\textrm{$\pi(G) > 0$ for all DAGs $G$ that contain exactly one arc}\,,\\
\label{eq:assumption2}	
&&\textrm{if $\pi(G) > 0$, then $G$ has an arc removing of which results in a DAG $G'$ with $\pi(G') > 0$}\,,\\
\label{eq:assumption3}
&&\textrm{$q_{ij} > 0$ for all node pairs $ij$}\,,\\ 
\label{eq:assumption4}
&&\textrm{$q_{ij} = q_{ji}$ for all node pairs $ij$}\,. 
\ee

Denote by $p(G^{ij} | G)$ the transition probability of the chain of moving in one step from DAG $G$ to DAG $G^{ij}$ obtained from $G$ by adding, removing, or reversing arc $ij$. Recall that, by Algorithm~B, we have 
\be \label{eq:trans}
	p(G^{ij} | G) = b \cdot \frac{b_{ij}}{b} = q_{ij} \min\bigg\{1, \frac{\pi^*(G^{ij})}{\pi(G)}\bigg\}
	= q_{ij} \min\bigg\{1, \frac{\pi(G^{ij})}{\pi(G)}\bigg\}\,.
\ee

Now, to see that the chain is irreducible, consider two arbitrary DAGs $G$ and $G''$. From $G$, we can reach the empty DAG by a sequence of arc removals, and from the empty DAG, we can reach $G''$ by a sequence of arc additions. Thus $G$ and $G''$ are connected by single-arc moves, each of which is taken with a positive transition probability \eqref{eq:trans} by assumption \eqref{eq:assumption2}.

To see that the chain is aperiodic, it suffices to show that the chain can return from the empty DAG to the empty DAG in both $2$ and $3$ steps. For example, we could add arc $ij$ and remove it, or we could add $ij$, reverse $ij$, and remove $ji$. By assumptions \eqref{eq:assumption1}--\eqref{eq:assumption3} and formula \eqref{eq:trans}, these steps have positive transition probabilities. 

Finally, to see that $\pi/Z$ is a stationary distribution of the chain, it suffices to show that the following detailed balance condition holds \citep[Prop.~1.20, for example]{Levin2017}: 
\bes
	\pi(G)p(G'|G) = \pi(G')p(G|G')
	\quad \textrm{for all DAGs $G$ and $G'$}\,.
\ees
If $G=G'$, the condition is trivial. Assume $G' = G^{ij}$ with some $ij$, for otherwise the transition probabilities vanish on both sides and the condition trivially holds. If $\pi(G) = 0$, then, by \eqref{eq:trans}, we have $p(G|G^{ij}) = 0$, and thus the condition holds. By symmetry, we may now assume that $\pi(G)$ and $\pi(G^{ij})$ are nonzero. Using \eqref{eq:trans}, we get 
\bes
	\pi(G)p(G^{ij}|G) = q_{ij} \min\big\{ \pi(G), \pi(G^{ij}) \big\}\,.
\ees
Now, if $G^{ij}$ is obtained by adding or removing $ij$, then we also have 
\bes
	\pi(G^{ij})p(G | G^{ij}) = q_{ij} \min\big\{ \pi(G^{ij}), \pi(G) \big\}\,,
\ees
and thus the condition holds. If $G^{ij}$ is obtained by reversing arc $ji$ to $ij$, then instead
\bes
	\pi(G^{ij})p(G | G^{ij}) = q_{ji} \min\big\{ \pi(G^{ij}), \pi(G) \big\}\,,
\ees
and the condition holds by assumption \eqref{eq:assumption4}. 

\subsection{Binary Sum-Tree}
\label{se:sumtree}

A binary sum-tree implements three operations:
\begin{description}
\item \texttt{Initialize}$(a)$ initializes the data structure with a vector $a = (a_1, \ldots, a_n)$ of nonnegative reals.
\item \texttt{Replace}$(i, x)$ replaces the $i$th value $a_i$ by $x$.
\item \texttt{Draw}$()$ returns a draw from the categorical distribution $(p_1, \ldots, p_n)$, where $p_i = a_i/\sum_j a_j$. 
\end{description}

The data structure is a balanced rooted binary tree with $n$ leaves. If $n$ is originally not a power of two, we pad the vector $a$ with a sufficient number of zeros.

Each node in the tree stores a value. The $i$th leaf stores $a_i$. The root and the internal nodes store the sum of the values stored by their two children. 

The data structure can be efficiently implemented using an array $A$ of $2n$ value (floating point numbers). 
The position of the $i$th leaf in $A$ is $n+i$. The position of the parent of $i$ is $\lfloor (n + i)/2\rfloor$.

The initialization operation first fills in the last $n$ positions of $A$ by the input values $a_1, \ldots, a_n$. Then it computes the values of all inner nodes and the root in a bottom-up fashion. This takes $O(n)$ time. 

Replacing $a_i$ by $x$ requires a constant-time update for each node from the path from the leaf to the root, thus taking time $O(\log n)$ in total. 

The draw operation can be implemented in time $O(\log n)$ as follows. First, a random number $r$ is drawn from the range $[0, s]$, where $s$ is the value of the root. Then, starting from the root, the left child is selected if $r$ is less or equal to its value $\ell$; otherwise the right child is selected and $r$ is updated to $r - \ell$. The previous step is repeated until a leaf node is reached. The value of the leaf is returned.

\subsection{Proof of Theorem~\ref{thm:pruning}}
\label{se:proof}

We prove the statement in a more general form, namely with
\bes
	\psi(j, S) 
	= \sum_{j \in R \subseteq S} f(R)\, w(R, S)\,,
\ees
only assuming that 
\be \label{eq:unity}
	\sum_{R \subseteq S \subseteq U} w(R, S) \leq 1\quad \textrm{for all $R \subseteq U \subseteq V$}\,.
\ee
Observe that this inequality holds for the function $w(R, S) = (1 + 1/K)^{|R| - K} K^{|R| - |S|}$, the interval sum then being equal to $(1 + 1/K)^{|U| - K} \leq 1$ by the binomial theorem. 

Let $T \subseteq U \subseteq V$. We bound the difference
\be
	\sum_{T \subseteq S \subseteq U} f(S) - \sum_{T \subseteq S \subseteq U} \tilde{f}(S) 
	&\leq& \sum_{T \subseteq S \subseteq U} \min_{j \in S}\big\{\epsilon\cdot \psi(j, S)\big\} \label{eq1}\\
	&\leq&\epsilon \cdot \sum_{T \subseteq S \subseteq U} \sum_{T \subseteq R \subseteq S} f(R)\, w(R, S) \label{eq2}\\
	&=&\epsilon \cdot \sum_{T \subseteq R \subseteq U} \sum_{R \subseteq S \subseteq U} f(R)\, w(R, S) \label{eq3}\\
	&\leq&\epsilon \cdot \sum_{T \subseteq R \subseteq U} f(R) \,, \label{eq4}	
\ee
proving the claim. Justifications for each step above: 
\eqref{eq1} For each set $S$, the difference is $0$ or at most $ \epsilon \cdot \psi(j, S)$ for $j \in S$.
\eqref{eq2} If $T = \emptyset$, the inner sum is without the constraint $j \in S$. Otherwise, $T = \{j\}$ for some $j\in S$.
\eqref{eq3} Switch the order of summation.
\eqref{eq4} Use the assumption \eqref{eq:unity}.

\subsection{Bounding the Posterior Mass Lost due to Pruning}
\label{se:lostmass}

\begin{theorem}\label{thm:lostmass}
Let $\epsilon \in (0, 1)$. For each node $i = 1, \ldots, n$, let $\tilde{\pi}_i$ be an $(\epsilon/n)$-pruning of $\pi_i$. For any DAG on the node set $\{1,\ldots,n\}$ with arc set $A$, let $\pi(A) = \prod_i \pi(A_i)$ and $\tilde{\pi}(A) = \prod_i \hat{\pi}_i(A_i)$. Let $Z = \sum_A \pi(A)$ and $\tilde{Z} = \sum_A \tilde{\pi}(A)$, where the sums are over all DAGs on the $n$ nodes. Then 
\begin{enumerate}[font=\bfseries, label=\emph{(\roman*)},itemindent=1em]
\item $\tilde{Z} \geq (1-\epsilon) Z$; \label{thm:i}
\item The Kullback--Leibler divergence of $\tilde{\pi}/Z$ from $\pi/Z$ is at most $\epsilon/(1-\epsilon)$; \label{thm:ii}
\item The total variation distance between $\pi/Z$ and $\tilde{\pi}/\tilde{Z}$ is at most $\epsilon$. \label{thm:iii}
\end{enumerate}
\end{theorem}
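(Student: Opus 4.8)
The plan is to reduce everything to the $\epsilon$-closeness property established in Theorem~\ref{thm:pruning}, applied node by node, and then propagate the per-node approximation error through the product and the normalization.

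\textbf{Step 1: From per-node closeness to a global lower bound on $\tilde{\pi}(A)$.} By Theorem~\ref{thm:pruning}, each $\tilde{\pi}_i$ is $(\epsilon/n)$-close to $\pi_i$, so in particular $(1-\epsilon/n)\,\pi_i(A_i) \leq \tilde{\pi}_i(A_i) \leq \pi_i(A_i)$ for every parent set $A_i$ (taking $T = U = A_i$, which satisfies $|T| \leq 1$ only when $|A_i| \leq 1$; so I actually need the weaker pointwise statement, which holds directly from the definition of pruning since $\tilde{\pi}_i(S)$ is either $0$ or $\pi_i(S)$, but to get the lower bound I should instead sum over a DAG and use closeness on the interval $[\emptyset, V]$ or telescope). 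The cleaner route: multiply the per-node two-sided bounds to get $(1-\epsilon/n)^n\,\pi(A) \leq \tilde{\pi}(A) \leq \pi(A)$ for every DAG $A$, and use $(1-\epsilon/n)^n \geq 1-\epsilon$ (Bernoulli). Summing over all DAGs gives $\tilde{Z} \geq (1-\epsilon)Z$, which is \ref{thm:i}. The subtlety here is that pointwise $\tilde{\pi}_i(A_i) \geq (1-\epsilon/n)\pi_i(A_i)$ is \emph{not} guaranteed by closeness alone (a single large-indegree set could be zeroed out); but the statement of Theorem~\ref{thm:lostmass} as written must be relying on the fact that the $Z$-sums are what matter, so I would prove \ref{thm:i} directly from $\epsilon$-closeness applied with $T = \emptyset$, $U = V$ to the full sum $Z_i := \pi_i[\emptyset, V]$, then argue that the relative error in each $Z_i$ propagates to $Z = \prod$-like... actually $Z$ is a sum over DAGs, not a product of $Z_i$, so the honest argument is: $Z - \tilde{Z} = \sum_A (\pi(A) - \tilde{\pi}(A)) = \sum_A \sum_i (\text{telescoping difference})$, bounding each term by swapping one $\pi_i$ for $\tilde{\pi}_i$ at a time and using closeness of the resulting restricted sum. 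I will carry out this telescoping carefully — this is the main technical obstacle.

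\textbf{Step 2: Kullback--Leibler divergence, \ref{thm:ii}.} Write $D_{\mathrm{KL}}(\pi/Z \,\|\, \tilde{\pi}/Z) = \sum_A (\pi(A)/Z)\log(\pi(A)/\tilde{\pi}(A))$. Using the pointwise bound $\tilde{\pi}(A) \geq (1-\epsilon)\pi(A)$ from Step 1 (established there for the quantities that matter, or pointwise via the product bound $(1-\epsilon/n)^n$), each log term is at most $\log\frac{1}{1-\epsilon} \leq \frac{\epsilon}{1-\epsilon}$, and since the $\pi(A)/Z$ sum to one, the whole divergence is bounded by $\epsilon/(1-\epsilon)$. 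This step is routine once Step 1's pointwise bound is in hand.

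\textbf{Step 3: Total variation distance, \ref{thm:iii}.} Here both distributions are normalized: $\mathrm{TV}(\pi/Z, \tilde{\pi}/\tilde{Z}) = \frac12 \sum_A |\pi(A)/Z - \tilde{\pi}(A)/\tilde{Z}|$. I would split using $0 \leq \tilde{\pi}(A) \leq \pi(A)$ and $Z \geq \tilde{Z} \geq (1-\epsilon)Z$: write $\frac{\pi(A)}{Z} - \frac{\tilde{\pi}(A)}{\tilde{Z}} = \frac{\pi(A) - \tilde{\pi}(A)}{Z} + \tilde{\pi}(A)\big(\frac1Z - \frac1{\tilde Z}\big)$. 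The first piece sums (in absolute value, and it is nonnegative) to $(Z - \tilde{Z})/Z \leq \epsilon$ by \ref{thm:i}; the second piece has constant sign and sums to $\tilde{Z}(\frac1Z - \frac1{\tilde{Z}}) = \tilde{Z}/Z - 1$, whose absolute value is $1 - \tilde{Z}/Z \leq \epsilon$. A naive triangle inequality then gives a bound of $\frac12(\epsilon + \epsilon) = \epsilon$ — but I should double-check the signs: $\pi(A)/Z - \tilde{\pi}(A)/\tilde{Z}$ can be negative (when $\tilde\pi(A)$ is close to $\pi(A)$ but $\tilde Z \ll Z$), so the cleanest bound is $\mathrm{TV} = \sum_{A:\,\pi(A)/Z > \tilde\pi(A)/\tilde Z}(\pi(A)/Z - \tilde\pi(A)/\tilde Z) \leq \sum_A \max\{0, (\pi(A)-\tilde\pi(A))/Z\} = (Z-\tilde Z)/Z \leq \epsilon$, using that on the positive-deficit set $\tilde\pi(A)/\tilde Z \geq \tilde\pi(A)/Z$. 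This is the slick one-line finish.

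The main obstacle is getting Step 1 exactly right: the $\epsilon$-closeness definition only controls interval sums with $|T| \leq 1$, not arbitrary pointwise values, so I must be disciplined about expressing $Z - \tilde Z$ as a telescoping sum whose individual terms are genuine interval-sum differences of single score functions (with the other coordinates fixed to specific parent sets), and then invoke closeness legitimately, absorbing the factor $n$ via $n \cdot (\epsilon/n) = \epsilon$. Once \ref{thm:i} and the consequent pointwise/aggregate lower bound $\tilde Z \geq (1-\epsilon)Z$ (and, where needed, $\tilde\pi(A) \geq (1-\epsilon/n)^n\pi(A)$) are secured, parts \ref{thm:ii} and \ref{thm:iii} follow by the short arguments above.
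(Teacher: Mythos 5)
Your Step 2 is broken, and in exactly the way the paper's own Remark warns about. You compute the \emph{forward} divergence $\sum_A (\pi(A)/Z)\log\bigl(\pi(A)/\tilde{\pi}(A)\bigr)$ and bound each logarithm using a pointwise inequality $\tilde{\pi}(A)\geq(1-\epsilon)\pi(A)$. But you yourself correctly observed in Step 1 that no such pointwise bound holds: pruning sets $\tilde{\pi}_i(S)=0$ for some sets $S$, so $\tilde{\pi}(A)=0$ for some DAGs with $\pi(A)>0$, and the forward KL is then $+\infty$ whenever any pruning actually occurs. The claim in the theorem is about the \emph{reverse} divergence, with the pruned distribution $\tilde{\pi}/\tilde{Z}$ as the averaging measure. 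The key structural fact you never invoke --- and which makes part (ii) a one-line computation --- is that pruning either zeroes a score or leaves it intact, so $\tilde{\pi}(A)\in\{0,\pi(A)\}$ for every $A$; hence on the support of $\tilde{\pi}$ the log-ratio is the constant $\ln(Z/\tilde{Z})$, and the divergence equals $\ln(Z/\tilde{Z})\leq\ln\frac{1}{1-\epsilon}\leq\frac{\epsilon}{1-\epsilon}$ by part (i). Without this observation Step 2 does not go through in either direction.

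On Step 1, your diagnosis of the difficulty is right and your telescoping plan is viable, but it is only a plan; the missing ingredient is the reason each telescoping term is a legitimate interval-sum difference, namely that once all arcs \emph{not} pointing into node $k$ are fixed to an acyclic configuration, the admissible parent sets of $k$ are exactly the subsets of its non-descendants, i.e.\ an interval $[\emptyset,U]$ with $|T|=0\leq 1$, to which $(\epsilon/n)$-closeness applies; summing the $n$ swaps then gives $Z-\tilde{Z}\leq\epsilon Z$. The paper instead expands $Z$ over ordered node partitions and proves a separate proposition extending closeness from interval sums to ``intersection sums'' $\sum_{S\subseteq U,\,S\cap T\neq\emptyset}f(S)$ via a telescoping over singletons of $T$; your route is more elementary but needs to be written out. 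Your Step 3 is correct and slightly slicker than the paper's (you bound the sum of positive parts by $(Z-\tilde{Z})/Z$ using only $\tilde{\pi}\leq\pi$ and $\tilde{Z}\leq Z$, whereas the paper computes the total variation exactly as $1-\tilde{Z}/Z$, again via $\tilde{\pi}\in\{0,\pi\}$).
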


\begin{remark}
Note that the Kullback--Leibler divergence of $\pi/Z$ from $\tilde{\pi}/Z$ is undefined (or $\infty$), unless there is no pruning, i.e., $\pi = \tilde{\pi}$. When $\pi/Z$ is considered a ``ground truth'' this variant is sometimes called the \emph{forward KL divergence}, while the variant in claim (ii) is called the \emph{reverse KL divergence}.
\end{remark}

For a proof, we first extend the approximation guarantee of interval sums to what we call intersection sums.

\begin{proposition}\label{prop:intersectionsums}
For a function $f$ from the subsets of a finite set $V$ to nonnegative real numbers, write 
\bes
	f(T; U) := \sum_{S \subseteq U : S \cap T \neq \emptyset} f(S)\,,
		\quad \textrm{for $T \subseteq U \subseteq V$}\,.
\ees
Suppose $\tilde{f}$ is $\delta$-close to $f$ with $\delta > 0$. Then $\tilde{f}(T; U) \geq (1-\delta) f(T; U)$ for all $T \subseteq U \subseteq V$.
\end{proposition}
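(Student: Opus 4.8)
The plan is to express the intersection sum $f(T;U)$ as a sum of ordinary interval sums whose lower endpoints are \emph{singletons}, so that the $\delta$-closeness hypothesis applies to each term. The tempting shortcut $f(T;U) = f[\emptyset,U] - f[\emptyset,U\setminus T]$ is of no use here: $\delta$-closeness only bounds $\tilde f[\emptyset,U\setminus T]$ from above by $f[\emptyset,U\setminus T]$, and that inequality points the wrong way once the term is subtracted. So I would instead decompose the family $\{S\subseteq U : S\cap T\neq\emptyset\}$ \emph{disjointly}.

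Fix any linear order $t_1,\dots,t_k$ on the elements of $T$. Every $S\subseteq U$ with $S\cap T\neq\emptyset$ has a well-defined $\le$-least element of $T$, say $t_j$; this sorts the family into $k$ classes, where class $j$ is $\{S\subseteq U : t_j\in S,\ t_1,\dots,t_{j-1}\notin S\}$, which is exactly the Boolean interval $[\{t_j\},\,U\setminus\{t_1,\dots,t_{j-1}\}]$ (well-defined because $T\subseteq U$ forces $t_j\in U\setminus\{t_1,\dots,t_{j-1}\}$). Summing over the classes gives
\[
 f(T;U)=\sum_{j=1}^{k} f\big[\{t_j\},\,U\setminus\{t_1,\dots,t_{j-1}\}\big],
\]
and, since this is a purely set-theoretic partition of the index set, the same identity holds verbatim with $\tilde f$ in place of $f$ (and both sides reduce to $0$ when $T=\emptyset$).

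Finally, I would apply $\delta$-closeness to each of the $k$ interval sums on the right: each lower endpoint $\{t_j\}$ has size $1$, so $\tilde f[\{t_j\},\cdot]\ge(1-\delta)\,f[\{t_j\},\cdot]$. Summing over $j$ and invoking the two decompositions yields $\tilde f(T;U)\ge(1-\delta)\,f(T;U)$. The only point that needs care — and the step I would double-check — is that each class really is an interval of the subset lattice, i.e.\ that $t_j$ survives in $U\setminus\{t_1,\dots,t_{j-1}\}$ and that the classes are disjoint and exhaustive; all of this is immediate from the ``least element of $T$ in $S$'' bookkeeping, and everything else is a one-line summation. This proposition then feeds into the proof of Theorem~\ref{thm:lostmass}, where intersection sums arise when quantifying the posterior mass removed by pruning.
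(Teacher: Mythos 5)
Your proof is correct and is essentially identical to the paper's: the paper also partitions the sets $S$ with $S\cap T\neq\emptyset$ by the first labelled element of $T$ they contain, writes $f(T;U)=\sum_{j} f[\{t_j\},\,U\setminus\{t_1,\dots,t_{j-1}\}]$, and applies $\delta$-closeness termwise to the singleton-bottomed interval sums. Your additional remark on why the inclusion--exclusion shortcut fails (the upper bound $\tilde f\le f$ points the wrong way after subtraction) is a correct and worthwhile observation, but the core argument matches the paper's.
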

\begin{proof}
Label the elements of $T$ arbitrarily as $t_1,\ldots,t_\ell$. Write in terms of interval sums:
\bes
	f(T; U) = \sum_{j=1}^\ell f\big[\{t_j\}, U\setminus \{t_1,\ldots,t_{j-1}\}\big]\,.
\ees
Now the claim follows because $\tilde{f}[T', U'] \geq (1-\delta) f[T', U']$ for each term in the sum.
\end{proof}

Armed with the above result, we turn to prove claim (i). By Equation~3 in \cite{Kuipers2017}, and also Section~3.1 in \cite{Viinikka2020}, we can write 
\bes
	Z = \sum_A \pi(A) = \sum_R \prod_{j=1}^k \prod_{i \in R_i} 
		\sum_{\substack{A_i \subseteq R_1 \cup \cdots \cup R_{j-1}\\ A_i \cap R_{j-1} \neq \emptyset}} \pi_i(A_i)\,,
\ees
where the sum is over all ordered set partitions $R = (R_1, \ldots, R_k)$ of $\{1,\ldots,n\}$. By writing $\tilde{Z}$ correspondingly and using the assumption that each $\tilde{\pi}_i$ is $(\epsilon/n)$-close to $\pi_i$, Proposition~\ref{prop:intersectionsums} gives that each sum of $\tilde{\pi}_i(A_i)$ is at least $1-\epsilon/n$ times the corresponsing sum of $\pi_i(A_i)$. Thus
\bes
	\tilde{Z} \geq (1-\epsilon/n)^n Z \geq (1-\epsilon) Z\,.
\ees

To prove claim (ii), write the Kullback--Leibler divergence as 
\bes
	\sum_{A : \tilde{\pi}(A) > 0} \tilde{\pi}(A)/\tilde{Z} \cdot \ln \frac{\tilde{\pi}(A)/\tilde{Z}}{\pi(A)/Z}
	= \ln \frac{Z}{\tilde{Z}} 
	\leq \ln \frac{1}{1-\epsilon}
	= \ln\Big(1 +\frac{\epsilon}{1-\epsilon}\Big) 
	\leq \frac{\epsilon}{1-\epsilon}\,.
\ees
Here the first equality holds because $\tilde{\pi}(A) = \pi(A)$ when $\tilde{\pi}(A) > 0$. The first inequality follows from part (i) and the second from the fact that $\ln(1 + z) \leq z$ for all $z \geq -1$.

To prove claim (iii), write the total variation distance as
\bes
	\frac{1}{2}\sum_A \Delta(A)\,,
		\quad \textrm{where $\Delta(A) := |\pi(A)/Z - \tilde{\pi}(A)/\tilde{Z}|$}\,.
\ees
Now observe that $\Delta(A) = \pi(A)/Z$ if $\tilde{\pi}(A) = 0$, and $\Delta(A) = (Z/\tilde{Z}-1)\pi(A)/Z$ otherwise. We get 
\bes
	\sum_A \Delta(A) 
	&=& \sum_{A : \tilde{\pi}(A) = 0} \Delta(A) + \sum_{A : \tilde{\pi}(A) > 0} \Delta(A)\\
	&=& \sum_{A : \tilde{\pi}(A) = 0} \pi(A)/Z + (Z/\tilde{Z}-1) \sum_{A : \tilde{\pi}(A) > 0} \pi(A)/Z\\
	&=& 1 - \sum_A \tilde{\pi}(A)/Z + (Z/\tilde{Z}-1) \sum_A \tilde{\pi}(A)/Z\\
	&=& 2 (1 - \tilde{Z}/Z)\\
	&\leq& 2 \epsilon\,,	 
\ees
where the last inequality follows from claim (i). This completes the proof.

\subsection{Exact Sampling and Summing over DAGs}
\label{se:exact}

The exact arc posterior probabilities can be computed in time $O(3^n n)$ by the algorithm of \citet{Tian2009}. An efficient implementation of this algorithm is given by \citet{Pensar2020}, who also extend it to compute, within the same complexity, the ancestor relation probabilities for all pairs of nodes. We used this implementation from the \emph{BIDA} package (released under the MIT license, \url{https://github.com/jopensar/BIDA}) in the present work for datasets on 20 or fewer variables.

There are also exponential-time algorithms for exact sampling of DAGs from modular distributions \citep{Harviainen2024,Talvitie2019uai}. With $t$ independently sampled DAGs, one can estimate the probability of any structural feature to within an absolute error of $O\big(1/\!\sqrt{t}\big)$ with high probability, thanks to basic properties of Monte Carlo integration. These sampling algorithms can be particularly useful in (rare) cases where the exact algorithms encounter numerical challenges, such as catastrophic cancellations. 
We faced such a case with the dataset generated from the \child{} BN and used the implementation of \citet{Harviainen2024} (released in the supplementary material of the article, with no specified license) to estimate the arc posterior probabilities using a sample of 10,{}000 DAGs.

\section{BAYESIAN DAG SAMPLERS}
\label{se:samplers}

\begin{table*}[t!]
\caption{Recent Bayesian DAG samplers}
\label{table:samplers}
\medskip
\centering
{\small
\begin{tabular}{p{120pt}p{130pt}p{200pt}}
\toprule
Name \& reference & Approach & Notes \\
\midrule
\addlinespace[4pt]
\gibby{} \newline \emph{this paper} & MCMC in the DAG space & New algorithmic techniques to expedite both basic moves and moves requiring sums over parent sets. \\
\addlinespace[4pt]
\bidag{} \newline \citet{Kuipers2022} & MCMC in the space of ordered node partitions & Fast moves between partitions using a lookup table for score sums, assuming a small set of candidate parents.\\
\addlinespace[4pt]
\daggflow{} \newline \citet{Deleu2022} & Independent sampling using generative flows & Assumes a GPU is available. \\
\addlinespace[4pt]
\cmidrule(l{12pt}r{12pt}){1-3}
\addlinespace[4pt]
\emph{BayesDAG} \newline \citet{Annadani2023} &  Stochastic gradient MCMC and variational inference in the joint structure--parameter space & Assumes continuous data and an additive-noise model. Does not apply to categorical data. \\
\addlinespace[4pt]
\emph{BCD Nets} \newline \citet{Cundy2021} & Variational inference with a limited class of approximating distributions & Assumes continuous data and a linear model. \\ 
\addlinespace[4pt]
\emph{Gadget} \newline \citet{Viinikka2020} & MCMC in the space of ordered node partitions & A more scalable implementation 
of \bidag{}. Poor accuracy reported \citep{Deleu2022}. Code apparently not available at the moment.\\ 
\addlinespace[4pt]
\emph{Layering-MCMC} \newline \citet{Viinikka2020uai} & MCMC in the space of layers of ordered node partitions & The implementation not made to be scalable to networks on dozens of nodes. Suffers from similar mixing problems than partition MCMC that does not use parent set resampling moves. For \zoo{} MAD values around $0.05$ reported \citep[Fig.~4]{Viinikka2020uai}.\\
\addlinespace[4pt]
\emph{Gibbs} \newline \citet{Goudie2016} & MCMC in the DAG space & Slow for larger max-indegree. Could be improved by the pruning technique presented in this paper. \\
\addlinespace[4pt]
\emph{BEANDisco} \newline \citet{Niinimaki2016} & MCMC and annealed importance sampling in the space of bucket orderings of nodes & Non-uniform structure prior over equivalent DAGs.\\
\addlinespace[4pt]
\bottomrule
\end{tabular}
}
\end{table*}

Table~\ref{table:samplers} gives a summary of some recent methods for Bayesian sampling of DAGs. The first three methods are empirically compared in this paper. The rest are included to highlight related methods we did not include in our empirical study for varying reasons: (i) the method or its available implementation is not applicable to categorical data or does not support score equivalent structure priors (\emph{BayesDAG}, \emph{BCD Nets}, \emph{BEANDisco}), (ii) the method takes the same approach as \gibby{} and, therefore, can be improved by the techniques presented in this paper (\emph{Gibbs}), or (iii) the method takes the same approach as one of the tested methods and the available implementation either is not designed to be scalable or appears to suffer some technical issues (\emph{Layering-MCMC}, \emph{Gadget}).

\paragraph{\daggflow{}.} We used \daggflow{} (released under the MIT license) with its default settings: 
We ran the optimization phase with 100{,}000 iterations and a learning rate of $10^{-5}$ and then sampled 1{,}000 DAGs.
We were not able to run \daggflow{} on datasets over 17 or more variables: the process froze and made no progress even after 12 hours; see Section~\ref{se:environment} for technical information about our computing environment. We tested using a dataset of size 1,{}000 generated from the \child{} benchmark BN ($n = 20)$, the \zoo{} dataset of size 101 ($n=17$), and the \emph{Cervical Cancer Behavior Risk} dataset of size 72 ($n = 20$) from the UCI Machine Learning Repository~\mbox{(\url{https://doi.org/10.24432/C5402W})}. 

\paragraph{\bidag{}.} We used \bidag{} (released under the GPL-2 and GPL-3 licenses) with its default settings, as follows. 
For the small networks, we aimed at running times ranging from 20 to 40 minutes. To match this range, we let the sampler perform 
$6 \cdot 10^6$ iterations for \asia{}, $7\cdot 10^6$ iterations for \sachs{} and \zoo{}, and $4\cdot 10^6$ iterations for \child{}. We set the thinning parameter to reduce the sampled DAGs to $1\cdot 10^5$.
For the larger networks we aimed at running times ranging from 10 to 15 hours. Consequently, we let the sampler perform $1\cdot 10^8$ iterations for \hailfinder{} and $4\cdot 10^7$ iterations for \andes{}. The thinning parameter was set to $10^4$ for both datasets. Despite the lower number of iterations for \andes{}, we used the same thinning factor to ensure that DAG sampling does not dominate the total time requirement (moves between ordered partitions being fast as they do not require computationally expensive DAG sampling).
In all runs, the iterative order sampling procedure was used to find a selections of candidate parents for each node. 

\paragraph{\gibby{}.} We used \gibby{} in the sampling accuracy studies (Section~\ref{se:comparison}) as follows.  
We tuned the parameters to match the running time of \bidag{} and to output the same number of DAGs (except for \andes{}, for which we generated $10^4$ DAGs). 
For the small networks, we let \gibby{} take 100 basic moves (i.e., steps) and 2 REV moves per 1 MBR move.  
For the larger networks, the MBR move proved particularly useful and we let \gibby{} take 50 basic moves and 1 REV move per 1 MBR move. 
In all runs, we set the accuracy parameter $\epsilon$ to $2^{-15}$ and started from the empty DAG. 

\section{COMPUTING ENVIRONMENT}\label{se:coenv}
\label{se:environment}

We have implemented \gibby{} in \cpp{} and compiled using \emph{GCC} with the flags \texttt{-march=native -O3}. The source code is available at \url{https://github.com/sums-of-products/gibby}.

\gibby{} and \bidag{} were run as single-thread processes on a cluster with AMD EPYC 7452 processors with 32 cores, running at 2,{}350 MHz, and with 504 GB of RAM.

\daggflow{} was run on a machine with an NVIDIA GeForce RTX 4070 card (5,{}888 CUDA cores).

\section{DATA GENERATION AND PREPROCESSING}

The benchmark BNs we used are from Scutari's Bayesian Network Repository \citep{ScutariBNrepository} and  licensed under CC BY-SA 3.0.

The datasets we used are from the UCI Machine Learning Repository \citep{UCIrepo} and licensed under CC~BY~4.0. 

From the benchmark Bayesian networks, data were generated in a standard way by forward sampling. We only included in our tests one generated dataset of each size. We acknowledge that independently generated replicas of the datasets could yield slightly different numerical results.  

No further preprocessing of the data was done. (All variables are discrete, no discretization was needed. There are no missing values. We kept all variables.)

\section{EMPIRICAL RESULTS}

This section reports some additional empirical results. 

\subsection{\gibby{} versus \GC{}}
\label{se:gibbymore}

Table~\ref{table:versusgcmore} shows the speed of \GC{} and \gibby{} on datasets of size 2,{}000 and 500. Both algorithms are faster on the larger datasets when measured by the number of steps taken in a second. This is explained by the smaller acceptance ratio, which is associated with more steps corresponding to rejected proposals and thus less computation required for updating data structures. 

However, the largest of the networks, \pigs, shows a different pattern: The tentative acceptance rate $b$ is lower for the smaller dataset, which helps \gibby{}. On the contrary, \GC{} becomes somewhat slower on the smaller dataset, which might be explained by the larger acceptance rate $a$ (\pigs{} being an exception, however).

\begin{table*}[t!]
\caption{Speed of \ourbasic{} vs.\ \emph{GC} on datasets sampled from benchmark Bayesian networks}
\label{table:versusgcmore}
\medskip
\centering
{\small
\begin{tabular}{rcccccccc}
\toprule
   &       &      &       & \multicolumn{3}{c}{Steps per microsecond} & \multicolumn{2}{c}{Acceptance ratio (\%)}  \\
\cmidrule(l{4pt}r{4pt}){5-7} \cmidrule(l{4pt}r{4pt}){8-9}
Network & Nodes & Arcs & Max-indeg & \GC{} & \ourbasic{}$^{\dagger}$ & \ourbasic{}$^{\ddagger}$ & $a$ & $b$ \\
\midrule 
\multicolumn{9}{c}{Dataset size 2,{}000}\\
\addlinespace[2pt]
\alarm	&  37 &  46 & 4 & 3.6 & {\bf76} & 51 & 0.35 & 4.4  \\
\hailfinder	&  56 &  66 & 4 & 3.2 & {\bf106} & 92 & 0.24 & 1.3  \\
\pathfinder	& 109 & 195 & 5 & 3.0 & {\bf31} & 30 & 0.28  & 1.5 \\
\andes 		& 233 & 338 & 6 & 1.9 & 59 & {\bf65} & 0.038 & 0.21  \\
\pigs 		& 441 & 592 & 2 & 3.1 & 617 & {\bf628} & 0.014  & 0.14  \\
\addlinespace[2pt]
\cmidrule(l{12pt}r{12pt}){1-9} 
\multicolumn{9}{c}{Dataset size 500}\\
\addlinespace[2pt]
\alarm	&  37 &  46 & 4 & 3.3 & {\bf58} & 46 & 0.61 & 3.5  \\
\hailfinder	&  56 &  66 & 4 & 3.0 & {\bf70} & 68 & 0.58  & 1.1 \\
\pathfinder	& 109 & 195 & 5 & 1.7 & {\bf22} & 21 & 0.40 & 1.2  \\
\andes 		& 233 & 338 & 6 & 1.5 & 31 & {\bf36} & 0.075 & 0.15  \\
\pigs 		& 441 & 592 & 2 & 2.2 & {\bf1068} & 984 & 0.0083 & 0.064   \\
\addlinespace[2pt]
\bottomrule
\multicolumn{5}{l}{\scriptsize $^{\dagger}$Constant-time acyclicity checks by maintaining the ancestor relation} &
\multicolumn{3}{l}{\scriptsize $^{\ddagger}$Acyclicity checks by path-finding algorithms}
\end{tabular}
}
\end{table*}

\subsection{Efficiency of Score Pruning}
\label{se:pruningmore}

Table~\ref{table:pruningmore} shows the efficiency of score pruning on datasets of size 2,{}000 and 500. We observe that the data size has a clear effect on pruning efficiency, the larger size either lowering or increasing the fraciton of kept parent sets. 
For the \hailfinder{} and \pigs{} datasets, the savings are lower on the smaller size (500), while for the other cases pruning is more efficient for the larger size (2,{}000). 

\begin{table*}[t!]
\caption{Efficiency of pruning on datasets sampled from benchmark Bayesian networks} 
\label{table:pruningmore}
\medskip
\centering
{\small
\begin{tabular}{rcccw{c}{55pt}w{c}{30pt}w{c}{30pt}w{c}{55pt}w{c}{30pt}w{c}{30pt}}
\toprule
 & & & & \multicolumn{3}{c}{$\epsilon = 2^{-15}$} & \multicolumn{3}{c}{$\epsilon = 2^{-10}$} \\
\cmidrule(l{4pt}r{4pt}){5-7} \cmidrule(l{4pt}r{4pt}){8-10}
Network & Nodes & \!\!\!Max-indeg & $K$$^{\dagger}$ & Kept (\%) & \TimeB$^{*}$ & \TimeC & Kept (\%) & \TimeB & \TimeC \\
\midrule 
\multicolumn{10}{c}{Dataset size 2000}\\
\addlinespace[2pt]
\alarm		&  37 &  4 & 36 & 4.0 & 9.4 & 17 & 3.0 & 8.7 & 17 \\
\hailfinder	&  56 &  4 & 55 & 0.14 & 9.3 & 151 & 0.13 & 9.1 & 150 \\
\pathfinder	& 109 &  5 & 64 & 14 & 9930 & 24200 & 9.3 & 8540 & 24900 \\
\andes		& 223 &  6 & 64 & 0.055 & 5470  & -- & 0.017 & 1980  & -- \\
\pigs		& 441 &  2 & 64 & 35 & 13 & 13 & 34 & 13 & 13 \\
\pigs		& 441 &  \hspace{4pt}4$^{\ddagger}$ & 64 & 0.35 & 211 & 6810 & 0.34 & 206 & 6980 \\
\addlinespace[2pt]
\cmidrule(l{12pt}r{12pt}){1-10} 
\multicolumn{10}{c}{Dataset size 500}\\
\addlinespace[2pt]
\alarm		&  37 &  4 & 36 & 8.8 & 7.6 & 11 & 4.9 & 6.7 & 11 \\
\hailfinder	&  56 &  4 & 55 & 0.067 & 5.1 & 98 & 0.064 & 5.0 & 98 \\
\pathfinder	& 109 &  5 & 64 & 28 & 9004 & 16627 & 19 & 7829 & 14563 \\
\andes		& 223 &  6 & 64 & 0.46 & 21711  & -- & 0.10 & 7154  & -- \\
\pigs		& 441 &  2 & 64 & 14 & 8.7 & 8.7 & 13 & 8.7 & 8.7 \\
\pigs		& 441 &  \hspace{4pt}4$^{\ddagger}$ & 64 & 0.076 & 73 & 4463 & 0.070 & 65 & 4380 \\
\addlinespace[2pt]
\bottomrule
\multicolumn{10}{l}{\scriptsize 
$^{\dagger}$No.\ candidate parents per node\;
$^{*}$No.\ seconds by bottom-up (\TimeB) and complete pruning (\TimeC)\;
$^{\ddagger}$Larger than in the generating network
}

\end{tabular}
}
\end{table*}

\subsection{Accuracy of Posterior Sampling}
\label{se:accuracymore}

For the experiments, we had to set the max-indegree parameter, which gives an upper bound for the number of parents of any node.
When viewed as part of the structure prior, the value of the parameter affects the (true) posterior distribution. Since we were able to deal with the exact posterior distributions for the small networks ($n \leq 20$), we set the max-indegree as large a value as possible. 
We summarize these choices in Table~\ref{table:maxindegree}.

\begin{table*}[t!]
\caption{The values of the max-indegree parameter used in the experiments}
\label{table:maxindegree}
\medskip
\centering
{\small
\begin{tabular}{rccp{210pt}}
\toprule
Name & Nodes & Max-indeg & Explanation \\
\midrule
\addlinespace[2pt] 
\multicolumn{4}{c}{Small networks}\\
\addlinespace[2pt]
\asia & 8 & 7 & The largest possible\\
\addlinespace[2pt]
\sachs & 11 & 10 & The largest possible \\
\addlinespace[2pt]
\child & 20 & 9 & The total number of possible parent sets to not exceed $10^7$ \\
\addlinespace[2pt]
\zoo{} & 17 & 13 & The maximum number of parents \bidag{} can handle (the \texttt{hardlimit} parameter)\\
\addlinespace[2pt]
\cmidrule(l{12pt}r{12pt}){1-4} 
\multicolumn{4}{c}{Large networks}\\
\addlinespace[2pt]
\hailfinder & 56 & 4 & The max-indegree of the data-generating DAG\\
\addlinespace[2pt]
\andes & 233 & 6 & The max-indegree of the data-generating DAG\\
\addlinespace[4pt]
\bottomrule
\end{tabular}
}
\end{table*}

\begin{figure*}[t!]
\begin{center}
{\small
\begin{tabular}{ccc}
\multicolumn{3}{c}{\asia{} ($n = 8$)} \\
\midrule
\addlinespace[4pt]
\gibby{} & \bidag{} & \daggflow{} \\
\hspace{-10pt}
\includegraphics[height=0.23\textwidth]{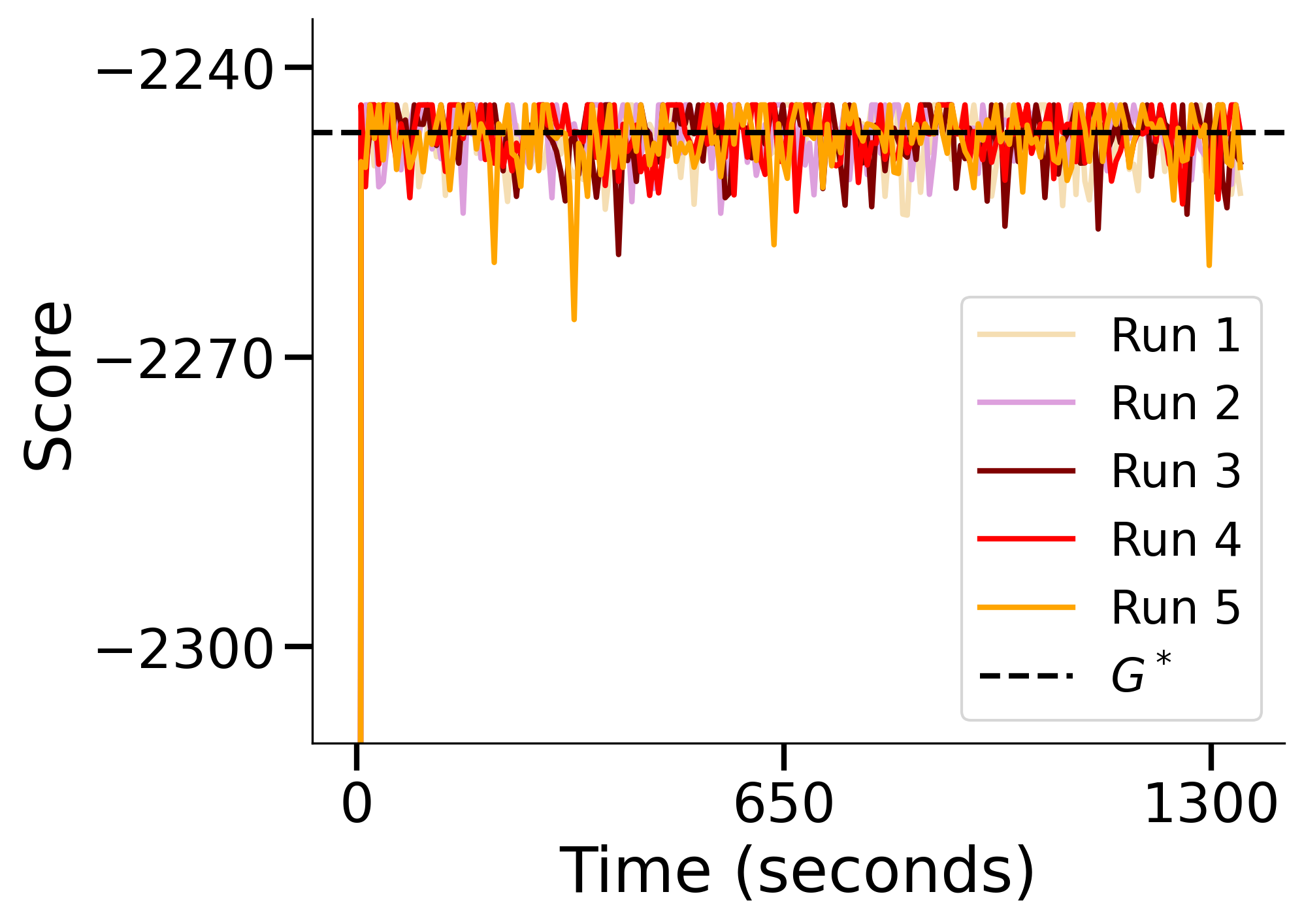} &  
\hspace{-10pt}
\includegraphics[height=0.23\textwidth]{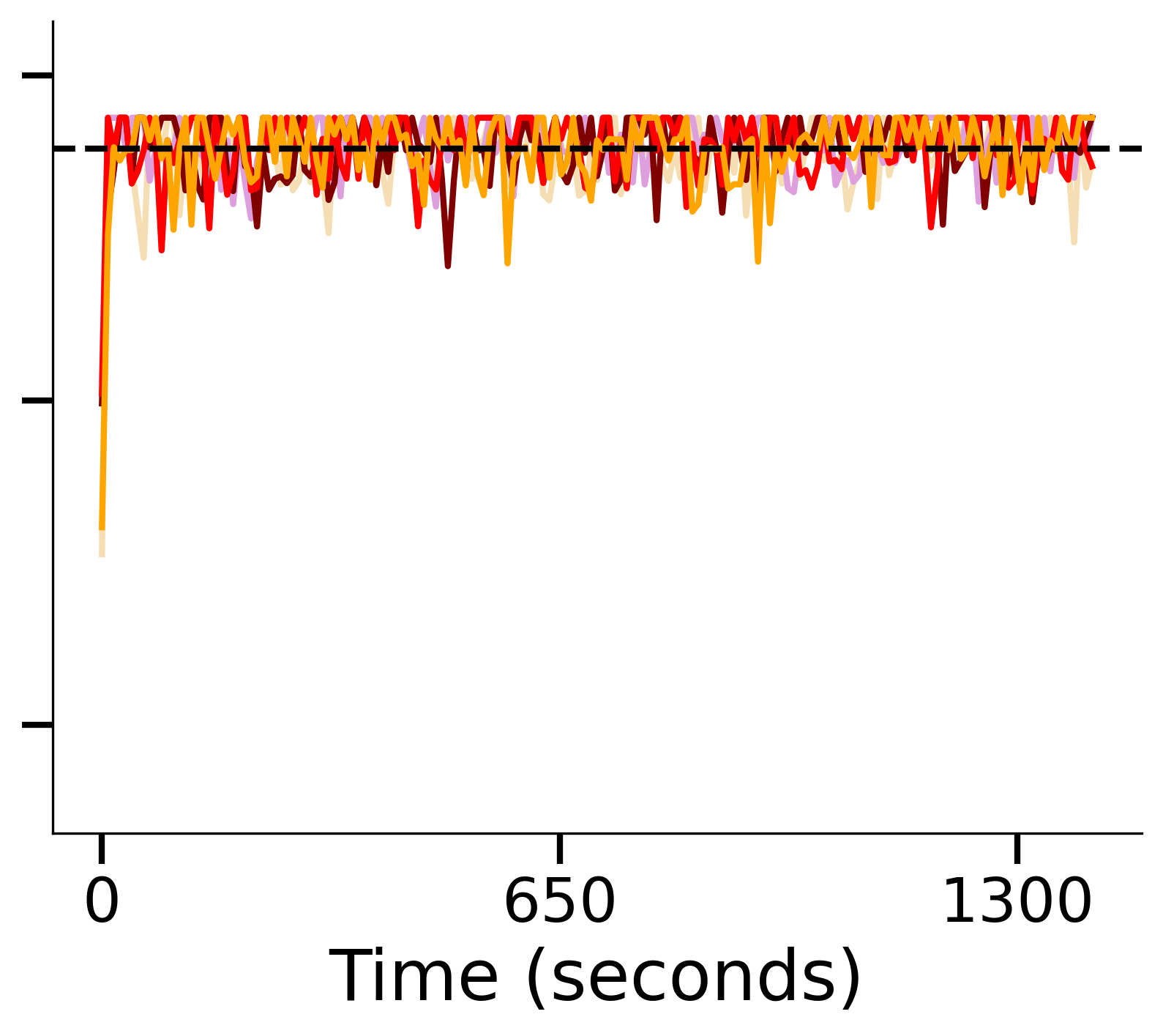} & 
\hspace{-10pt}
\includegraphics[height=0.23\textwidth]{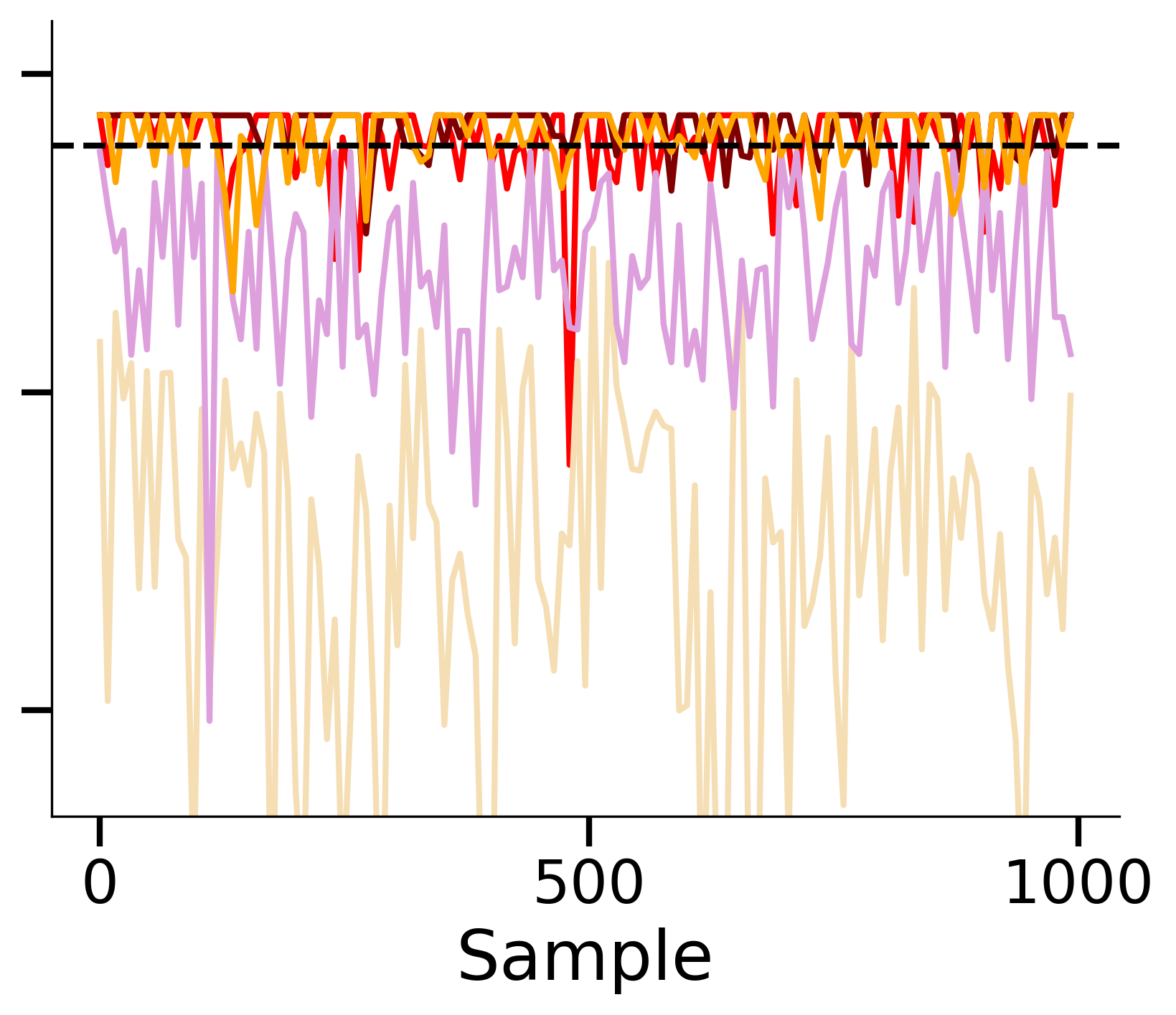} \\ 
\addlinespace[4pt]
\includegraphics[height=0.23\textwidth]{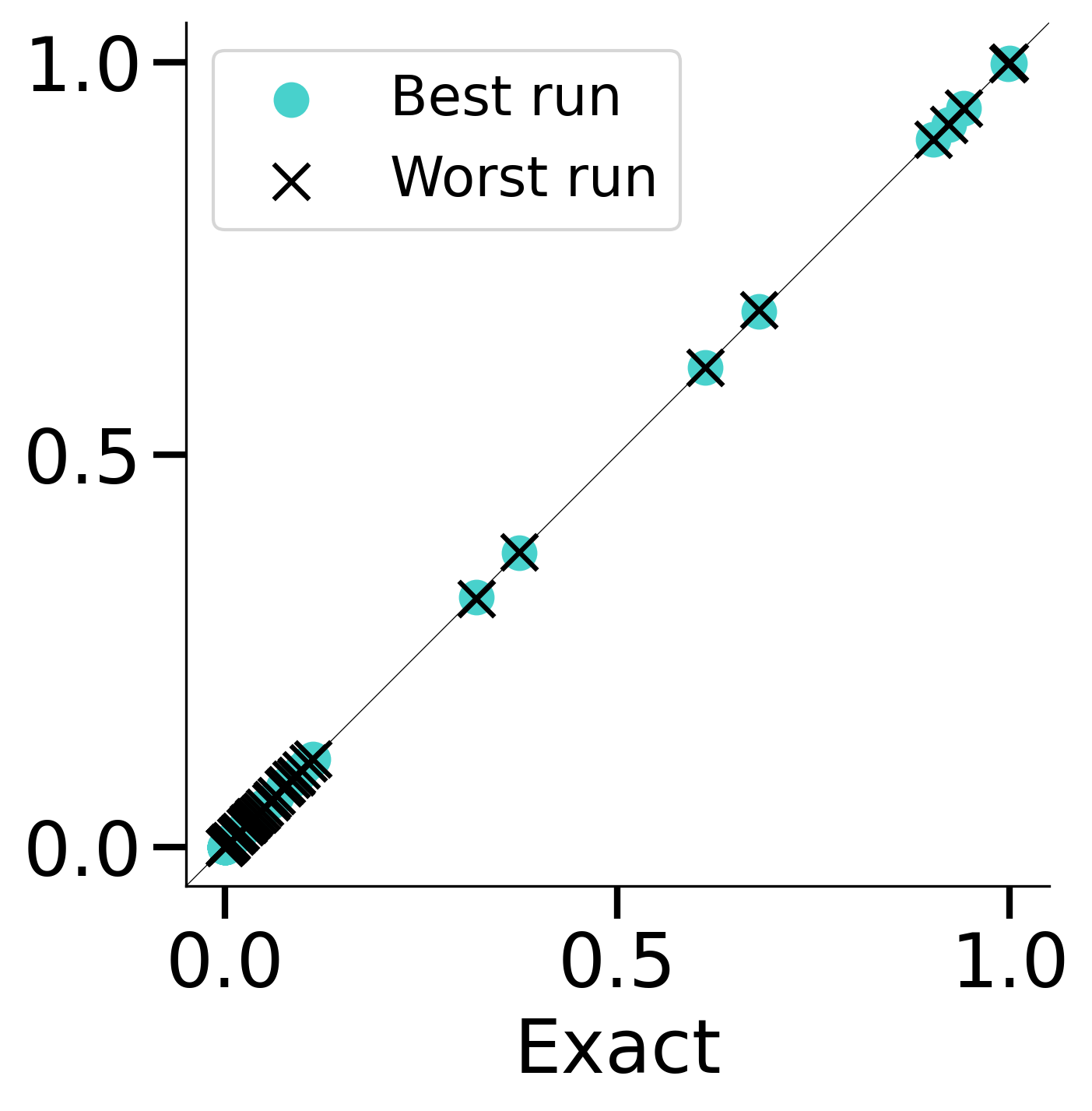} & 
\hspace{-20pt}
\includegraphics[height=0.23\textwidth]{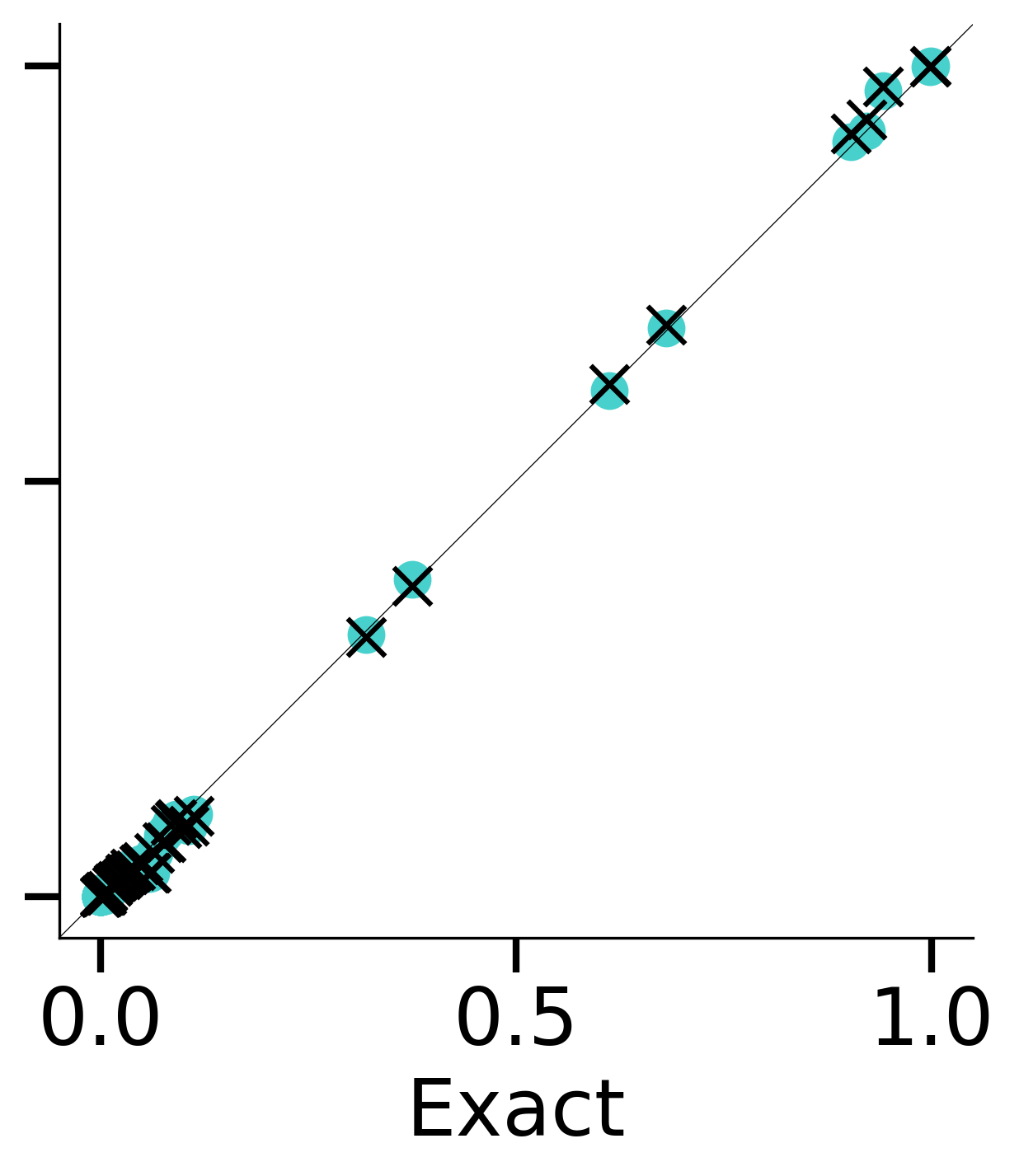} & 
\hspace{-20pt}
\includegraphics[height=0.23\textwidth]{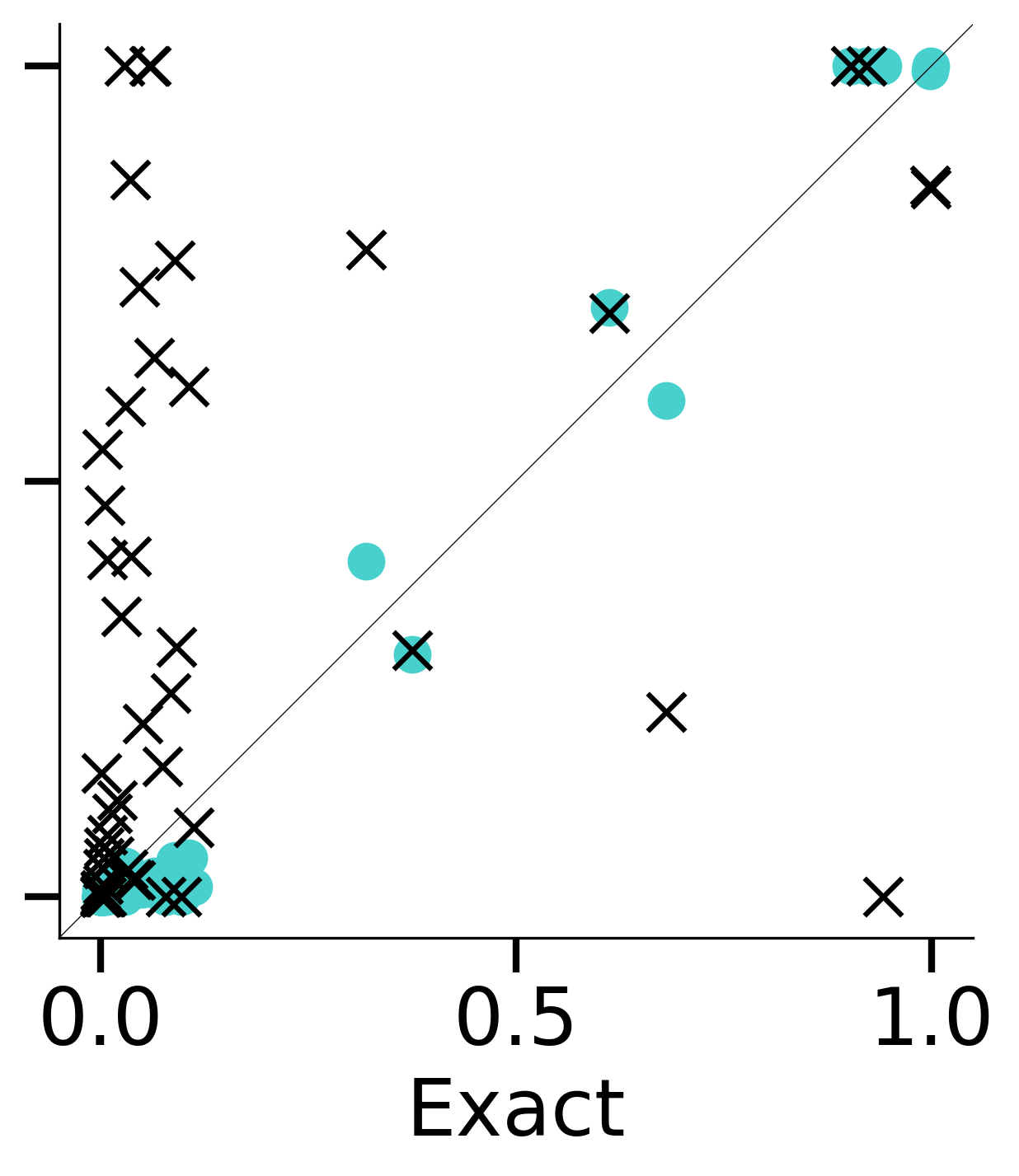} \\ 
\end{tabular}
}
\caption{\label{fig:traceasia}
Performance of DAG samplers on an \asia{} dataset of size 1,{}000.
\emph{Top:} The log posterior probability of the sampled DAG in five independent runs. The posterior probability of the data-generating DAG $G^*$ is marked by a horizontal black line.
\emph{Bottom:} The estimated arc posterior probabilities at the end of the runs, against the exact values. For each node pair, the best and the worst estimate over the five runs are shown.
}
\end{center}
\end{figure*}

\begin{figure*}[t!]
\begin{center}
{\small
\begin{tabular}{ccc}
\multicolumn{3}{c}{\sachs{} ($n = 11$)} \\
\midrule
\addlinespace[4pt]
\gibby{} & \bidag{} & \daggflow{} \\
\hspace{-10pt}
\includegraphics[height=0.23\textwidth]{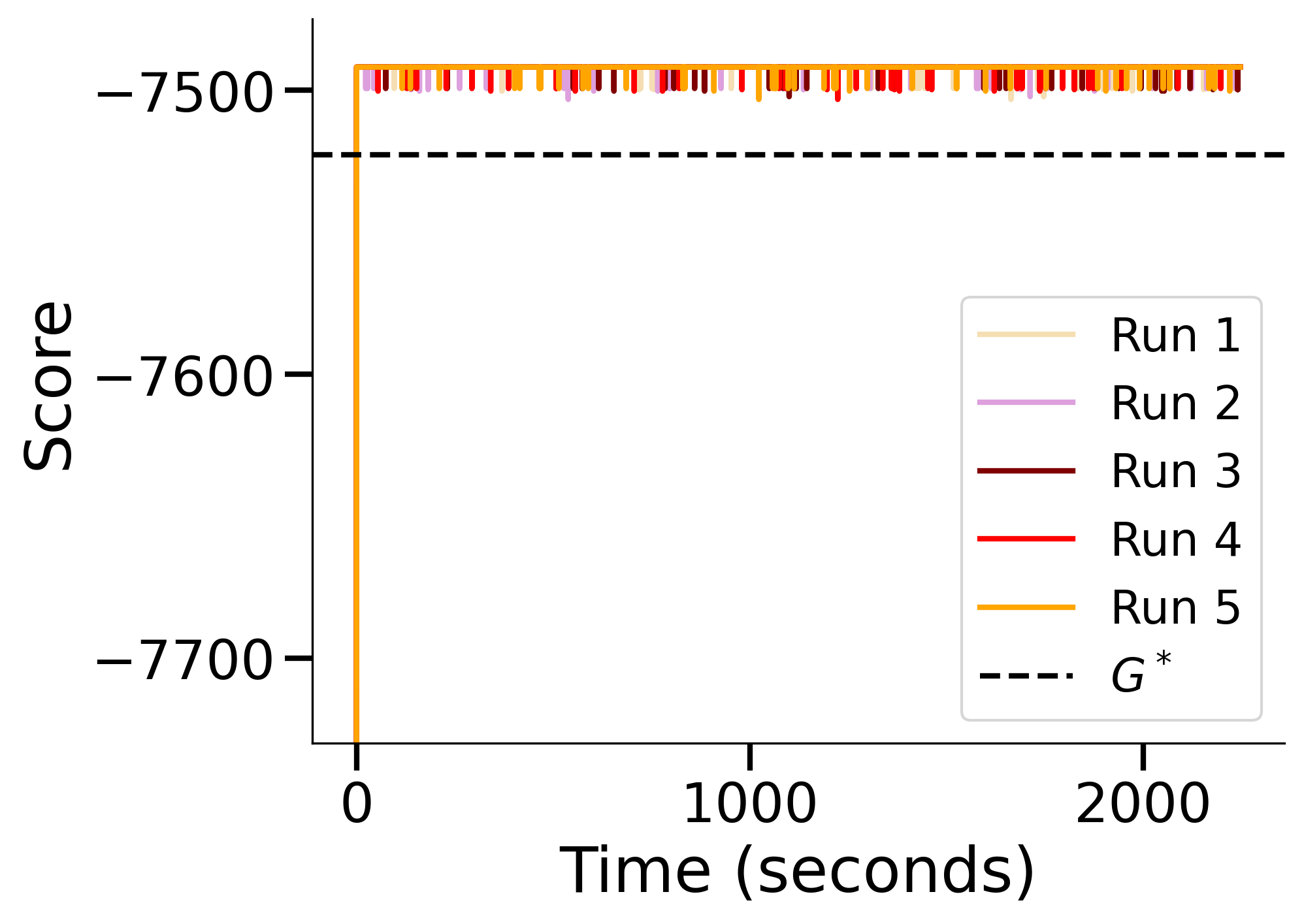} &  
\hspace{-10pt}
\includegraphics[height=0.23\textwidth]{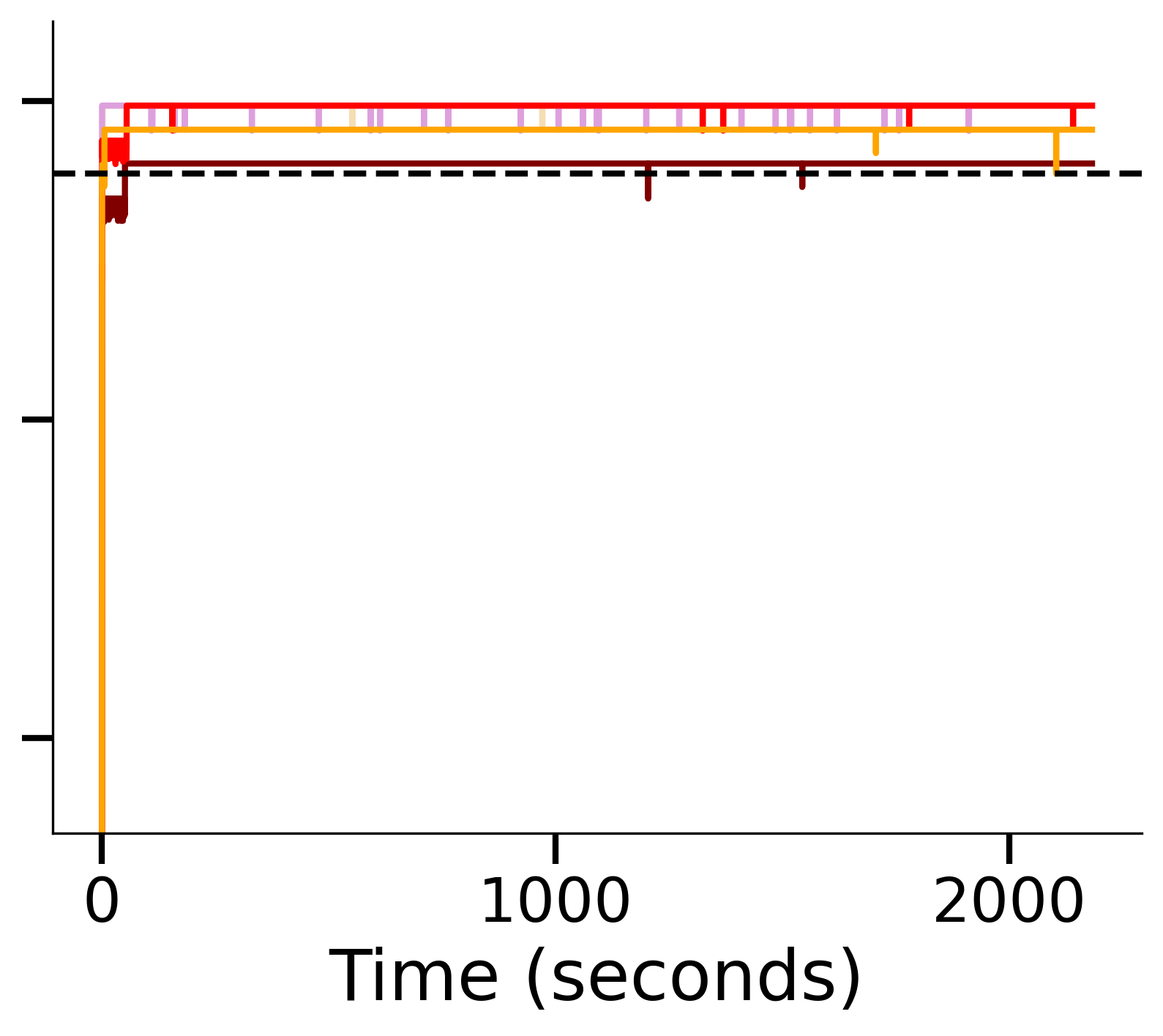} & 
\hspace{-10pt}
\includegraphics[height=0.23\textwidth]{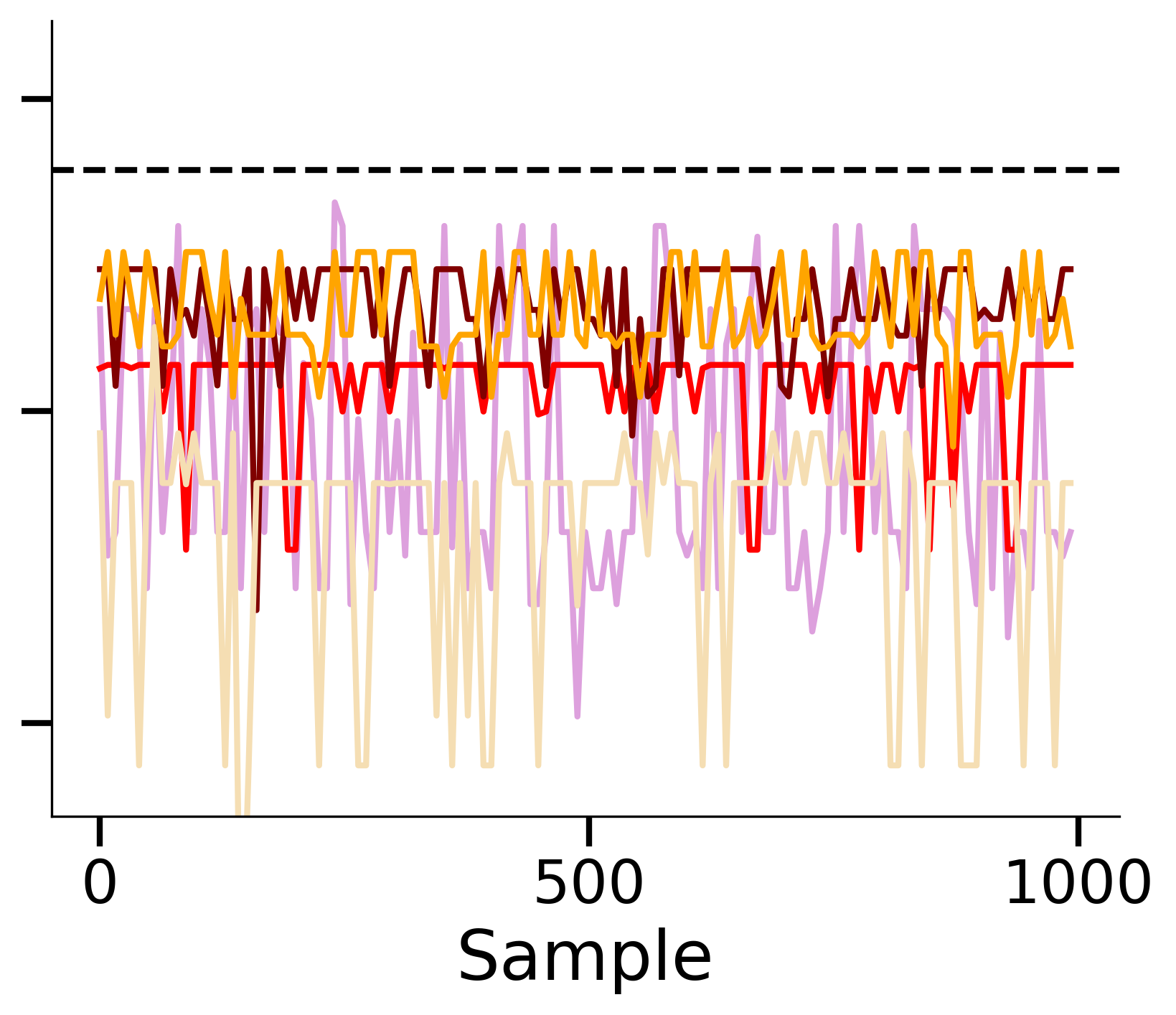} \\ 
\addlinespace[4pt]
\includegraphics[height=0.23\textwidth]{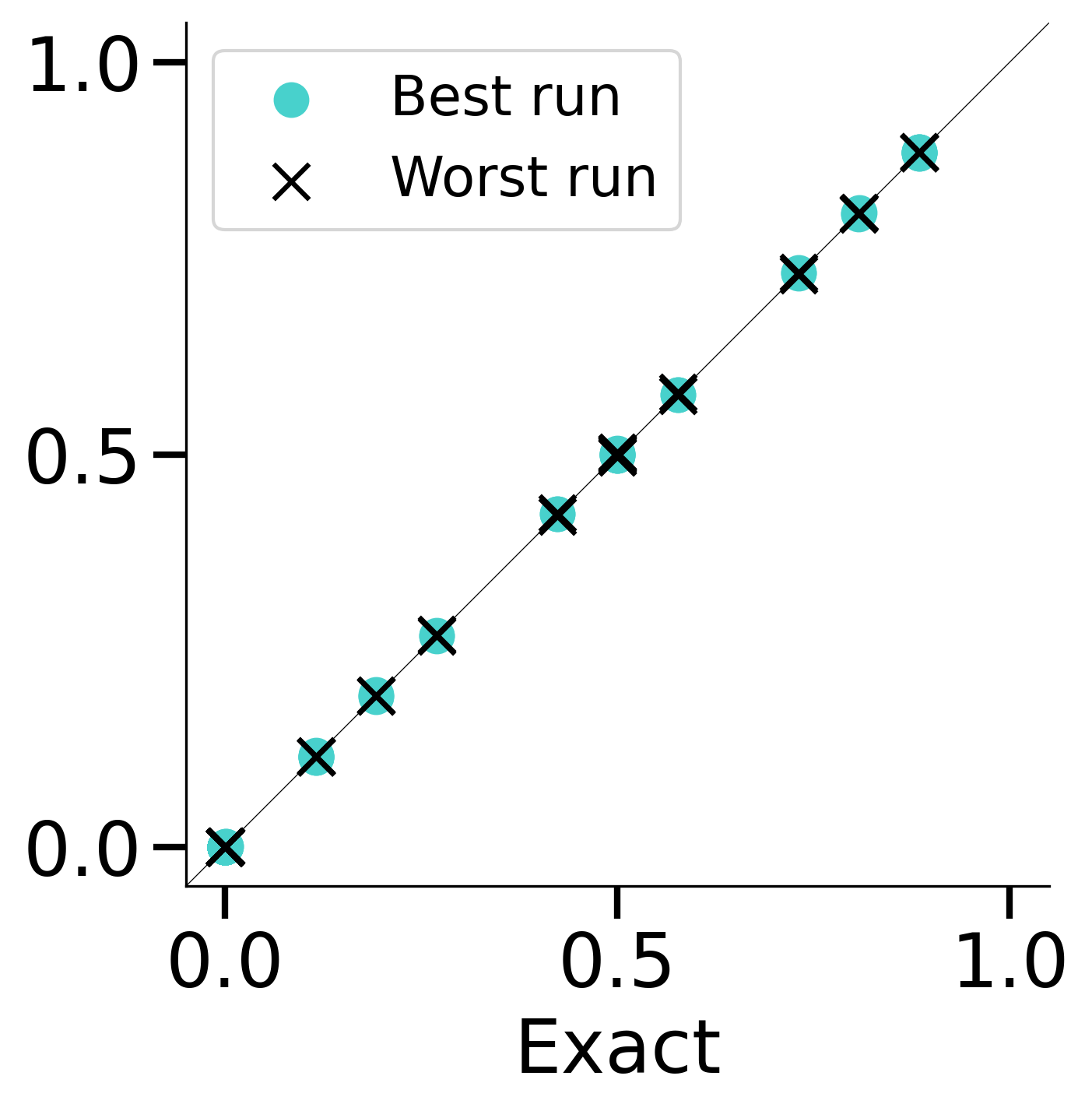} & 
\hspace{-20pt}
\includegraphics[height=0.23\textwidth]{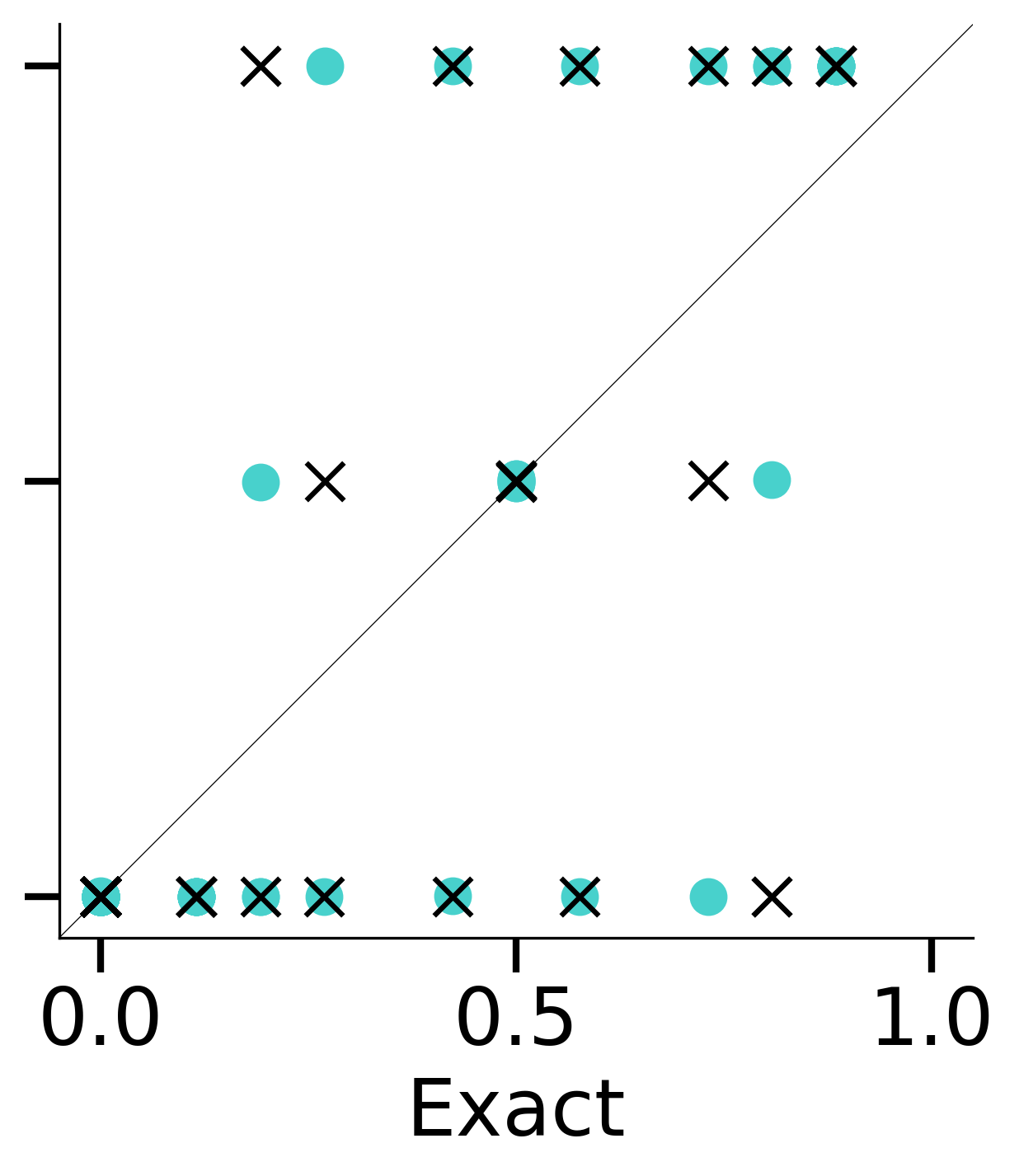} & 
\hspace{-20pt}
\includegraphics[height=0.23\textwidth]{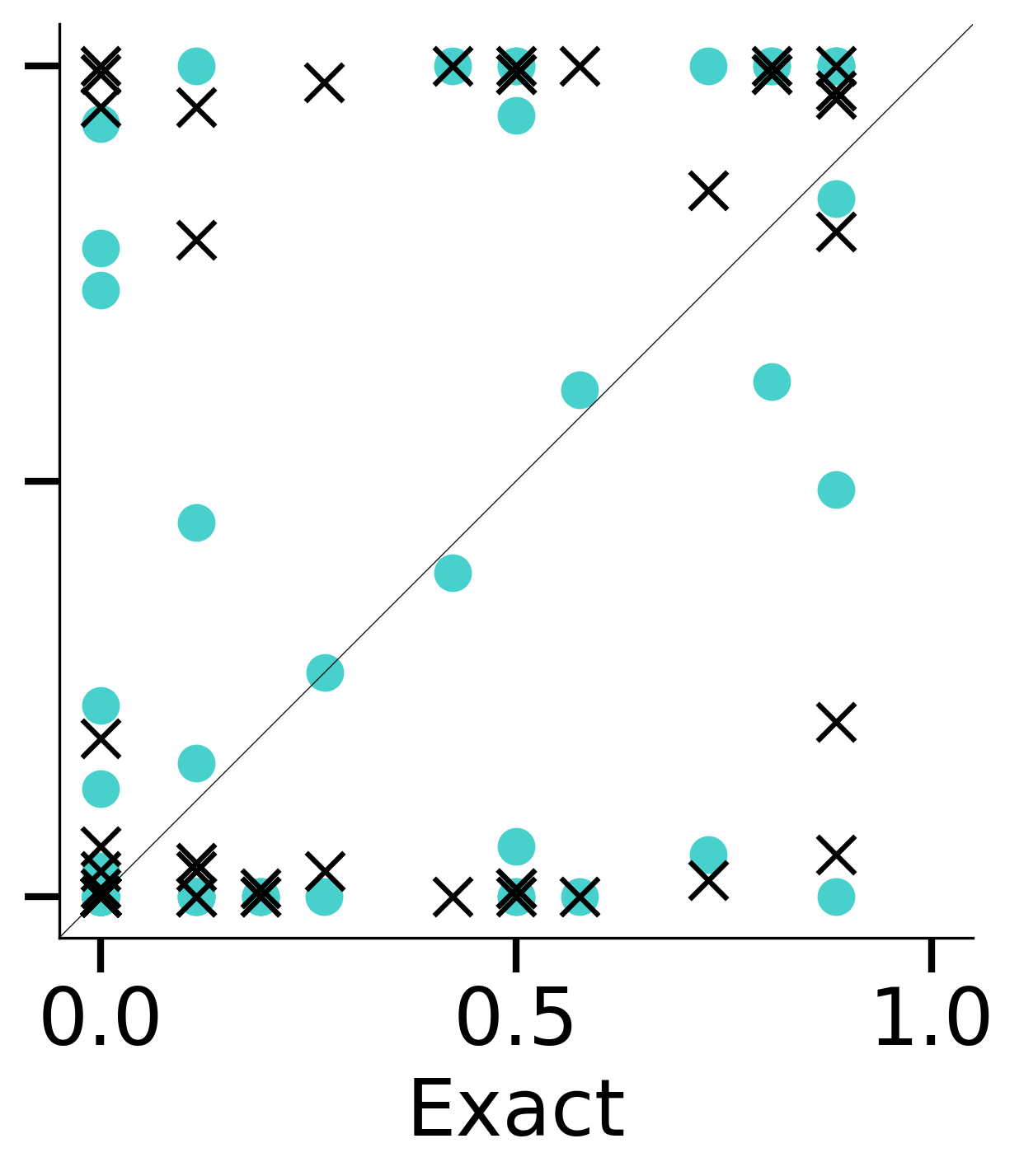} \\ 
\end{tabular}
}
\caption{\label{fig:tracesachs}
Performance of DAG samplers on an \sachs{} dataset of size 1,{}000.
\emph{Top:} The log posterior probability of the sampled DAG in five independent runs. The posterior probability of the data-generating DAG $G^*$ is marked by a horizontal black line.
\emph{Bottom:} The estimated arc posterior probabilities at the end of the runs, against the exact values. For each node pair, the best and the worst estimate over the five runs are shown.
}
\end{center}
\end{figure*}

\begin{figure*}[t!]
\begin{center}
{\small
\begin{tabular}{cc}
\multicolumn{2}{c}{\child{} ($n = 20$)} \\
\midrule
\addlinespace[4pt]
\gibby{} & \bidag{} \\
\hspace{-10pt}
\includegraphics[height=0.23\textwidth]{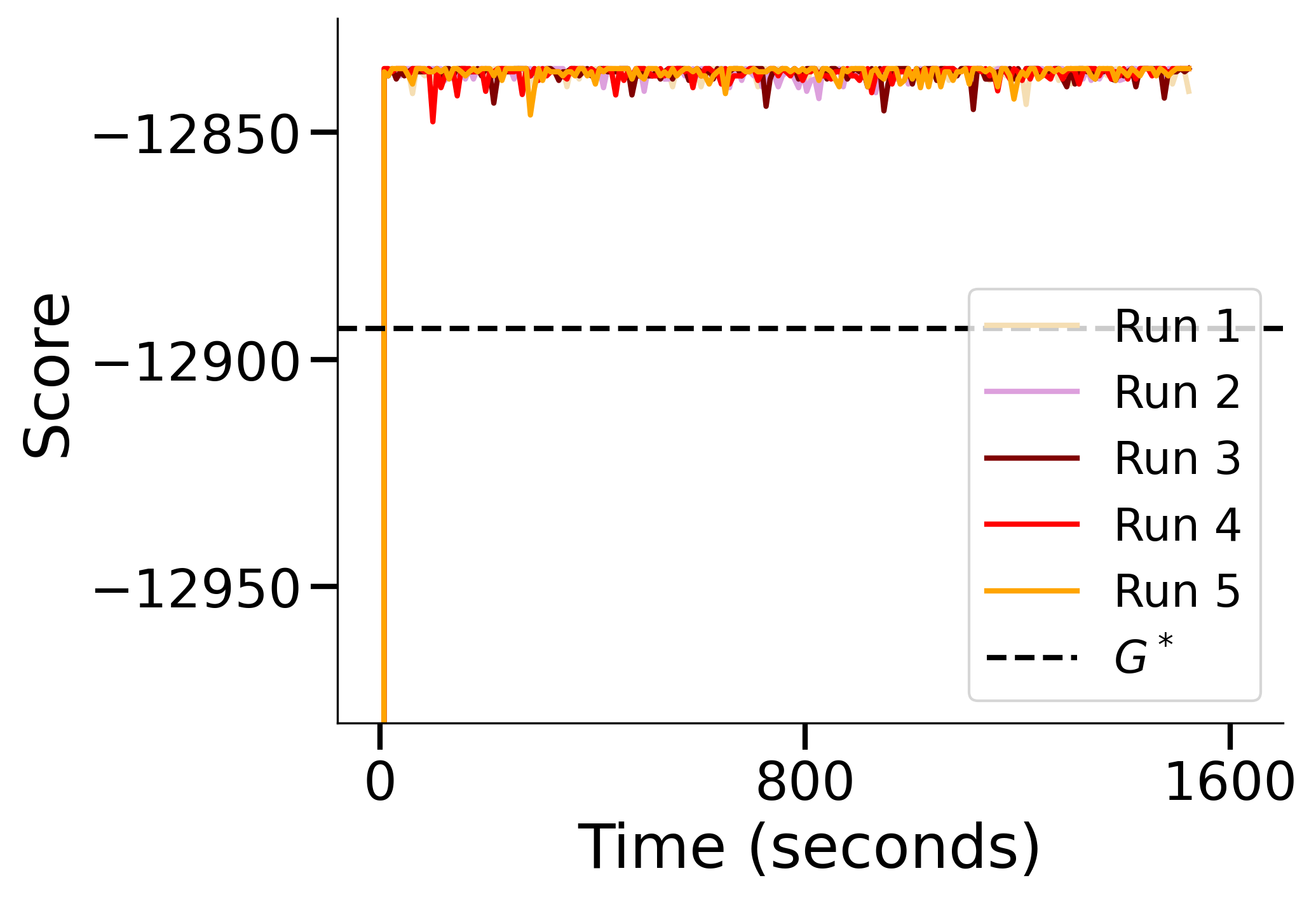} &  
\includegraphics[height=0.23\textwidth]{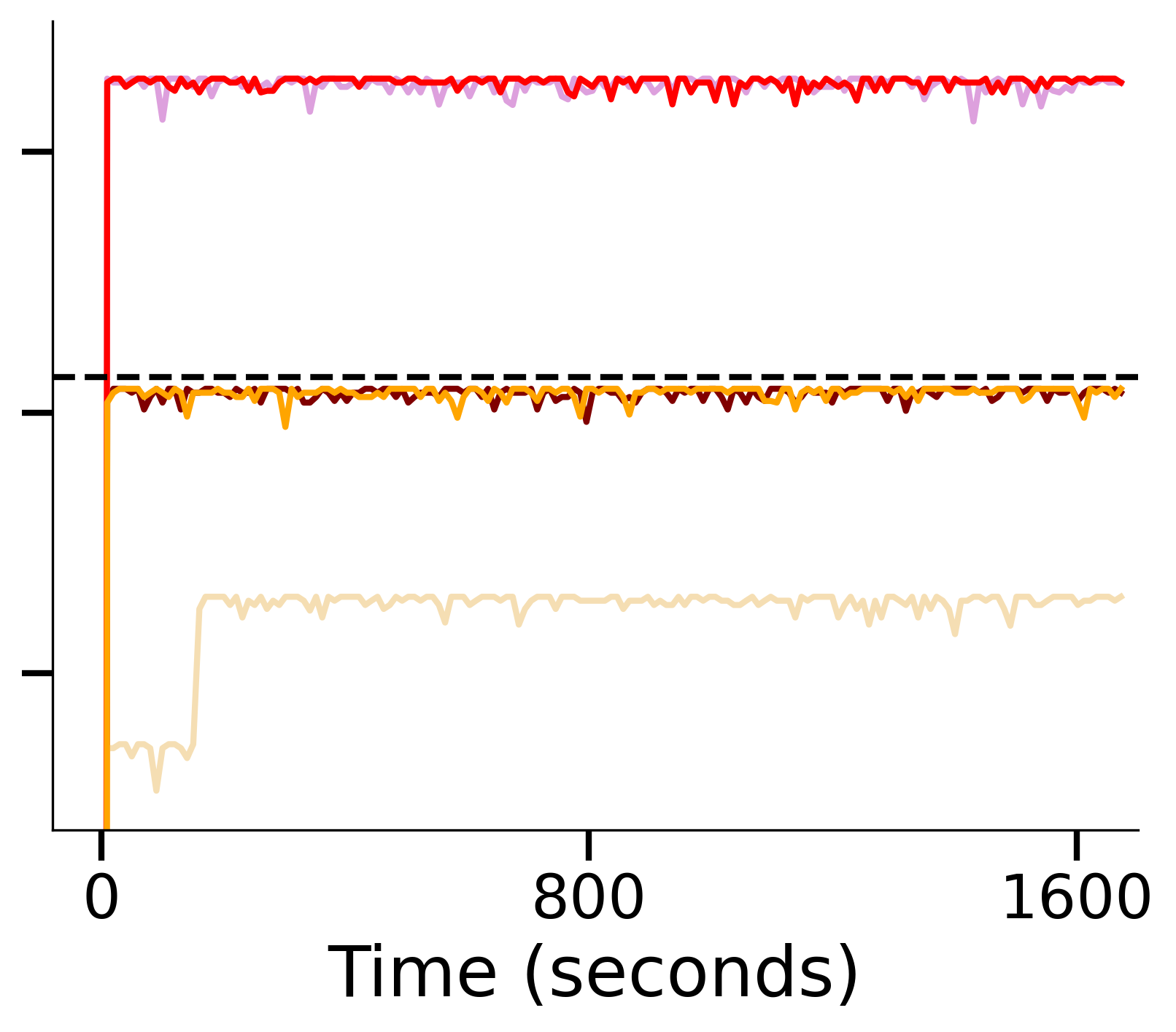} \\
\addlinespace[4pt]
\includegraphics[height=0.23\textwidth]{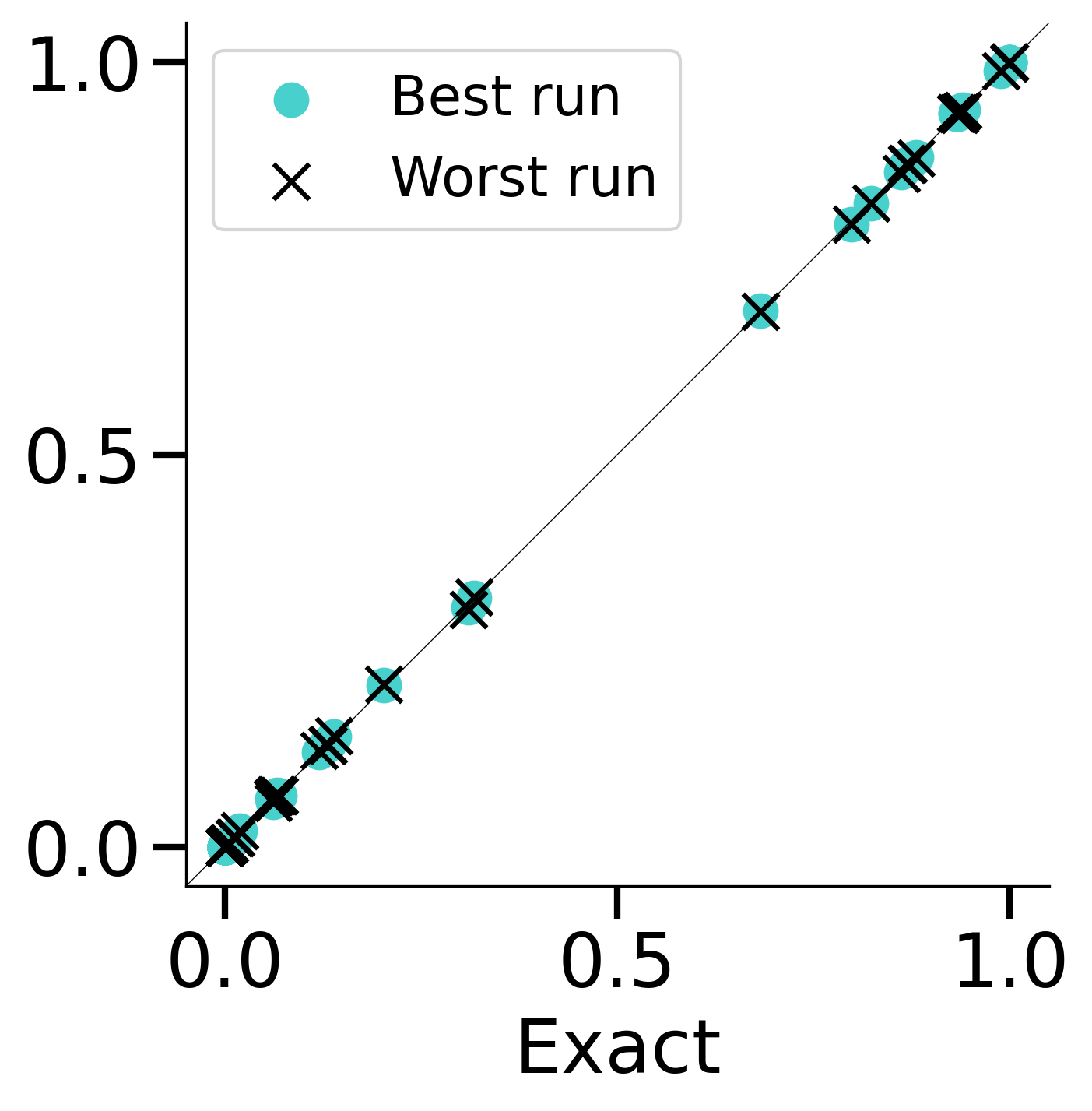} & 
\includegraphics[height=0.23\textwidth]{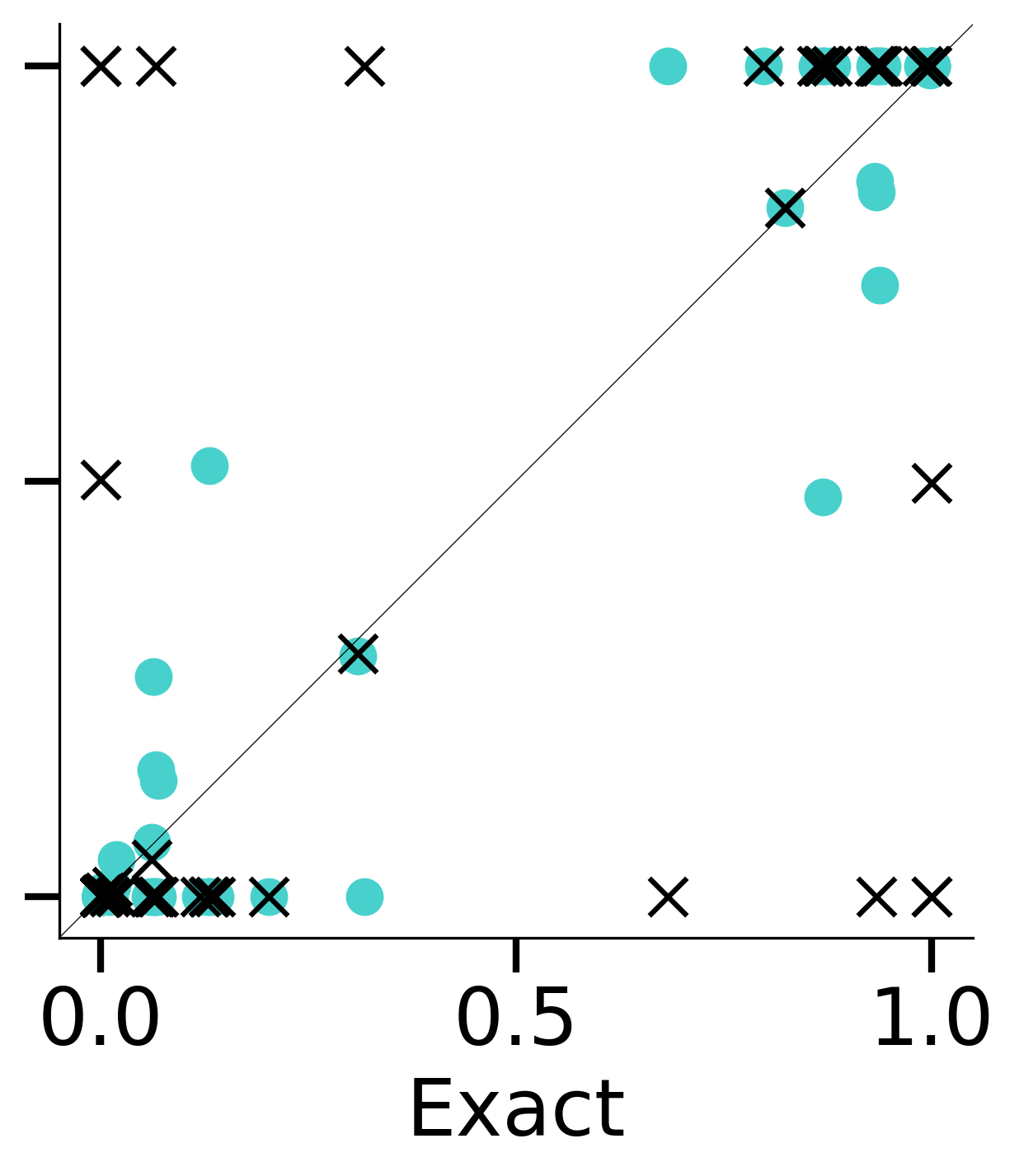} \\ 
\end{tabular}
}
\caption{\label{fig:tracechild}
Performance of DAG samplers on a \child{} dataset of size 1,{}000.
\emph{Top:} The log posterior probability of the sampled DAG in five independent runs. The posterior probability of the data-generating DAG $G^*$ is marked by a horizontal black line.
\emph{Bottom:} The estimated arc posterior probabilities at the end of the runs, against the exact values. For each node pair, the best and the worst estimate over the five runs are shown.
}
\end{center}
\end{figure*}

\begin{figure*}[t!]
\begin{center}
{\small
\begin{tabular}{cc}
\multicolumn{2}{c}{\zoo{} ($n = 17$)} \\
\midrule
\addlinespace[4pt]
\gibby{} & \bidag{} \\
\hspace{-10pt}
\includegraphics[height=0.23\textwidth]{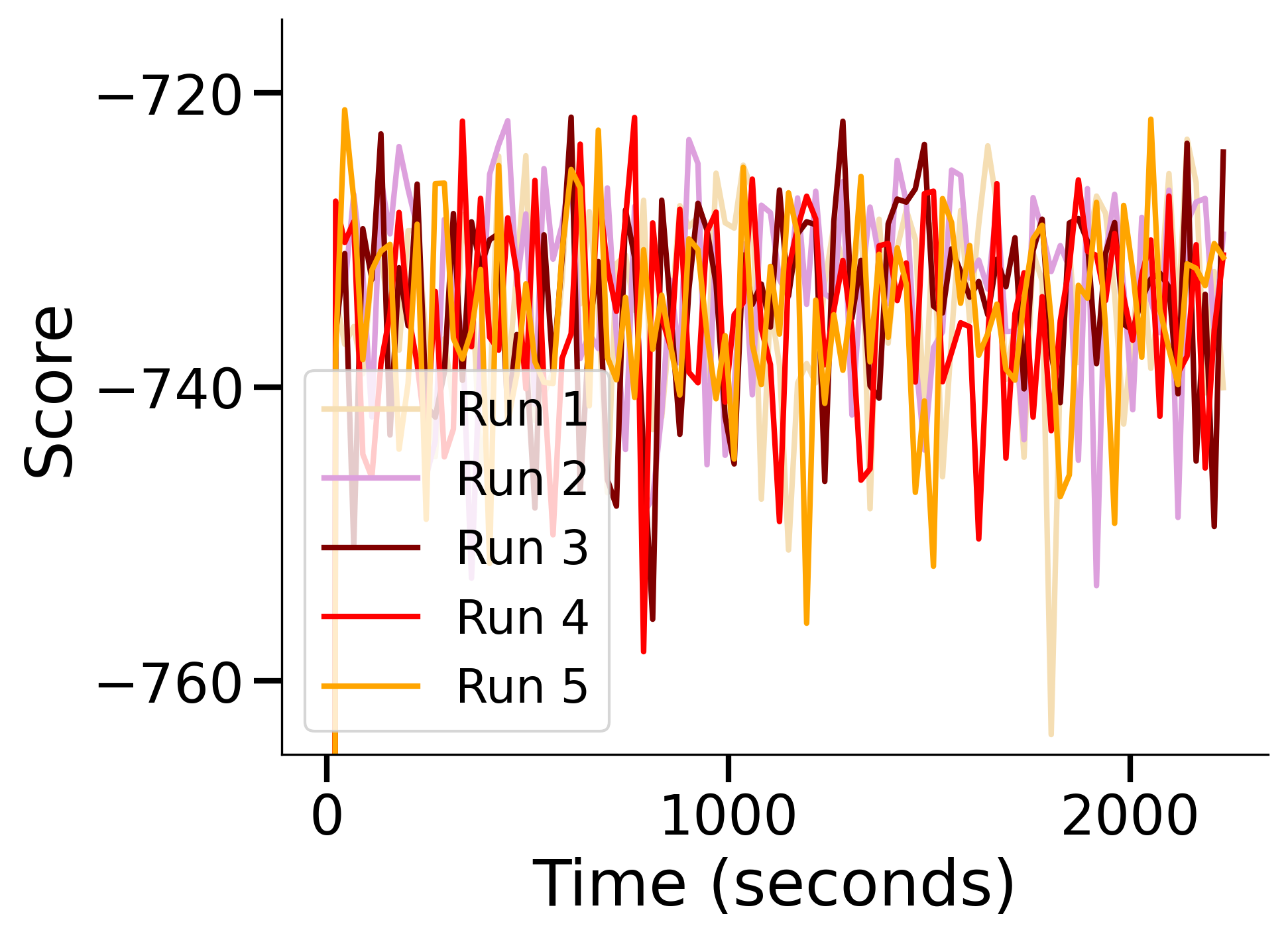} &  
\includegraphics[height=0.23\textwidth]{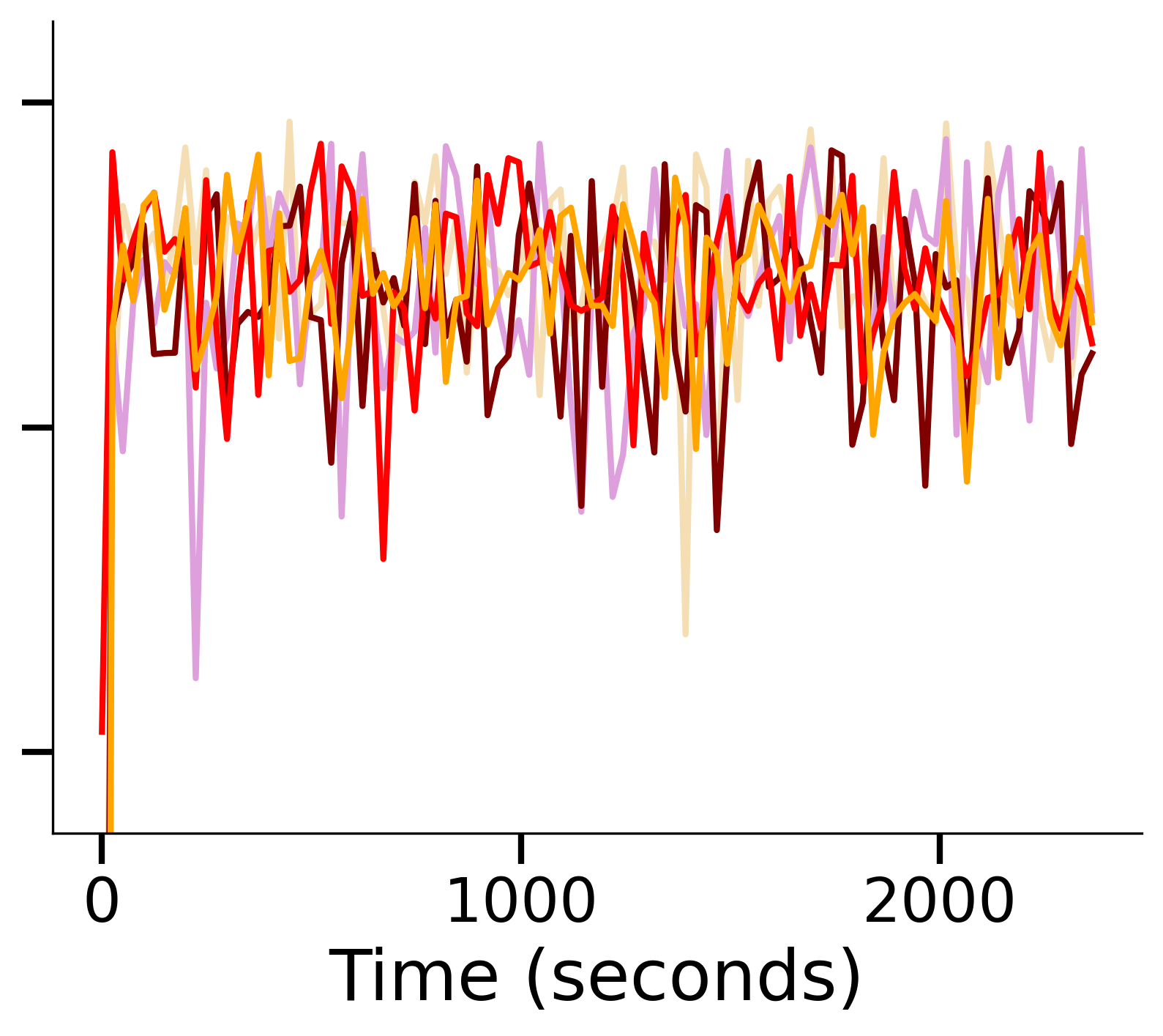} \\
\addlinespace[4pt]
\includegraphics[height=0.23\textwidth]{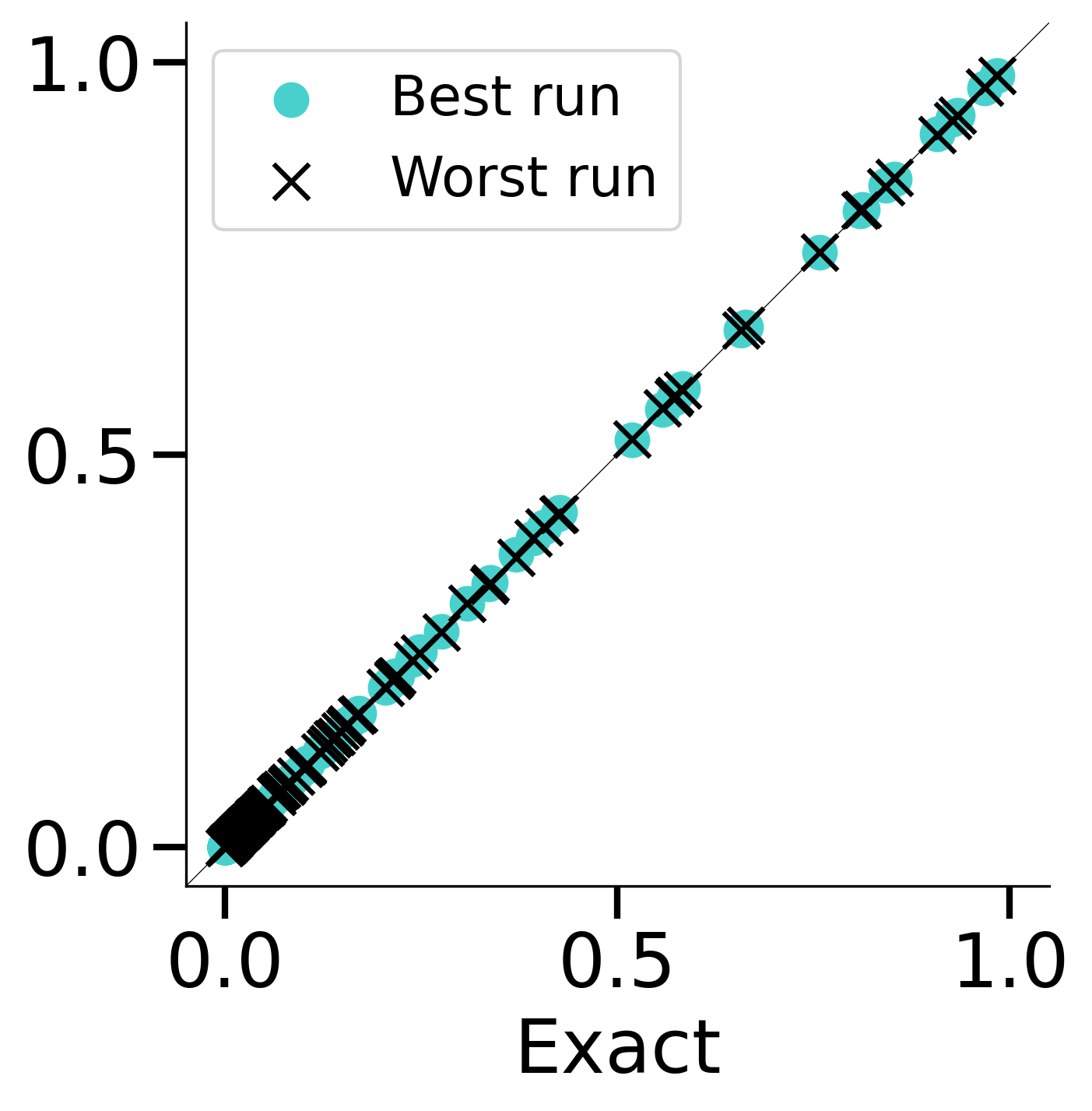} & 
\includegraphics[height=0.23\textwidth]{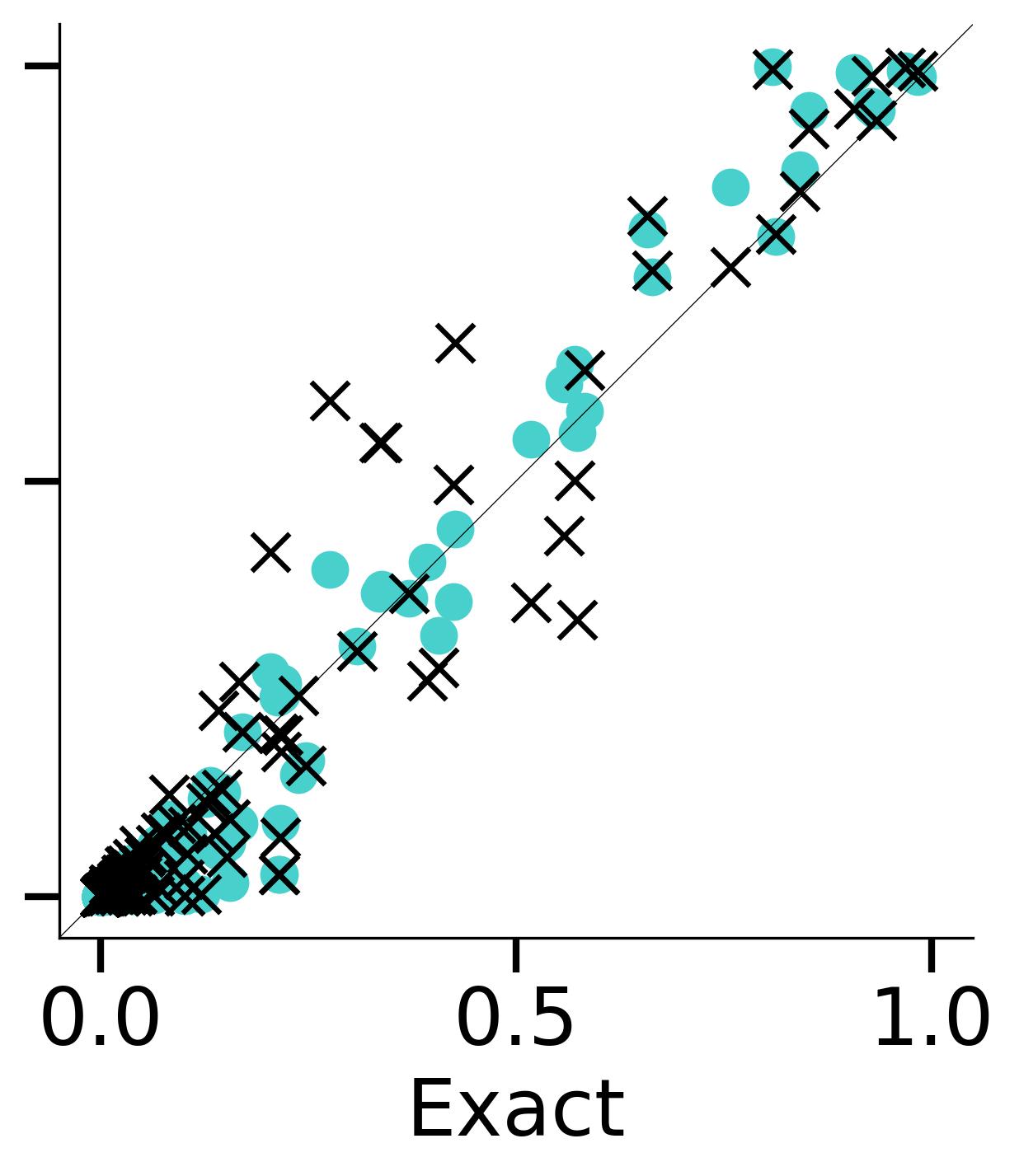} \\ 
\end{tabular}
}
\caption{\label{fig:tracezoo}
Performance of DAG samplers on the \zoo{} dataset (size 101).
\emph{Top:} The log posterior probability of the sampled DAG in five independent runs. 
\emph{Bottom:} The estimated arc posterior probabilities at the end of the runs, against the exact values. For each node pair, the best and the worst estimate over the five runs are shown.
}
\end{center}
\end{figure*}

Figures~\ref{fig:traceasia}--\ref{fig:tracezoo} compare the three DAG samplers on the datasets with 20 or fewer variables. When all the five independent runs of a sampler converge to about the same score range, then the estimates for the arc posterior probabilities also concentrate around the exact probability. The shown traceplots and correlation plots partly explain the MAD curves of \gibby{} and \bidag{} shown in Fig.~\ref{fig:small}. We managed to run \daggflow{} successfully for \asia{} and \sachs{}, for which it used about 30 minutes in the learning phase, after which it relatively quickly produced 1,{}000 independent DAG samples. We observe that the best runs of \daggflow{} give good estimates for \asia{} but not for \sachs{}.

Figure~\ref{fig:largecorr} shows the correlation of arc posterior probability estimates of different runs of the samplers on dataset generated from the larger benchmark BNs. The results suggest that \gibby{} is able to reliably produce accurate estimates. 

\begin{figure*}[t!]
\begin{center}
{\small
\begin{tabular}{cc}
\multicolumn{2}{c}{\hailfinder{} ($n = 56$)} \\
\midrule
\addlinespace[4pt]
\gibby{} & \hspace{32pt} \bidag{} \\
\includegraphics[height=0.23\textwidth]{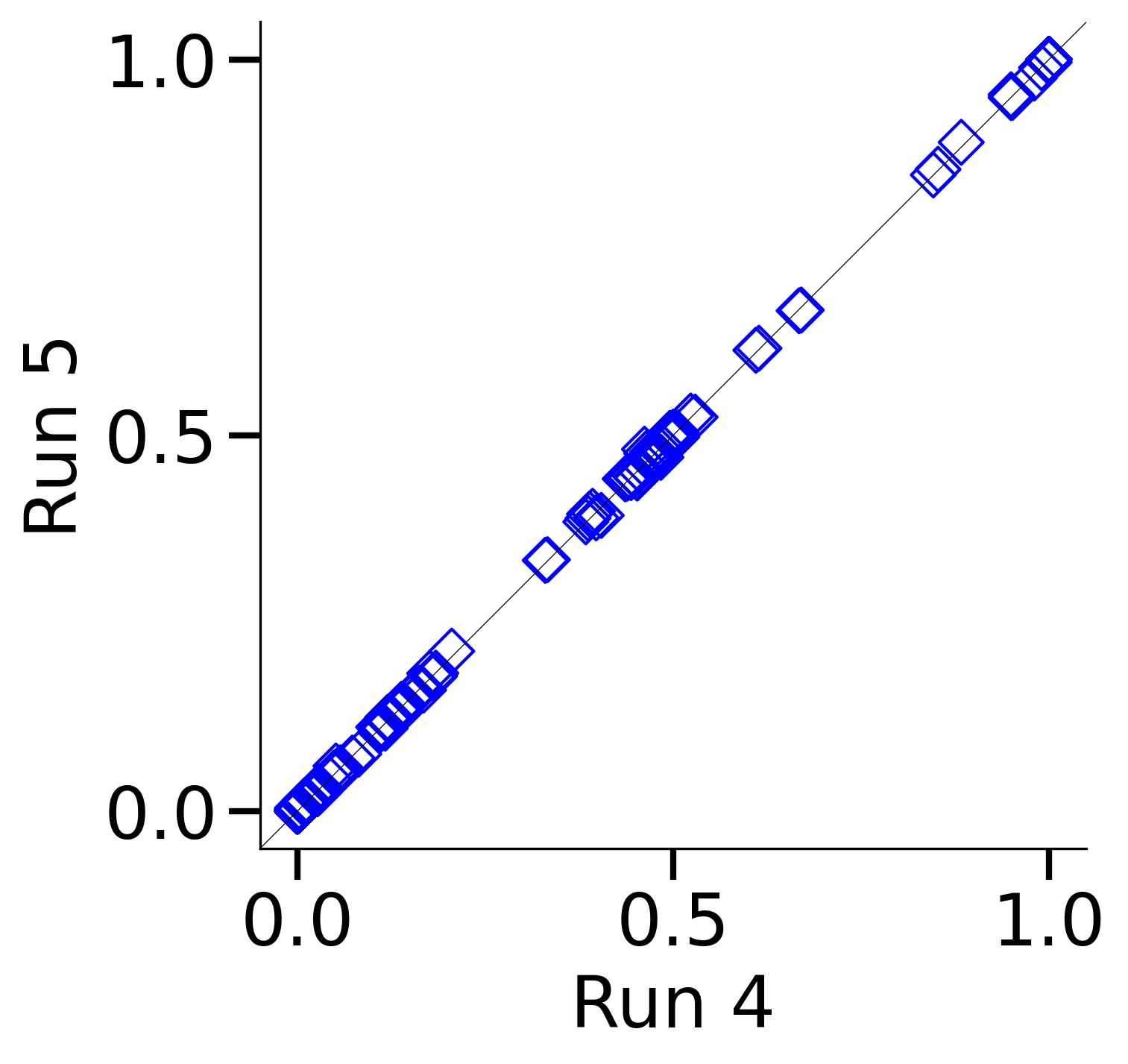} & 
\hspace{16pt}
\includegraphics[height=0.23\textwidth]{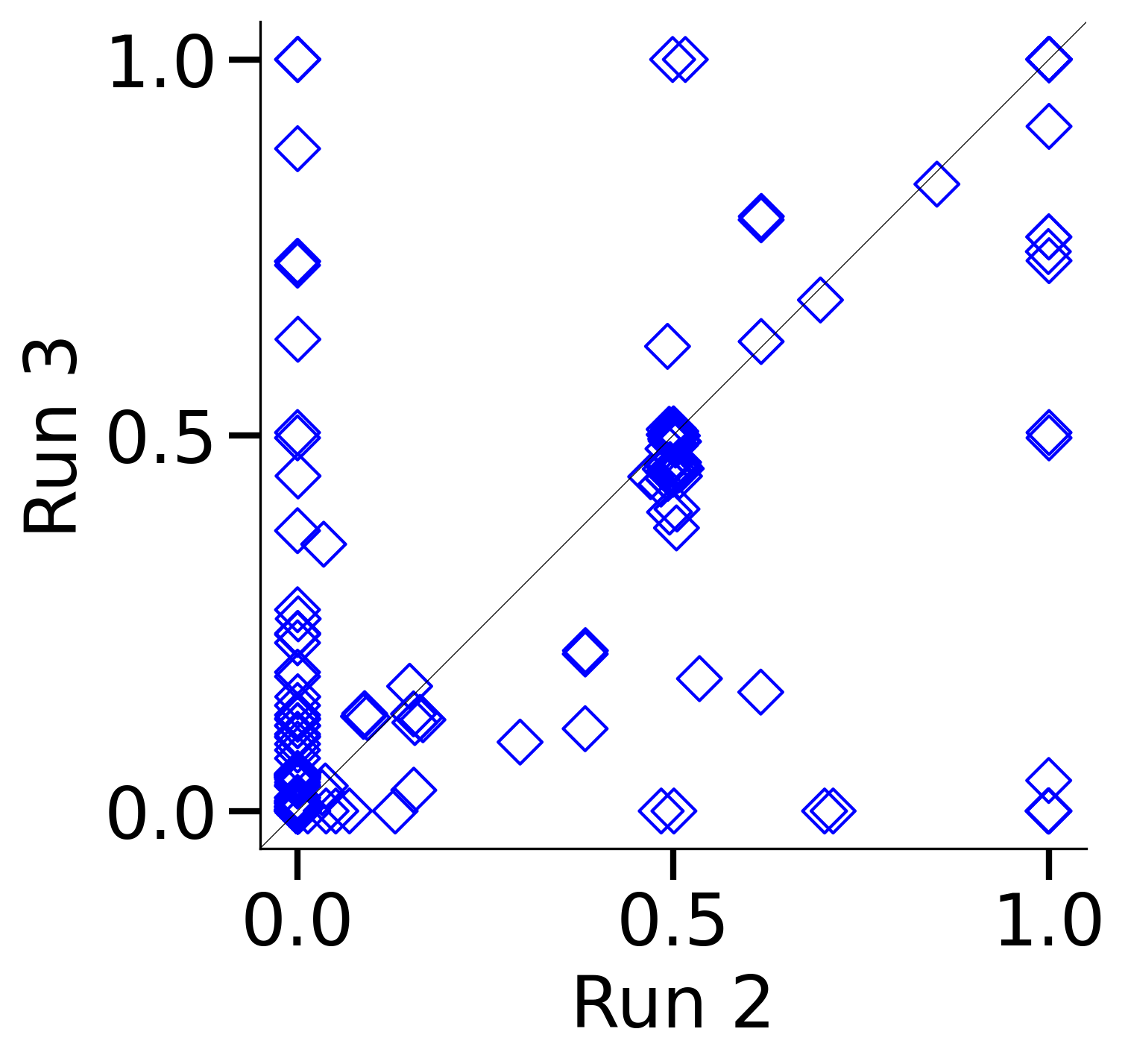} \\ 
\end{tabular}

\begin{tabular}{cc}
\multicolumn{2}{c}{\andes{} ($n = 233$)} \\
\midrule
\addlinespace[4pt]
\gibby{} & \hspace{32pt} \bidag{} \\
\includegraphics[height=0.23\textwidth]{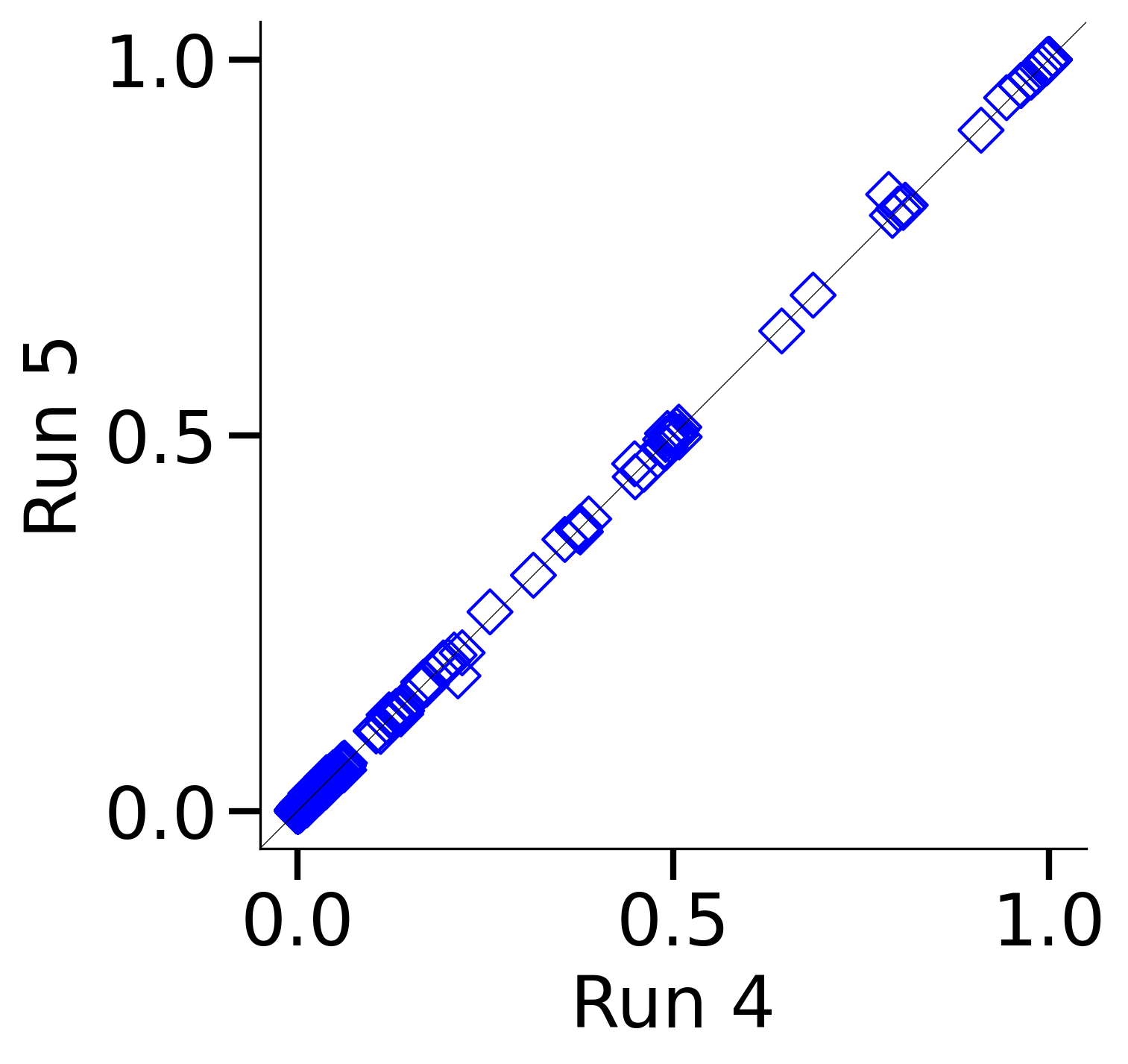} & 
\hspace{18pt}
\includegraphics[height=0.23\textwidth]{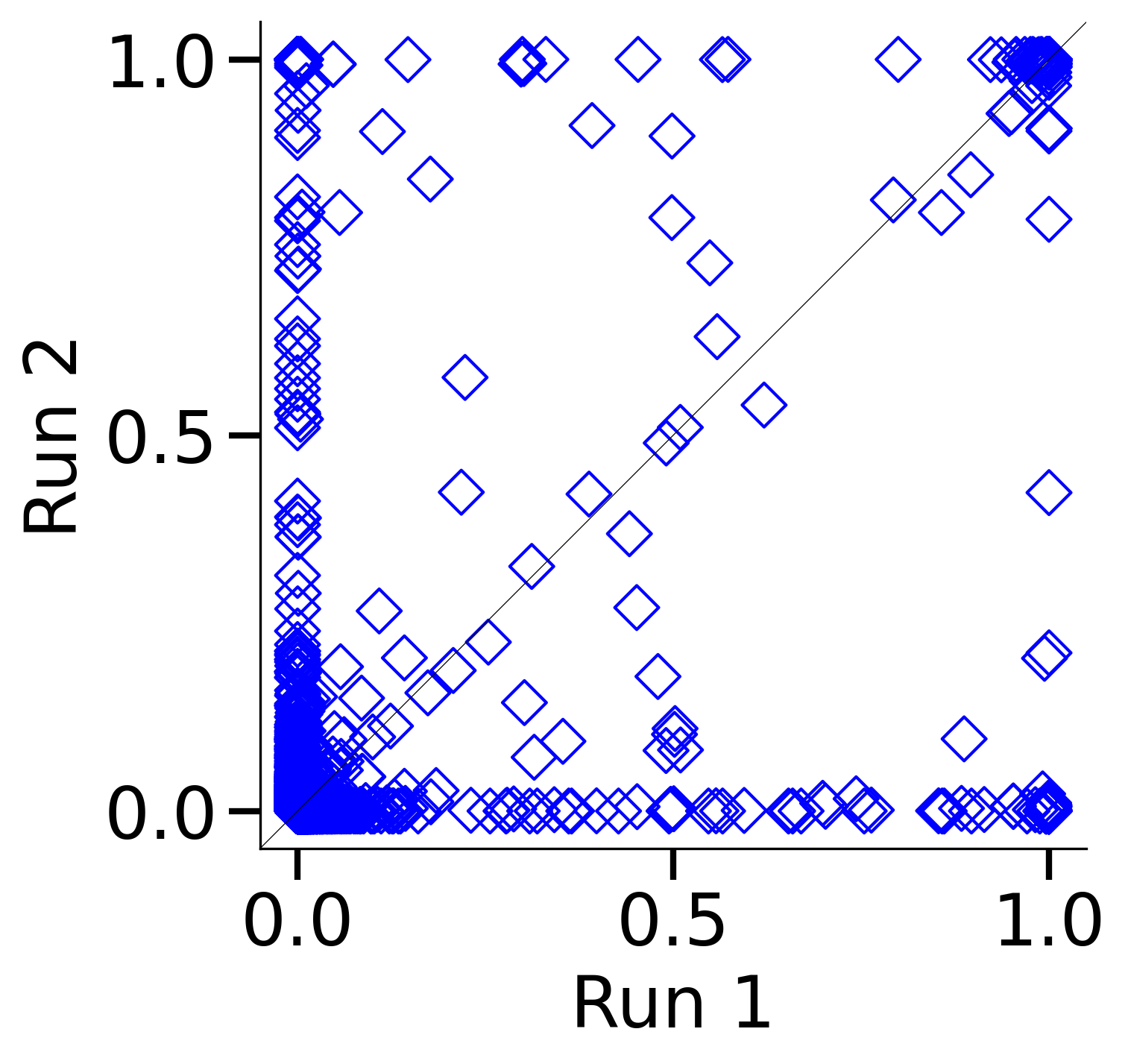} \\ 
\end{tabular}

}
\caption{\label{fig:largecorr}
Performance of DAG samplers on \hailfinder{} and \andes{} datasets of size 1,{}000.
For each sampler and dataset, shown are the estimated arc posterior probabilities for the ``worst'' pair out of five independent runs. 
The shown pair was selected to maximize the absolute difference of the estimates over all possible arcs. 
}
\end{center}
\end{figure*}

\end{document}